\documentclass[final,12pt,hidelinks]{clear2022} 


\usepackage{bm, bbm}
\usepackage[english]{babel}
\usepackage{bibunits}
\usepackage{booktabs}
\usepackage{chngcntr}
\usepackage{enumitem}
\usepackage{microtype}
\usepackage{stackengine}
\usepackage{verbatim} 
\usepackage{array, multirow}
\newcolumntype{L}[1]{>{\raggedright\let\newline\\\arraybackslash\hspace{0pt}}m{#1}}
\newcolumntype{C}[1]{>{\centering\let\newline\\\arraybackslash\hspace{0pt}}m{#1}}
\newcolumntype{R}[1]{>{\raggedleft\let\newline\\\arraybackslash\hspace{0pt}}m{#1}}
\newtheorem*{thm}{Theorem}

\makeatletter
\newcommand*{\indep}{
    \mathbin{
        \mathpalette{\@indep}{}
    }
}
\newcommand*{\nindep}{
    \mathbin{
        \mathpalette{\@indep}{\not}
    }
}
\newcommand*{\@indep}[2]{%
    \sbox0{$#1\perp\m@th$}
    \sbox2{$#1=$}
    \sbox4{$#1\vcenter{}$}
    \rlap{\copy0}
    \dimen@=\dimexpr\ht2-\ht4-.2pt\relax
    \kern\dimen@
    {#2}%
    \kern\dimen@
    \copy0 
}
\makeatother


\newcommand{\Ebb}{\mathbb{E}}
\newcommand{\Rbb}{\mathbb{R}}



\newcommand{\Acal}{\mathcal{A}}
\newcommand{\Epsilon}{\mathcal{E}}
\newcommand{\Fcal}{\mathcal{F}}
\newcommand{\Gcal}{\mathcal{G}}
\newcommand{\Hcal}{\mathcal{H}}
\newcommand{\Kcal}{\mathcal{K}}
\newcommand{\Lcal}{\mathcal{L}}
\newcommand{\Ncal}{\mathcal{N}}
\newcommand{\Scal}{\mathcal{S}}
\newcommand{\Xcal}{\mathcal{X}}
\newcommand{\Ycal}{\mathcal{Y}}

\newcommand{\Yin}{\widetilde{Y}_{\text{in}}}
\newcommand{\Yopt}{\widehat{Y}_{\text{opt}}}
\newcommand{\Ypost}{\widetilde{Y}_{\text{post}}}
\newcommand{\Yhat}{\widehat{Y}}
\newcommand{\yhat}{\hat{y}}
\newcommand{\Ytilde}{\widetilde{Y}}
\newcommand{\ytilde}{\tilde{y}}
\newcommand{\Phat}{\widehat{P}}
\newcommand{\Ptilde}{\widetilde{P}}


\title[Attainability and Optimality: The Equalized Odds Fairness Revisited]{Attainability and Optimality: The Equalized Odds Fairness Revisited}
\usepackage{times}

\clearauthor{%
 \Name{Zeyu Tang} \Email{zeyutang@cmu.edu}\\
 \addr Carnegie Mellon University
 \AND
 \Name{Kun Zhang} \Email{kunz1@cmu.edu}\\
 \addr Carnegie Mellon University, and \\
 Mohamed bin Zayed University of Artificial Intelligence%
}

\begin{document}

\maketitle

\begin{abstract}%
    Fairness of machine learning algorithms has been of increasing interest. In order to suppress or eliminate discrimination in prediction, various notions as well as approaches have been proposed to impose fairness. Given a notion of fairness, an essential problem is then whether or not it can always be attained, even if with an unlimited amount of data. This issue is, however, not well addressed yet. In this paper, focusing on the Equalized Odds notion of fairness, we consider the attainability of this criterion and, furthermore, if it is attainable, the optimality of the prediction performance under various settings. In particular, for prediction performed by a deterministic function of input features, we give conditions under which Equalized Odds can hold true; if the stochastic prediction is acceptable, we show that under mild assumptions, fair predictors can always be derived. For classification, we further prove that compared to enforcing fairness by post-processing, one can always benefit from exploiting all available features during training and get potentially better prediction performance while remaining fair. Moreover, while stochastic prediction can attain Equalized Odds with theoretical guarantees, we also discuss its limitation and potential negative social impacts.
\end{abstract}

\begin{keywords}%
    Algorithmic Fairness, Equalized Odds, Conditional Independence, Attainability, Optimality
\end{keywords}


\section{Introduction}
As machine learning models become widespread in automated decision making systems, apart from the efficiency and accuracy of the prediction, their potential social consequence also gains increasing attention.
The call for accountability and fairness in machine learning has motivated various (statistical) notions of fairness.
\textit{Demographic Parity} \citep{calders2009building} requires the independence between prediction (e.g., of a classifier) and the protected feature (sensitive attributes of an individual, e.g., gender, race).
\textit{Equalized Odds} \citep{hardt2016equality}, also known as \textit{Error-rate Balance} \citep{chouldechova2017fair}, requires the output of a model be conditionally independent of protected feature(s) given the ground truth of the target.
\textit{Predictive Rate Parity} \citep{zafar2017fairness}, on the other hand, requires the actually proportion of positives (negatives) in the original data for positive (negative) predictions should match across groups (well-calibrated).

On the theoretical side, results have been reported regarding relationships among fairness notions.
It has been shown that if base rates of positives differ among groups, then Equalized Odds and Predictive Rate Parity cannot be achieved simultaneously for non-perfect predictors \citep{kleinberg2016inherent,chouldechova2017fair}.
Any two out of three among Demographic Parity, Equalized Odds, and Predictive Rate Parity are incompatible with each other \citep{barocas2017fairness}.

In practice, one can broadly categorize computational procedures to derive a fair predictor into three types: pre-processing approaches \citep{calders2009building,dwork2012fairness,zemel2013learning,zhang2018mitigating,madras2018learning,creager2019flexibly,zhao2020conditional},  in-processing approaches \citep{kamishima2011fairness,perez2017fair,zafar2017fairness,zafar2017constraint,donini2018empirical,song2019learning,mary2019fairness,baharlouei2020renyi,romano2020achieving}, and post-processing approaches \citep{hardt2016equality,fish2016confidence,dwork2018decoupled}.
In accord with the fairness notion of interest, a pre-processing approach first maps the training data to a transformed space to remove discriminatory information between protected feature and target, and then pass on the data to make prediction.
In direct contrast, a post-processing approach treats the off-the-shelf predictors (usually limited to classification tasks) as uninterpretable black-box(es), and imposes fairness by outputting a function of the original prediction.
For in-processing approaches, various kinds of regularization terms are proposed so that one can optimize the utility function while suppressing the discrimination at the same time.
Approaches based on estimating/bounding the causal effect of the protected feature on the final target have also been proposed \citep{kusner2017counterfactual,russell2017worlds,zhang2017causal,nabi2018fair,zhang2018fairness,chiappa2019path,wu2019pc}.

The attainability of Equalized Odds, namely, the existence of the predictor that can score zero violation of fairness in the large sample limit, is an asymptotic property of the fairness criterion that has important practical implications.
It characterizes a completely different kind of violation of fairness compared to the empirical error bound of discrimination in finite-sample cases.
If we deploy a ``fair'' prediction system in practice whose output is actually biased (because fairness is in fact not attainable with the chosen predictor), the discrimination would become a snake in the grass, which is often easily neglected and hard to eliminate.
Although various approaches have been proposed to impose Equalized Odds, whether or not it is always attainable is not well addressed.
\citet{hardt2016equality} already noticed that one ``might need to introduce additional randomness [to achieve Equalized Odds in binary classification]''.
Compared to the claim that ``[deterministic classifier] might not be sufficient [to achieve fairness]'' in the FICO score case study by \citet{hardt2016equality}, we present an affirmative answer with respect the necessary and sufficient conditions under which Equalized Odds can be attained (for both regression and classification tasks).
Actually, as we illustrate in this paper, Equalized Odds is not always attainable if we use deterministic prediction functions.
Our contributions are mainly threefold:
\setlist{nolistsep}
\begin{itemize}
    \item For regression and classification tasks with deterministic prediction functions, we show that Equalized Odds is not always attainable unless certain (rather restrictive) conditions on the joint distribution of the features and the target variable are met.
    \item For regression and (binary) classification, we show that with a stochastic predictor, under mild assumptions one can always derive a non-trivial Equalized Odds predictor.
    \item Considering the optimality of performance under fairness constraint(s), when exploiting all available features, we show that the predictor derived via an in-processing approach would generally perform better, and at least no worse than the one derived via a post-processing approach (unconstrained optimization followed by a post-processing step).
\end{itemize}

\section{Preliminaries}
In this section, we first distinguish between objects of interest with respect to which fairness is discussed, and then present the definition of Equalized Odds.

\subsection{Different Fairness Semantics}
Before presenting the fairness formulation, it is helpful to see the distinction between different semantics of fairness when discussing fair predictors.
When evaluating the performance of the proposed fair predictor, it is a common practice to compare the loss (with respect to the utility function of choice, e.g., accuracy for binary classification) computed on target variable and the predicted value.
There is an implicit assumption behind this practice: the generating process of the data, which is just describing a real-world procedure, is \textit{not} biased in any sense \citep{danks2017algorithmic}.
Only when we treat the target variable (recorded in the dataset) as unbiased can we justify the practice of loss evaluation and the conditioning on target variable when imposing fairness (as we shall see in the definition of Equalized Odds in Equation \ref{equ:equalized_odds_raw}).

One may consider a music school admission example.
The music school committee would decide if they admit a student to the violin performance program based on the applicant's personal information, educational background, instrumental performance, and so on.
When evaluating whether or not the admission is ``fair'', the semantics of fairness come in at least two folds.
First, based on the information at hand, did the committee evaluate the qualification of applicants without bias (How committee evaluate the applicants)?
Second, is committee's procedure of evaluating applicants' qualification reasonable (How other people view the evaluation procedure used by the committee)?

In this paper, we assume the data recorded is unbiased and focus on the prediction (made with respect to current reality), such that the prediction itself does not include any biased utilization of information.
The fairness with respect to the data generating procedure as well as the potential future influence of the prediction are beyond the scope of this paper.

\subsection{Equalized Odds Fairness}\label{sec:eodds}
\citet{hardt2016equality} proposed \textit{Equalized Odds} which requires the conditional independence between prediction and protected feature given ground truth of the target.
Let us denote the protected feature by $A$, with domain of value $\Acal$, additional (observable) features by $X$, with domain of value $\Xcal$, target variable by $Y$, with domain $\Ycal$, (not necessarily fair) predictors by $\Yhat$, and fair predictors by $\Ytilde$.
Equalized-Odds fairness requires
\begin{equation}\label{equ:equalized_odds_raw}
    \Ytilde \indep A \mid Y.
\end{equation}
For classification tasks, one can conveniently use the probability distribution form, i.e., $\forall a \in \Acal, y, \ytilde \in \Ycal$:
\begin{align}
    & P(\Ytilde = \ytilde \mid A = a, Y = y) = P(\Ytilde = \ytilde \mid Y = y), \\
    & \text{or more concisely, }
    P_{\Ytilde|AY}(\ytilde \mid a, y) = P_{\Ytilde|Y}(\ytilde \mid y). \label{equ:neat_notation}
\end{align}
For better readability, we also use the formulation in Equation \ref{equ:neat_notation} in cases without ambiguity.
In the context of binary classification ($\Ycal = \{0, 1\}$), Equalized Odds requires that the True Positive Rate (TPR) and False Positive Rate (FPR) of each certain group match TPR and FPR of the population.
Throughout the paper, without loss of generality we assume that there is only one protected feature for the purpose of simplifying notation.
However, considering the fact that the protected feature can be discrete (e.g., race, sex) or continuous (e.g., the ratio of ethnic group in the population for certain district of a city), we do not assume discreteness of the protected feature.
Due to the space limit, we will focus on the illustration and implication of our results and defer all the proofs to Section \ref{sec:proofs_appendix} of the appendix.

\section{Fairness in Regression}
Various regularization terms have also been proposed to suppress discrimination when predicting a continuous target \citep{berk2017convex,zhang2018mitigating,mary2019fairness,romano2020achieving}.
However, whether or not one can always achieve 0-discrimination for regression, even if with an unlimited amount of data, is not clear.
In this section we first consider a simple setup with linearly correlated continuous data as an example to show that Equalized Odds is not always attainable.
Then we consider more general cases where regression is performed by a deterministic predictor and derive the condition under which Equalized Odds can hold true.
Finally, when stochastic prediction is utilized, we show that under mild assumptions one can always find a non-trivial fair predictor, i.e., Equalized Odds is guaranteed to be (non-trivially) attainable.

\subsection{Unattainability of Equalized Odds in Linear Non-Gaussian  Regression}
Let us start with the specific linear, non-Gaussian situation where the data is generated as follows ($H$ is not measured in the dataset):
\begin{equation}\label{equ:continuous_exp}
    X = q A + E_X, ~~ H = b A + E_H, ~~ Y = c X + d H + E_Y,
\end{equation}
where $(A, E_X, E_H, E_Y)$ are mutually independent and $q, b, c, d$ are constants.

Let us denote $E := E_Y + d E_H$ and let $\Yhat$ be a linear combination of $A$ and $X$, i.e., $\Yhat = \alpha A + \beta X = (\alpha + q\beta)A + \beta E_X$, with linear coefficients $\alpha$ and $\beta$, where $\beta \neq 0$.
In Theorem \ref{thm:unattainability}, we present the general result in linear non-Gaussian cases, where one cannot achieve the conditional independence between $\Yhat$ and $A$ given $Y$.
\begin{theorem}\label{thm:unattainability}
    (\textbf{Unattainability of Equalized Odds in the Linear Non-Gaussian Case})
    \newline
    Assume that $X$ has a causal influence on $Y$, i.e., $c\neq 0$ in Equation \ref{equ:continuous_exp}, and that $A$ and $Y$ are not independent, i.e., $qc + bd \neq 0$.
    Assume  $p_{E_X}$ and $p_{E}$ are positive on $\Rbb$.
    Let $f_1 := \log p_A$, $f_2 := \log p_{E_X}$, and $f_3 := \log p_{E}$.
    Further assume that $f_2$ and $f_3$ are third-order differentiable.
    Then if at most one of $E_X$ and $E$ is Gaussian, $\Yhat$ is always conditionally dependent on $A$ given $Y$.
\end{theorem}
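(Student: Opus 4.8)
The plan is to translate the Equalized-Odds constraint $\Yhat \indep A \mid Y$ into a second-order differential identity linking $f_2 = \log p_{E_X}$ and $f_3 = \log p_{E}$, and then to show that this identity can hold only when \emph{both} $E_X$ and $E$ are Gaussian; the theorem then follows by contraposition.

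First I would rewrite everything in terms of the three mutually independent sources. Setting $\delta := \alpha + q\beta$, $\gamma := qc + bd$, and $\lambda := \gamma - c\delta/\beta = bd - c\alpha/\beta$, one has $\Yhat = \delta A + \beta E_X$ and $Y = \gamma A + c E_X + E$. Since $\beta \neq 0$, the linear map $(A, E_X, E) \mapsto (A, \Yhat, Y)$ is a bijection of $\Rbb^3$, so
\begin{equation*}
    p_{A,\Yhat,Y}(a,\yhat,y) = \frac{1}{|\beta|}\, p_A(a)\, p_{E_X}\!\left(\frac{\yhat - \delta a}{\beta}\right) p_{E}\!\left(y - \frac{c}{\beta}\yhat - \lambda a\right).
\end{equation*}
Under the positivity hypothesis on $p_{E_X}, p_E$ I can take logarithms on the support; writing $L := \log p_{A,\Yhat,Y}$, the conditional-independence relation $p_{A,\Yhat,Y}\, p_Y = p_{A,Y}\, p_{\Yhat,Y}$ becomes $L(a,\yhat,y) + \log p_Y(y) = \log p_{A,Y}(a,y) + \log p_{\Yhat,Y}(\yhat,y)$. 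The right-hand side is a function of $(a,y)$ plus a function of $(\yhat,y)$, so applying $\partial_a\partial_{\yhat}$ annihilates it together with the $\log p_Y$ term; crucially, that operator applied to $L$ only ever differentiates $f_2$ and $f_3$, never $f_1 = \log p_A$ — which is exactly why only $f_2$ and $f_3$ are assumed smooth. A direct computation then gives
\begin{equation*}
    \frac{\delta}{\beta}\, f_2''\!\left(\frac{\yhat - \delta a}{\beta}\right) = c\lambda\, f_3''\!\left(y - \frac{c}{\beta}\yhat - \lambda a\right).
\end{equation*}

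Next I would observe that the two arguments can be varied independently: for fixed $a$, letting $\yhat$ vary makes the argument $s$ of $f_2''$ range over all of $\Rbb$, and then letting $y$ vary makes the argument $t$ of $f_3''$ range over all of $\Rbb$; hence $(\delta/\beta)\, f_2''(s) = c\lambda\, f_3''(t)$ for all $s,t\in\Rbb$, which forces both sides to equal a common constant $K$. I then do a short case analysis on the coefficients. If $\delta = 0$ then $K = 0$, so $c\lambda\, f_3''\equiv 0$; since $c\neq 0$, either $f_3''\equiv 0$ (impossible, since then $e^{f_3}$ is not integrable) or $\lambda = 0$, but $\delta = \lambda = 0$ would force $qc + bd = 0$, contradicting the assumed dependence of $A$ and $Y$. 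So $\delta\neq 0$, and then $f_2''\equiv \beta K/\delta$ is constant; a density positive on all of $\Rbb$ whose log has constant second derivative must be Gaussian (the constant being necessarily negative for integrability), so $E_X$ is Gaussian and $K = (\delta/\beta) f_2''\neq 0$. Finally $c\neq 0$ and $K\neq 0$ force $\lambda\neq 0$, whence $f_3''\equiv K/(c\lambda)$ is also constant and $E$ is Gaussian by the same argument. Thus Equalized Odds forces both $E_X$ and $E$ to be Gaussian, so if at most one of them is Gaussian then $\Yhat$ is conditionally dependent on $A$ given $Y$.

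The step I expect to require the most care is the passage from the conditional-independence relation to the pointwise identity valid on all of $\Rbb^2$: one must verify that the logarithms and the mixed partial are legitimate on a subset of the support large enough that $s$ and $t$ genuinely sweep $\Rbb^2$ independently, and that the identity — first obtained on the interior of the support — extends everywhere by continuity of $f_2''$ and $f_3''$ (this is where positivity of $p_{E_X}, p_E$ and the differentiability of $f_2, f_3$ are actually used). By contrast, the coefficient bookkeeping is elementary, but it genuinely needs all of $\beta\neq 0$, $c\neq 0$, and $qc + bd\neq 0$: dropping any one of these produces a degenerate configuration in which the obstruction can disappear. An alternative that avoids the smoothness and positivity assumptions altogether would be to run essentially the same argument on characteristic functions in the spirit of Darmois--Skitovich, at the cost of a longer derivation.
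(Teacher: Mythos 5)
Your proposal is correct and follows essentially the same route as the paper's proof: the same change of variables to write $\log p_{A,\Yhat,Y}$ as $f_1(a)+f_2(\tfrac{\yhat-\delta a}{\beta})+f_3(y-\tfrac{c}{\beta}\yhat-\lambda a)-\log|\beta|$, the same mixed-partial criterion $\partial_a\partial_{\yhat}J\equiv 0$, the same functional identity between $f_2''$ and $f_3''$, and the same case analysis ruling out the degenerate coefficient configurations via $c\neq 0$ and $qc+bd\neq 0$. The only (harmless) variation is that you conclude both sides of the identity are constant by varying the two arguments independently, where the paper instead differentiates once more in $y$ to get $\tilde r\tilde c\,f_3'''\equiv 0$; both yield the same Gaussianity conclusion.
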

From Theorem \ref{thm:unattainability}, we can see that if at most one of $E_X$ and $E$ is Gaussian, then any linear combination of $A$ and $X$ with non-zero coefficients will not be conditionally independent from $A$ given $Y$, meaning that it is impossible to (non-trivially) achieve Equalized Odds with a linear model in the linear non-Gaussian case.


\subsection{Regression with Deterministic Prediction}\label{sec:regression_deterministic}
We have seen that in the linear non-Gaussian case, any non-zero linear combination of the feature (which is a deterministic prediction function of the input) will not satisfy Equalized Odds.
One may naturally wonder, instead of only considering linear models, whether or not Equalized Odds can be achieved in general regression cases.
It is therefore desirable to derive the condition under which Equalized Odds can hold true for regression with deterministic prediction functions.
\begin{theorem}\label{thm:continuous_condition}
    (\textbf{Condition to Achieve Equalized Odds for Regression with Deterministic Prediction})
    \newline
    Assume that the protected feature $A$ and the continuous target variable $Y$ are dependent and that their joint probability density $p(A, Y)$ is positive for every combination of possible values of $A$ and $Y$.
    Further assume that $Y$ is not fully determined by $A$, and that there are additional features $X$ that are not independent of $Y$.
    Let the prediction $\Yhat$ be characterized by a deterministic function $f: \Acal \times \Xcal \rightarrow \Ycal$.
    Equalized Odds holds true if and only if the following condition is satisfied ($\delta(\cdot)$ is the delta function):
    \begin{equation*}
        \begin{split}
            & \forall y, \yhat \in \Ycal, ~ \forall a, a' \in \Acal, a \neq a':  ~
            Q(a, y, \yhat) = Q(a', y, \yhat), \\
            &\text{where} ~~
            Q(a, y, \yhat) \overset{\triangle}{=}
            \int_{\Xcal} \delta\big(\yhat - f(a, x)\big)
            p_{X|AY}(x | a, y) dx.
        \end{split}
    \end{equation*}
\end{theorem}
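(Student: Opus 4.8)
The plan is to recognize the quantity $Q(a,y,\yhat)$ as nothing other than the conditional density of the prediction $\Yhat = f(A,X)$ given $A=a$ and $Y=y$, and then to translate the conditional-independence requirement $\Ytilde\indep A\mid Y$ into the statement that this conditional density does not vary with $a$.

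First I would establish the identity $Q(a,y,\cdot) = p_{\Yhat\mid AY}(\cdot\mid a,y)$. Since $\Yhat=f(A,X)$ is a deterministic function of $(A,X)$, it is independent of $Y$ given $(A,X)$, so $p_{\Yhat\mid AXY}(\yhat\mid a,x,y)=\delta\big(\yhat-f(a,x)\big)$; marginalizing $X$ against $p_{X\mid AY}(x\mid a,y)$ then yields exactly $Q(a,y,\yhat)$. To make this precise without manipulating the delta function directly, I would test against an arbitrary bounded continuous function $\varphi$ and apply Fubini's theorem:
\[
\int_{\Ycal}\varphi(\yhat)\,Q(a,y,\yhat)\,d\yhat
= \int_{\Xcal}\varphi\big(f(a,x)\big)\,p_{X|AY}(x|a,y)\,dx
= \Ebb\big[\varphi(\Yhat)\,\big|\,A=a,\,Y=y\big],
\]
which identifies $Q(a,y,\cdot)$ with the conditional law of $\Yhat$ given $A=a,\,Y=y$ (as a measure, or an ordinary density wherever one exists).

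Next I would invoke the hypothesis that $p(A,Y)$ is strictly positive on every admissible pair. This ensures that for each $y$ the conditional law $p_{A\mid Y}(\cdot\mid y)$ has full support $\Acal$, and that $p_{\Yhat\mid Y}(\yhat\mid y)=\int_{\Acal}Q(a,y,\yhat)\,p_{A\mid Y}(a\mid y)\,da$ is a genuine mixture over all $a\in\Acal$. By definition, Equalized Odds $\Ytilde\indep A\mid Y$ holds iff $p_{\Yhat\mid AY}(\yhat\mid a,y)=p_{\Yhat\mid Y}(\yhat\mid y)$ for all $a$ in the support. Since the right-hand side does not depend on $a$, this forces $Q(a,y,\cdot)$ to be independent of $a\in\Acal$; conversely, if $Q(a,y,\cdot)$ is independent of $a$ then it pulls out of the mixture integral (which integrates the weights to $1$) and equals $p_{\Yhat\mid Y}(\cdot\mid y)$. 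Hence $\Ytilde\indep A\mid Y$ is equivalent to $Q(a,y,\yhat)=Q(a',y,\yhat)$ for all $a\neq a'$, and both directions of the ``if and only if'' drop out of this chain.

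The step I expect to require the most care is the bookkeeping around the generalized-density (``delta function'') formalism: when $f(a,\cdot)$ maps positive-measure sets of $x$ to single values, $Q(a,y,\cdot)$ is not an ordinary function, so the displayed equalities should be read as identities of measures (equivalently, as equalities after integrating against $\varphi$), and the final condition should be understood up to $\yhat$-a.e.\ equivalence. The remaining assumptions — $A$ and $Y$ dependent, $Y$ not determined by $A$, and $X$ not independent of $Y$ — are only used to exclude degenerate or trivial predictors and do not enter the core equivalence, so I would state them and move on.
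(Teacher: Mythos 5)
Your proposal is correct and follows essentially the same route as the paper's proof: identify $Q(a,y,\yhat)$ with the conditional density $p_{\Yhat\mid AY}(\yhat\mid a,y)$, and observe that Equalized Odds is precisely the statement that this quantity is constant in $a$, with the converse following because $p_{\Yhat\mid Y}(\yhat\mid y)$ is a mixture of the $Q(a,y,\yhat)$ over $a$ with weights integrating to one. The only difference is presentational — you justify the delta-function identity by testing against bounded continuous functions and Fubini, whereas the paper uses a change of variables with an auxiliary degenerate variable; your version is arguably the cleaner way to handle the generalized-density bookkeeping you flag at the end.
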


In special cases when $X \indep A \mid Y$ and $f$ is a function of only $X$ (e.g., a linear function with $0$ coefficient for $A$), this condition is always satisfied.
In general cases, however, this condition specifies a rather strong constraint on the relation between the conditional probability density $p_{X|AY}(x \mid a, y)$ and the function $f$.
Because of the integration over the product of $\delta(\cdot)$ function and $p_{X|AY}(x \mid a, y)$, the aforementioned condition requires $f$ to ``pick out'' certain $x$'s (for any given $\yhat \in \Ycal$ and $a \in \Acal$) such that the sum of the corresponding conditional probability density values stays the same.
Generally speaking, if the way $f$ ``picks out'' $x$ is not strictly coupled with the conditional probability density $p_{X|AY}(x \mid a, y)$, the condition specified in Theorem \ref{thm:continuous_condition} would be violated, i.e., Equalized Odds cannot hold true.
In fact, one can formally establish the connection between Theorem \ref{thm:continuous_condition} (for general regression cases) and Theorem \ref{thm:unattainability} (for linear non-Gaussian regression cases) by specifying the way $f$ ``picks out'' $x$'s.\footnote{The derivation of the connection between Theorem \ref{thm:continuous_condition} and Theorem \ref{thm:unattainability} can be found in Section \ref{sec:from_example_to_theory} of the appendix.}
As we will see in Theorem \ref{thm:discrete_condition}, the phenomenon of unattainable Equalized Odds also occurs in classification with deterministic predictors.

\subsection{Regression with Stochastic Prediction}\label{sec:reg_stochastic}
In light of the unattainability of Equalized Odds for prediction with deterministic functions of $A$ and $X$, it is important to ask whether Equalized Odds can be attainable with stochastic prediction, where the output given input features is no longer a single value.
As we shall see in Theorem \ref{thm:stochastic_attainability}, with stochastic prediction functions, under mild assumptions one can always find a non-trivial fair predictor $\Ytilde$ that satisfies Equalized Odds, even if both the target $Y$ and the protected feature $A$ are continuous.
\begin{theorem}\label{thm:stochastic_attainability}
    (\textbf{Attainability of Equalized Odds for Regression with Stochastic Prediction})
    \newline
    Let $A$, $X$, and $Y$ be continuous variables with domain of value $\Acal$, $\Xcal$, and $\Ycal$, respectively.
    Assume that their joint distribution is fixed and known.
    Further assume $Y \nindep A$, $Y \nindep X$, and $Y \nindep X \mid A$.
    Without loss of generality let the conditional probability density $p_{X|AY}(x \mid a, y)$ be non-negative and finite.
    Then there exists $\Ytilde$ with domain of value $\Ycal$ whose distribution is fully determined by $p_{\Ytilde|AX}(\ytilde \mid a, x)$, such that $\Ytilde$ is not independent from $(A, X)$ but $\Ytilde \indep A \mid Y$, i.e., the Equalized Odds is non-trivially attainable.
\end{theorem}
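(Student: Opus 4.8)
The plan is to start from a trivial Equalized-Odds predictor and perturb it into a non-trivial one, which reduces the whole statement to a scalar feasibility problem. Fix any two distinct values $y_0,y_1\in\Ycal$ and look for $\Ytilde$ supported on $\{y_0,y_1\}$ with conditional law
\[
    p_{\Ytilde|AX}(y_1\mid a,x)=g(a,x),\qquad p_{\Ytilde|AX}(y_0\mid a,x)=1-g(a,x),
\]
for a measurable $g:\Acal\times\Xcal\to[0,1]$. Since this kernel reads only $(A,X)$ we have $\Ytilde\indep Y\mid(A,X)$, so a one-line marginalization gives $p_{\Ytilde|AY}(y_1\mid a,y)=\int_{\Xcal}g(a,x)\,p_{X|AY}(x\mid a,y)\,dx$. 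Hence $\Ytilde\indep A\mid Y$ holds exactly when this integral does not depend on $a$, while $\Ytilde\nindep(A,X)$ holds as soon as $g$ is not almost surely constant; and if one insists that $\Ytilde$ range over all of $\Ycal$, one may add an independent, continuously distributed noise term to this two-point $\Ytilde$ without disturbing either property. Everything therefore comes down to producing a non-constant $g:\Acal\times\Xcal\to[0,1]$ for which $\int_{\Xcal}g(a,x)\,p_{X|AY}(x\mid a,y)\,dx$ is free of $a$.

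The feasible set $\Gcal$ of such $g$ is convex and contains every constant kernel, in particular $g\equiv\frac{1}{2}$; since near $g\equiv\frac{1}{2}$ one has room to perturb in $[0,1]$ in any direction, it suffices to exhibit a non-constant feasible \emph{direction}, i.e.\ a bounded $w(a,x)$, not a.s.\ constant, with $\int_{\Xcal}w(a,x)\,p_{X|AY}(x\mid a,y)\,dx$ free of $a$; then $g:=\frac{1}{2}+\varepsilon w$ works for small $\varepsilon$. I would build $w$ by neutralizing the dependence on $A$: pass from $X$ to a transformed quantity $T(A,X)$ whose conditional law given $(A=a,Y=y)$ does not depend on $a$ --- for instance, when the conditionals $p_{X|AY}(\cdot\mid a,y)$ form a location-type family, $T$ merely removes the $A$-driven shift --- and take $w$ a bounded non-constant function of $T(A,X)$; then $\int_{\Xcal}w(a,x)\,p_{X|AY}(x\mid a,y)\,dx=\int h\,d\bigl(\mathrm{Law}(T(A,X)\mid A=a,Y=y)\bigr)$ is automatically $a$-free. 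Here $Y\nindep X$ and $Y\nindep X\mid A$ are what guarantee that $T(A,X)$ still carries genuine information about $Y$, so that $w$ can be taken non-constant, while $Y\nindep A$ rules out the degenerate case in which every Equalized-Odds predictor must be a function of $A$ alone. When no such closed-form ``residualization'' is at hand I would instead argue abstractly: the assignment $g\mapsto\bigl((a,y)\mapsto\int_{\Xcal}g(a,x)\,p_{X|AY}(x\mid a,y)\,dx\bigr)$ is affine and weak-$*$ continuous, the set of profiles independent of $a$ is weak-$*$ closed, and the three dependence hypotheses supply, for each fixed $a$, enough local freedom in $g(a,\cdot)$ (via $Y\nindep X\mid A$) to be coupled across the values of $a$ (via $Y\nindep X$ and $Y\nindep A$) so as to move off the constants while staying in $\Gcal$; a separating-hyperplane argument then yields $\Gcal\supsetneq\{\text{constant kernels}\}$.

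The remainder is routine: for a non-constant $g\in\Gcal$, the two-point $\Ytilde$ above (smeared by independent noise if full support in $\Ycal$ is wanted) is a legitimate stochastic predictor, is not independent of $(A,X)$ because $g$ is non-constant, and satisfies $\Ytilde\indep A\mid Y$ because $\int_{\Xcal}g(a,x)\,p_{X|AY}(x\mid a,y)\,dx$ is $a$-free by construction --- all by elementary integration against $p_{X|AY}$ and without the delta-function manipulations needed in the deterministic case. The crux, and the step I expect to be genuinely delicate, is showing that $\Gcal$ strictly contains the constant kernels. The difficulty is that $A$ (and $Y$) may be continuous, so ``$\Ytilde\indep A\mid Y$'' is an uncountable family of linear constraints that a single kernel must satisfy simultaneously; the heuristic ``residualize $A$ out of $X$'' then has to be upgraded to a measure-theoretic existence argument genuinely using all three dependence assumptions, since, as elementary Gaussian examples show, $\Gcal$ may contain no non-constant function of $X$ alone --- so $g$ really has to depend on $A$ --- and a naive dimension count does not settle feasibility on its own.
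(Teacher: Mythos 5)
Your reduction is clean and in fact sharper than the paper's setup: restricting to a two-point-supported kernel $g(a,x)=p_{\Ytilde|AX}(y_1\mid a,x)$ correctly turns Equalized Odds into the single requirement that $\int_{\Xcal}g(a,x)\,p_{X|AY}(x\mid a,y)\,dx$ be free of $a$, and the perturbation $g=\tfrac12+\varepsilon w$ correctly reduces everything to exhibiting one bounded, non-constant $w$ with $\int_{\Xcal}w(a,x)\,p_{X|AY}(x\mid a,y)\,dx$ independent of $a$ for every $y$. But that existence claim is the entire mathematical content of the theorem, and you do not establish it. The residualization construction ($w$ a function of a statistic $T(A,X)$ whose conditional law given $(A,Y)$ is $a$-free) only works when such a $T$ exists, which you justify only for location-type families; no argument is given that the dependence hypotheses force such a $T$ to exist in general. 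The fallback ``abstract'' argument is not an argument: affinity and weak-$*$ continuity of $g\mapsto\int g\,p_{X|AY}\,dx$ and closedness of the $a$-free profiles tell you that $\Gcal$ is a closed convex set containing the constants, but a separating-hyperplane theorem is a tool for showing a point lies \emph{outside} a convex set, not for showing a convex set is \emph{strictly larger} than a given subset. Nothing in your sketch derives a contradiction from $\Gcal=\{\text{constant kernels}\}$, and you yourself concede the step is ``genuinely delicate'' --- which is exactly where the proof stops.

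For comparison, the paper attacks the same feasibility question differently: it treats conditions of type (i) and (iii) (non-negativity/boundedness plus the $a$-invariance of $Q(a,y,\ytilde)=\int f(a,x,\ytilde)\,p_{X|AY}(x\mid a,y)\,dx$) as the constraints of an infinite-dimensional linear program in the Banach space of bounded functions, takes a non-empty compact convex subset $\Kcal$ of the feasible region, invokes the Krein--Milman theorem to obtain extreme points, and then shows that any trivial candidate (a function of $\ytilde$ alone) is a midpoint of two distinct trivial candidates obtained by an indicator splitting, hence not extreme; an extreme point of $\Kcal$ is therefore non-trivial, and normalization over $\ytilde$ yields the predictor. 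Whatever one thinks of the details there, it supplies a mechanism for producing a non-trivial feasible point, which is precisely the mechanism your proposal lacks. To repair your version you would need either to prove that a non-constant feasible direction $w$ always exists under $Y\nindep A$, $Y\nindep X$, $Y\nindep X\mid A$ (e.g.\ by showing that for each $a$ the closed span of $\{p_{X|AY}(\cdot\mid a,y):y\in\Ycal\}$ is a proper subspace and that annihilating vectors can be chosen consistently and measurably across $a$), or to import an extreme-point argument of the Krein--Milman type. The smearing-by-independent-noise step to get full support on $\Ycal$ is fine and only a cosmetic issue.
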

From Theorem \ref{thm:stochastic_attainability} we can see that with stochastic prediction, one can guarantee the attainability of Equalized Odds for regression under rather mild assumptions.
It would then be desirable to construct general, nonlinear prediction models to produce a stochastic prediction (i.e., with a certain type of randomness in the prediction).
One possible way follows the framework of Generative Adversarial Networks (GANs) \citep{goodfellow2014generative}:
we use random standard Gaussian noise $E$, in addition to $A$ and $X$, as input, such that the output will have a specific type of randomness.
The parameters involved are learned by minimizing prediction error and enforcing Equalized Odds on the ``randomized'' output, as a function of $A$, $X$, and $E$, at the same time.
Given the loss function $\Lcal$, a class of function $\Fcal$, and a fairness penalty $\Gcal$, we propose the following objective function for fitting an Equalized Odds model with stochastic prediction:
\begin{equation}\label{equ:stochastic_prob}
    \min\limits_{ f \in \Fcal } ~
    \Ebb \big[\Lcal \big( f(A, X, E), Y \big) + \lambda \Gcal(f(A, X, E), A, Y)\big].
\end{equation}
The first term measures the prediction error, for instance the mean squared error for regression.
The second term is the fairness penalty that imposes Equalized Odds.
We use the kernel measure of conditional dependence \citep{fukumizu2004dimensionality,fukumizu2007kernel} between $\Ytilde$ and $A$ given $Y$ as the fairness penalty term.
The hyperparameter $\lambda$ reflects the trade-off between accuracy and fairness.

\section{Fairness in Classification}
In this section, we consider the attainability of Equalized Odds for binary classifiers (with a deterministic or stochastic prediction function), and furthermore, if attainable, the optimality of performance of various computational procedures under the fairness criterion.

\subsection{Classification with Deterministic Prediction}
We begin with considering cases when classification is performed by a deterministic function of the input.
Similar to the previous analysis for regression tasks with deterministic prediction functions in Section \ref{sec:regression_deterministic}, we derive the conditions under which Equalized Odds can possibly hold true for classification with deterministic prediction functions.

\begin{theorem}\label{thm:discrete_condition}
    (\textbf{Condition to Achieve Equalized Odds for Classification with Deterministic Prediction})
    Assume that the protected feature $A$ and $Y$ are dependent and that their joint probability $P(A, Y)$ (for discrete $A$) or joint probability density $p(A, Y)$ (for continuous $A$) is positive for every combination of possible values of $A$ and $Y$.
    Further assume that $Y$ is not fully determined by $A$, and that there are additional features $X$ that are not independent of $Y$.
    Let the output of the classifier $\Yhat$ be a deterministic function $f: \Acal \times \Xcal \rightarrow \Ycal$.
    Let $S^{(\yhat)}_A := \{a \mid \exists x \in \Xcal \text{ s.t. } f(a, x) = \yhat\}$, and $S^{(\yhat)}_{X|a} := \{x \mid f(a, x) = \yhat\}$.
    Equalized Odds is attained if and only if the following conditions hold true (for continuous $X$, replace summation with integration accordingly):
    \setlist{nolistsep}
    \begin{enumerate}[noitemsep, label=(\roman*)]
        \item $\forall \yhat \in \Ycal: ~ S^{(\yhat)}_A = \Acal ~ $,
        \item $\forall \yhat \in \Ycal, ~ \forall a, a' \in \Acal ~ $:
        $\sum_{x \in S^{(\yhat)}_{X|a}} P_{X|AY}(x \mid a, y)
        = \sum_{x \in S^{(\yhat)}_{X|a'}} P_{X|AY}(x \mid a', y)$.
    \end{enumerate}
\end{theorem}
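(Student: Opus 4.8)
The plan is to write the conditional independence $\Yhat \indep A \mid Y$ out explicitly and to push the deterministic map $f$ through it. Since $P(A,Y)$ (for discrete $A$) or $p(A,Y)$ (for continuous $A$) is strictly positive on every admissible pair, each conditional probability $P(\Yhat = \yhat \mid A = a, Y = y)$ is well defined for all $a \in \Acal$ and $y,\yhat \in \Ycal$, and $\Yhat \indep A \mid Y$ is equivalent to asking that, for every fixed $\yhat$ and $y$, the map $a \mapsto P(\Yhat = \yhat \mid A = a, Y = y)$ be constant on $\Acal$, i.e.\ $P(\Yhat = \yhat \mid A = a, Y = y) = P(\Yhat = \yhat \mid A = a', Y = y)$ for all $a,a'$. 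For continuous $A$ I would phrase this through the conditional density of $A$ given $(\Yhat,Y)$ versus given $Y$; the positivity of $p(A,Y)$ is precisely what lets one pass between the ``$P_{A\mid Y=y}$-almost every $a$'' version and the pointwise ``every $a$'' version used in the statement.

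Next I would evaluate $P(\Yhat = \yhat \mid A = a, Y = y)$ for a deterministic classifier. Conditioning on $A=a$ turns $\Yhat = f(A,X)$ into $\Yhat = f(a,X)$, so by the law of total probability over $X$,
\begin{equation*}
    P(\Yhat = \yhat \mid A = a, Y = y)
    = \sum_{x \,:\, f(a,x)=\yhat} P_{X|AY}(x \mid a, y)
    = \sum_{x \in S^{(\yhat)}_{X|a}} P_{X|AY}(x \mid a, y),
\end{equation*}
with the sum replaced by $\int_{S^{(\yhat)}_{X|a}} p_{X|AY}(x\mid a,y)\,dx$ when $X$ is continuous, since $\{x : f(a,x)=\yhat\}$ is exactly $S^{(\yhat)}_{X|a}$. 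Substituting this into the characterization of the previous paragraph turns the system of equalities ``$P(\Yhat=\yhat\mid A=a,Y=y)=P(\Yhat=\yhat\mid A=a',Y=y)$ for all $\yhat,y,a,a'$'' verbatim into condition (ii); so condition (ii) is exactly $\Yhat \indep A \mid Y$, the ``every $a$''-versus-``almost every $a$'' caveat being handled as above.

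It remains to account for condition (i), which requires that every label be attainable from every value of the protected feature. For necessity, suppose $\Yhat \indep A \mid Y$ holds but (i) fails, i.e.\ $S^{(\yhat_0)}_{X|a_0}=\emptyset$ for some $\yhat_0 \in \Ycal$ and $a_0 \in \Acal$; then $P(\Yhat=\yhat_0\mid A=a_0,Y=y)=0$ for every $y$, and by the $a$-invariance from Equalized Odds, $P(\Yhat=\yhat_0\mid A=a,Y=y)=0$ for all $a$ and $y$. Marginalizing over $A$ and $Y$ yields $P(\Yhat=\yhat_0)=0$, so in the binary setting $\Yhat$ equals the complementary label almost surely; hence a failure of (i) is compatible with Equalized Odds only for a degenerate constant classifier, and any genuine binary classifier attaining Equalized Odds must satisfy (i). Conversely, condition (ii) alone already yields $\Yhat \indep A \mid Y$, and (i) is consistent with it and merely excludes this degenerate corner case, so that (i) and (ii) together characterize attainment of Equalized Odds by a classifier that actually uses the label set. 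For continuous $X$ and/or continuous $A$ the same argument runs with integrals in place of sums, reading the preimage conditions in the appropriate positive-measure sense.

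The mechanical part---the reduction in the second paragraph---is essentially bookkeeping once the right objects are named. The two points that need care are (a) the measure-theoretic equivalence, for continuous $A$, between the abstract statement $\Yhat \indep A \mid Y$ and the pointwise equalities in condition (ii), where the positivity of $p(A,Y)$ does the work; and (b) the exact status of condition (i), namely proving that its failure forces the classifier to collapse to a constant, so that (i) is the ingredient promoting ``(ii) $\Leftrightarrow$ conditional independence'' to ``(i) and (ii) $\Leftrightarrow$ Equalized Odds attained by a non-degenerate classifier.'' I expect (b) to be the main obstacle.
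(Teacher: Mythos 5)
Your proposal is correct, and its computational core---expanding $P(\Yhat=\yhat\mid A=a,Y=y)$ over $X$ and using that the deterministic $f$ makes $P(\Yhat=\yhat\mid A=a,X=x,Y=y)\in\{0,1\}$, so the conditional positive rate collapses to $\sum_{x\in S^{(\yhat)}_{X|a}}P_{X|AY}(x\mid a,y)$---is exactly the computation in the paper's proof. The one place you genuinely diverge is the derivation of condition (i). The paper works with the formulation $P_{\Yhat|AY}(\yhat\mid a,y)=P_{\Yhat|Y}(\yhat\mid y)$ rather than your pairwise form: it expands the right-hand side as $\sum_{a\in S^{(\yhat)}_A}Q^{(\yhat)}(a,y)\,P_{A|Y}(a\mid y)$ with $Q^{(\yhat)}(a,y)$ the common value guaranteed by (ii), and concludes $\sum_{a\in S^{(\yhat)}_A}P_{A|Y}(a\mid y)=1$, which by positivity of $P(A,Y)$ forces $S^{(\yhat)}_A=\Acal$. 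You instead obtain (ii) directly from the pairwise characterization of conditional independence and then argue by contradiction that a failure of (i) under Equalized Odds collapses the classifier to a constant. Your route is slightly more candid about the one real edge case: the paper's step deriving (i) silently assumes the common value $Q^{(\yhat)}(a,y)$ is nonzero (otherwise the identity reads $0=0$ and yields nothing), and that zero case is precisely the degenerate never-predict-$\yhat_0$ classifier you isolate in your point (b). So the difficulty you anticipated is legitimate, but it reflects the theorem statement's failure to exclude trivial classifiers rather than an obstacle to the argument; your resolution---(ii) alone is equivalent to $\Yhat\indep A\mid Y$, and (i) is additionally necessary exactly when the classifier actually uses every label---is consistent with what the paper's proof implicitly does, and nothing further is missing.
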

Condition (i) says that within each class determined by the classification function $f$, $A$ should be able to take all possible values in $\Acal$.
While condition (i) is already rather restrictive, condition (ii) specifies an even stronger constraint on the relation between $P_{X|AY}(x|a, y)$ (or $p_{X|AY}(x|a, y)$ for continuous $X$) and the set $S^{(\yhat)}_{X|a}$ (which is determined by the function $f$).
Generally speaking, in order to score a better classification accuracy, one would like to make $P_{\Yhat | A, X}(\yhat|a, x)$ as close as possible to $P_{Y|A, X}(y|a, x)$, and if the set $S^{(\yhat)}_{X|a}$ and $P_{X|AY}(x|a, y)$ are not strictly coupled, condition (ii) would be violated, i.e., Equalized Odds cannot be attained.\footnote{Interested readers may find in Section \ref{sec:illustration_exp_appendix} of the appendix concrete examples where the aforementioned conditions can or cannot hold true.}

\subsection{Classification with Stochastic Prediction}
In this section, we consider cases when stochastic prediction is acceptable, namely, the classifier would output class labels with certain probabilities.
Among different categories of approaches to derive a fair classifier, recent efforts to impose Equalized Odds in the pre-processing manner \citep{madras2018learning,zhao2020conditional, tan2020learning} approach the problem from a representation learning perspective, where the main focus is to learn fair representations that at the same time preserve sufficient information from the original data.
Therefore, we omit pre-processing approaches from the discussion and focus on post-processing and in-processing fair classifiers.

We first derive the relation between positive rates (TPR and FPR) of binary classifiers before and after the post-processing step, i.e., $\Yopt$ (the unconstrainedly optimized classifier) and $\Ypost$ (the fair classifier derived by post-processing $\Yopt$),
and show that under mild assumptions, one can always derive a non-trivial Equalized Odds $\Ypost$ via a post-processing step.
Then, from the ROC feasible area perspective, we prove that post-processing approaches are actually equivalent to in-processing approaches but with additional ``pseudo'' constraints enforced.
Therefore, using the same loss function, post-processing approaches can perform no better than in-processing approaches.

\begin{figure*}[t]
    \centering
    \includegraphics[width=\textwidth]{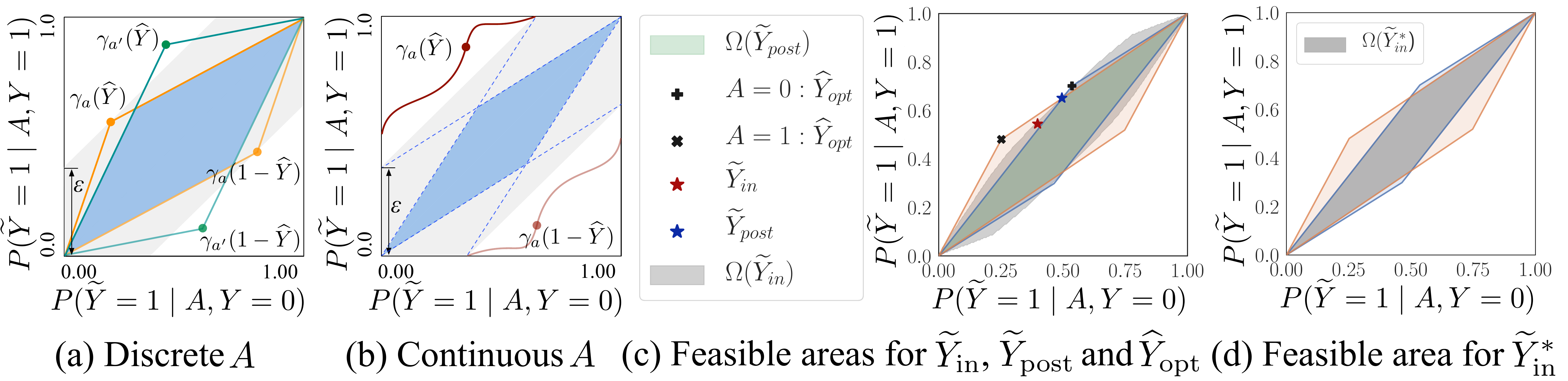}
    \caption{ROC feasible area illustrations.
        Panels (a)-(b): Attainability of Equalized Odds for binary classifiers with stochastic prediction functions.
        Panels (c)-(d): ROC feasible areas comparison between $\Omega(\Yin)$, $\Omega(\Ypost)$, $\Omega(\Yopt)$, and $\Omega(\Yin^*)$.}
    \label{fig:roc_illustration}
\end{figure*}

\subsubsection{The Post-processing Step}\label{sec:post_processing_step}
The post-processing step \citep{hardt2016equality} of a predictor $\Yhat$ (here we drop the subscript when there is no ambiguity) only utilizes the information in the joint distribution $(A, Y, \Yhat)$.
A fair predictor $\Ypost$ derived via a post-processing step
is then fully specified by a (possibly randomized) function of $(A, \Yhat)$.
This implies the conditional independence $\Ypost \indep Y \mid A, \Yhat$.
Then if we denote the positive rates of $\Yhat$ as $P_{\Yhat|AY}(1|a, y)$, positive rates of $\Ypost$ as $P_{\Ypost|AY}(1|a, y)$,
we can factorize the positive rates under this conditional independence:
    $P_{\Ypost|AY}(1|a, y) = \sum _{\yhat \in \Ycal} ~ \beta_{a}^{(\yhat)} P_{\Yhat|AY}(\yhat|a, y),$
    where $\beta_{a}^{(\yhat)} := P(\Ypost = 1 \mid A = a, \Yhat = \yhat)$.
Therefore, the post-processing step boils down to optimizing parameters (for discrete $A$) or functions (for continuous $A$) $\beta_{a}^{(\yhat)}$.

\subsubsection{ROC Feasible Area}
On the Receiver Operator Characteristic (ROC) plane, a two-dimensional plane with horizontal axis denoting FPR and vertical axis denoting TPR, the performance of any binary predictor $\Yhat$ (not necessarily a fair one) with a certain value of protected feature $A = a$ corresponds to a point $\gamma_a(\Yhat) = $ (FPR, TPR) on the plane.
Denote each coordinate according to the value of $Y$ as $\gamma_{ay}(\Yhat)$:
\begin{equation}\label{equ:gamma_ay}
    \begin{split}
        \gamma_{a}(\Yhat)
        = \big( \gamma_{a0}(\Yhat), \gamma_{a1}(\Yhat) \big) 
        := \big( P_{\Yhat|AY}(1|a, 0), P_{\Yhat|AY}(1|a, 1) \big).
    \end{split}
\end{equation}
Further denote the corresponding convex hull of $\Yhat$ on the ROC plane as $\mathcal{C}_{a}(\Yhat)$ using vertices:
\begin{equation}
    \mathcal{C}_{a}(\Yhat) := \mathrm{Conv} \big\{
    (0, 0), \gamma_a(\Yhat), \gamma_a(1 - \Yhat), (1, 1) \big\},
\end{equation}
and then, as already stated in \citet{hardt2016equality}, the (FPR, TPR) pair corresponding to a post-processing predictor falls within (including the boundary of) $\mathcal{C}_{a}(\Yhat)$.

\begin{definition}[ROC feasible area]
    The feasible area of a predictor $\Omega(\Yhat)$, specified by the hypothesis space of available predictors $\Yhat$, is the set containing all attainable (FPR, TPR) pairs by the predictor on the ROC plane satisfying Equalized Odds.
\end{definition}

Following \citet{hardt2016equality}, we analyze the relation between the (FPR, TPR) pair of predictors on the ROC plane and formally establish the existence of the non-trivial Equalized Odds predictor.
As we show in Theorem \ref{thm:weak_attainability}, under mild assumptions an Equalized Odds predictor $\Ypost$ derived via post-processing $\Yhat$
always has non-empty ROC feasible area.
\begin{theorem}\label{thm:weak_attainability}
    (\textbf{Attainability of Equalized Odds for Classification with Stochastic Prediction}) \newline
    Assume that the feature $X$ is not independent from $Y$, and that $\widehat{Y}$ is a function of $A$ and $X$.
    Then for binary classification, if \,$\Yhat$ is a non-trivial predictor for $Y$, there is always at least one non-trivial predictor $\Ypost$ derived by post-processing $\Yhat$ that can attain Equalized Odds, i.e.,
    $ \Omega(\Ypost) \neq \emptyset. $
\end{theorem}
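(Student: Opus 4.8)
The plan is to recast the claim as a two-dimensional convexity statement on the ROC plane and then produce an explicit family of non-trivial feasible operating points. By the description of the post-processing step, a post-processed predictor is determined, for each value $a$, by a pair $\big(\beta_a^{(0)},\beta_a^{(1)}\big)\in[0,1]^2$, and the induced group-$a$ operating point $\gamma_a(\Ypost)=\big(P_{\Ypost|AY}(1\mid a,0),\,P_{\Ypost|AY}(1\mid a,1)\big)$ is an affine image of that pair; the images of the four corners of $[0,1]^2$ are the operating points of the four extreme rules (constant $0$, output $\Yhat$, output $1-\Yhat$, constant $1$), namely $(0,0)$, $\gamma_a(\Yhat)$, $\gamma_a(1-\Yhat)$, $(1,1)$, so the set of group-$a$ operating points realizable by post-processing is exactly $\mathcal{C}_a(\Yhat)$. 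Since Equalized Odds forces $\gamma_a(\Ypost)$ to be one and the same point for every $a$, we obtain $\Omega(\Ypost)=\bigcap_{a\in\Acal}\mathcal{C}_a(\Yhat)$; moreover a point of this set corresponds to a non-trivial fair predictor exactly when it lies off the chance diagonal $\{(t,t)\}$, since on the diagonal $\mathrm{TPR}=\mathrm{FPR}$ forces $\Ypost\indep(A,Y)$. It therefore suffices to exhibit a point of $\bigcap_{a}\mathcal{C}_a(\Yhat)$ with $\mathrm{TPR}\neq\mathrm{FPR}$.

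The geometric core is that each $\mathcal{C}_a(\Yhat)$ is a parallelogram that is centrally symmetric about $(1/2,1/2)$ and has the chance diagonal as one of its own diagonals: since $\gamma_a(1-\Yhat)=(1,1)-\gamma_a(\Yhat)$, the four vertices split into the antipodal pairs $\{(0,0),(1,1)\}$ and $\{\gamma_a(\Yhat),(1,1)-\gamma_a(\Yhat)\}$, both with midpoint $(1/2,1/2)$. This parallelogram is two-dimensional precisely when $\gamma_a(\Yhat)$ does not lie on the chance diagonal, i.e.\ $P_{\Yhat|AY}(1\mid a,0)\neq P_{\Yhat|AY}(1\mid a,1)$ --- and this group-wise non-triviality, for every $a$, is exactly what the hypotheses (that $\Yhat$ is a non-trivial predictor of $Y$ formed from $A$ and $X$, with $X\nindep Y$) are there to supply. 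When it holds, $(1/2,1/2)$ is the centroid of a non-degenerate parallelogram, hence an interior point, so $\mathcal{C}_a(\Yhat)$ contains a ball $B\big((1/2,1/2),r_a\big)$ with $r_a>0$.

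To conclude in the discrete case, set $r=\min_{a\in\Acal}r_a>0$; then $B\big((1/2,1/2),r\big)\subseteq\bigcap_{a}\mathcal{C}_a(\Yhat)$, so for any $0<\varepsilon<r/\sqrt{2}$ the point $\big(1/2-\varepsilon,\,1/2+\varepsilon\big)$ is an off-diagonal --- hence non-trivial --- Equalized-Odds operating point. Selecting, for each $a$, the pair $\big(\beta_a^{(0)},\beta_a^{(1)}\big)$ whose affine image is this point yields the desired post-processed predictor $\Ypost$, and $\Omega(\Ypost)\neq\emptyset$.

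I expect the main obstacle to be the passage to a continuous protected feature: then $\min_a r_a$ must be replaced by $\inf_a r_a$, which must be shown to be strictly positive --- equivalently, that $\gamma_a(\Yhat)$ is bounded away from the chance diagonal uniformly over $a$. Extracting exactly this uniformity from the meaning of ``$\Yhat$ is a non-trivial predictor for $Y$'' (and checking that the induced selection $a\mapsto\big(\beta_a^{(0)},\beta_a^{(1)}\big)$ can be taken measurable, so that it genuinely defines a post-processing rule) is where the real care lies; the underlying ROC geometry and the finite case are otherwise routine.
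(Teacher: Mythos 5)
Your proposal is correct and takes essentially the same route as the paper: both identify $\Omega(\Ypost)$ with $\bigcap_{a}\mathcal{C}_a(\Yhat)$ and argue that each $\mathcal{C}_a(\Yhat)$ is a non-degenerate parallelogram symmetric about the chance diagonal, so the intersection has non-empty area and hence contains an off-diagonal point. The uniformity concern you flag for continuous $A$ is handled in the paper by reading ``non-trivial predictor'' as the existence of a single $\epsilon>0$ with $\lvert P_{\Yhat|AY}(1\mid a,1)-P_{\Yhat|AY}(1\mid a,0)\rvert\geq\epsilon$ for all $a\in\Acal$, which is exactly the uniform bound away from the diagonal that makes your $\inf_a r_a$ strictly positive.
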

Here $\Ypost$ is a possibly randomized function of only $A$ and $\Yhat$, trading off TPR with FPR across groups with different value of protected feature.
From the panels (a) and (b) of Figure \ref{fig:roc_illustration} we can also see that $\Omega(\Ypost)$, the ROC feasible area of $\Ypost$, is the intersection of $\Omega_{a}(\Yhat)$, indicating that although Equalized Odds is attained, the performance of $\Ypost$ is no better and often worse than the weakest performance across different groups, which is obviously suboptimal. 

\subsubsection{Optimality of Performance among Fair Classifiers}
In this subsection we discuss the optimality of performance of fair classifiers derived via in-processing and post-processing approaches.
Let us consider a more general setting for the post-processing approach, where the optimal predictor to be post-processed is stochastic (while most previous approaches use deterministic predictors).
We can derive the in-processing fair predictor $\Yin$:
\begin{equation}\label{equ:Yin}
    \begin{array}{cl}
        { \min\limits_{ f \in \Fcal } } & {
                \Ebb [\Lcal ( \Yin, Y ) ]
            }                                     \\
        { \mathrm{s.t.} }                     & {
                P_{\Yin|AY}(\ytilde \mid a, y) = P_{\Yin|Y}(\ytilde \mid y)
            }                                     \\
        { \mathrm{where} }                    & {
                \Yin \sim \text{Bernoulli} \big(f(A, X)\big);
            }
    \end{array}
\end{equation}
and the unconstrained statistical optimal predictor $\Yopt$:
\begin{equation}\label{equ:Yopt}
    \begin{array}{cl}
        { \min\limits_{ f \in \Fcal } } & {
                \Ebb [\Lcal ( \Yopt, Y ) ]
            }                                     \\
        { \mathrm{where} }                    & {
                \Yopt \sim \text{Bernoulli} \big(f(A, X)\big).
            }
    \end{array}
\end{equation}

It is natural to wonder, now that one can always directly solve for $\Yin$ from Equation \ref{equ:Yin}, how it is related to $\Ypost$, which is derived by post-processing the $\Yopt$ solved from Equation \ref{equ:Yopt}?
Interestingly, although $\Yin$ and $\Ypost$ are solved separately using different constrained optimization schemes, one can draw a connection between them by utilizing $\Yopt$ as a bridge and reason about the relation between their ROC feasible areas $\Omega(\Yin)$ and $\Omega(\Ypost)$, as we summarize in the following theorem.
\begin{theorem}\label{thm:larger_feasible_area}
    (\textbf{Equivalence between ROC feasible areas}) \newline
    Let\, $\Omega(\Ypost)$ denote the ROC feasible area specified by the constraints enforced on $\Ypost$.
    Then $\Omega(\Ypost)$ is identical to the ROC feasible area $\Omega(\Yin^*)$ that is specified by the following set of constraints:
    \begin{enumerate}[label=(\roman*), noitemsep]
        \item constraints enforced on $\Yin ~$;
        \item additional ``pseudo'' constraints:
              $\forall a \in \Acal, ~ \beta^{(0)}_{a0} = \beta^{(0)}_{a1}$, $\beta^{(1)}_{a0} = \beta^{(1)}_{a1}$, where
              \newline
              $\beta^{(\yhat)}_{ay} =
                  \sum_{x \in \Xcal}
                  P(\Yin = 1 \mid A = a, X = x)
                  P(X = x \mid A = a, Y = y, \Yopt = \yhat)$.
    \end{enumerate}
\end{theorem}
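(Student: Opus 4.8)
The plan is to prove the set equality $\Omega(\Ypost) = \Omega(\Yin^*)$ by establishing both inclusions, in each case taking a feasible predictor for one region and producing a feasible predictor for the other that attains the same $(\mathrm{FPR},\mathrm{TPR})$ pair. Two elementary identities handle the bookkeeping. First, as recalled in Section~\ref{sec:post_processing_step}, every post-processing predictor satisfies $\Ypost \indep Y \mid A, \Yopt$, so $\gamma_{ay}(\Ypost) = \sum_{\yhat} \beta^{(\yhat)}_a \, P(\Yopt = \yhat \mid A = a, Y = y)$ with $\beta^{(\yhat)}_a := P(\Ypost = 1 \mid A = a, \Yopt = \yhat) \in [0,1]$. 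Second, since $\Yin \sim \mathrm{Bernoulli}(f(A,X))$ is, conditionally on $(A,X)$, independent of $(Y,\Yopt)$, the tower rule gives $\beta^{(\yhat)}_{ay} = P(\Yin = 1 \mid A = a, Y = y, \Yopt = \yhat)$; hence the pseudo-constraints in (ii) say precisely that $\Yin^* \indep Y \mid A, \Yopt$, and for feasible $\Yin^*$, with $\tilde\beta^{(\yhat)}_a \in [0,1]$ the common value from (ii), one gets $\gamma_{ay}(\Yin^*) = \sum_{\yhat} \tilde\beta^{(\yhat)}_a \, P(\Yopt = \yhat \mid A = a, Y = y)$. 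Thus the ROC coordinates of both predictor families are images of $[0,1]$-valued coefficient vectors under the \emph{same} linear map determined by the fixed conditional law $P(\Yopt \mid A, Y)$, and Equalized Odds amounts to the coordinates being independent of $a$.

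The inclusion $\Omega(\Yin^*) \subseteq \Omega(\Ypost)$ then follows immediately: given a feasible $\Yin^*$, define a post-processing rule by $P(\Ypost = 1 \mid A = a, \Yopt = \yhat) := \tilde\beta^{(\yhat)}_a$; by the two displays above $\gamma_a(\Ypost) = \gamma_a(\Yin^*)$ for every $a$, so this $\Ypost$ is feasible (it inherits Equalized Odds) and attains the same ROC point.

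For $\Omega(\Ypost) \subseteq \Omega(\Yin^*)$, given a feasible $\Ypost$ with coefficients $\beta^{(\yhat)}_a$, the natural candidate is $f(a,x) := P(\Ypost = 1 \mid A = a, X = x) = \sum_{\yhat} \beta^{(\yhat)}_a \, P(\Yopt = \yhat \mid A = a, X = x) \in [0,1]$, with $\Yin^* \sim \mathrm{Bernoulli}(f(A,X))$. Using $\Yopt \indep Y \mid A, X$ one has $E[f(a,X) \mid A = a, Y = y] = \sum_{\yhat} \beta^{(\yhat)}_a \, P(\Yopt = \yhat \mid A = a, Y = y) = \gamma_{ay}(\Ypost)$, so $\Yin^*$ already satisfies Equalized Odds and attains the target ROC point; the remaining task is to verify that this $f$ meets the pseudo-constraints (ii).

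That verification is where the real content lies and is the step I expect to be the main obstacle. We must show $\beta^{(\yhat)}_{ay} = E[f(a,X) \mid A = a, Y = y, \Yopt = \yhat]$ does not depend on $y$. This goes through because the unconstrained statistical optimum is realized by a deterministic rule $\Yopt = f_{\mathrm{opt}}(A,X)$ in the features (a fixed score/decision function), so $P(\Yopt = \yhat \mid A = a, X = x)$ is the $\{0,1\}$-indicator of $f_{\mathrm{opt}}(a,x) = \yhat$; hence $f(a,x) = \beta^{(\yhat)}_a$ for every $x$ with $f_{\mathrm{opt}}(a,x) = \yhat$, and since the conditional law $P(X \mid A = a, Y = y, \Yopt = \yhat)$ is supported on exactly that set, the conditional expectation collapses to the constant $\beta^{(\yhat)}_a$ irrespective of $y$, which is (ii). Combining the two inclusions gives $\Omega(\Ypost) = \Omega(\Yin^*)$. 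For continuous $A$ the only change is to replace the per-value parameters $\beta^{(\yhat)}_a, \tilde\beta^{(\yhat)}_a$ by measurable functions of $a$ and the sums over $\Acal$ by integrals, as in Theorem~\ref{thm:discrete_condition}, which leaves all the above manipulations intact.
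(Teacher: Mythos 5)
Your argument is, at its core, the same as the paper's: both derive $\gamma_{ay}(\Ypost)=\sum_{\yhat}\beta^{(\yhat)}_a P_{\Yopt|AY}(\yhat\mid a,y)$ and $\gamma_{ay}(\Yin)=\sum_{\yhat}\beta^{(\yhat)}_{ay}P_{\Yopt|AY}(\yhat\mid a,y)$ with $\beta^{(\yhat)}_{ay}=P(\Yin=1\mid A=a,Y=y,\Yopt=\yhat)$, and both observe that the pseudo-constraints are precisely the conditional independence $\Yin\indep Y\mid A,\Yopt$ that post-processing satisfies by construction. Your write-up is more explicit in splitting the set equality into two inclusions, and the inclusion $\Omega(\Yin^*)\subseteq\Omega(\Ypost)$ is handled cleanly and correctly.

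The gap is in the other inclusion, exactly at the step you flag as the main obstacle. Your verification of the pseudo-constraints for the candidate $f(a,x)=P(\Ypost=1\mid A=a,X=x)$ rests on $\Yopt$ being a deterministic function of $(A,X)$, so that $P(\Yopt=\yhat\mid A=a,X=x)\in\{0,1\}$. That contradicts the theorem's setting: Equation \ref{equ:Yopt} defines $\Yopt\sim\mathrm{Bernoulli}\big(f(A,X)\big)$, and the surrounding text emphasizes that the predictor being post-processed is stochastic. With $g(a,x):=P(\Yopt=1\mid A=a,X=x)$ not $\{0,1\}$-valued, your candidate is $f(a,x)=\beta^{(0)}_a+(\beta^{(1)}_a-\beta^{(0)}_a)\,g(a,x)$, and a short computation using $\Yopt\indep Y\mid A,X$ gives $\beta^{(1)}_{ay}=\beta^{(0)}_a+(\beta^{(1)}_a-\beta^{(0)}_a)\,\Ebb[g(a,X)^2\mid A=a,Y=y]\,/\,\Ebb[g(a,X)\mid A=a,Y=y]$, a ratio that depends on $y$ in general (e.g., for binary $X$ with $g(a,0)=0.2$, $g(a,1)=0.8$ and $P(X=1\mid a,Y=0)\neq P(X=1\mid a,Y=1)$). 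So the pseudo-constraints fail for that $f$, and the inclusion $\Omega(\Ypost)\subseteq\Omega(\Yin^*)$ is not established in the stochastic-$\Yopt$ case. The paper's own proof sidesteps this by never transporting a predictor across the two families: it identifies each feasible area with the set of ROC points generated by admissible coefficient pairs (free of $y$ in both cases once the pseudo-constraints are imposed) and concludes the equality at that level, without verifying which coefficient pairs are realizable by some $f\in\Fcal$. To repair your version you would either have to restrict to deterministic $\Yopt$ (not the paper's setting) or argue realizability of arbitrary coefficient pairs $(\beta^{(0)}_a,\beta^{(1)}_a)$ within the in-processing family by some other construction.
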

As we can see in Figure \ref{fig:roc_illustration}(c) and (d), if the additional ``pseudo'' constraints are introduced when optimizing $\Yin^*$, we have $\Omega(\Yin) \supseteq \Omega(\Ypost) = \Omega(\Yin^*)$.
Therefore, with the same objective function and fairness constraint, the fair classifier derived from an in-processing approach always outperforms (or performs equally well with) the one derived from a post-processing approach.

\section{Experiments}
In this section, we provide numerical results for various settings.
To demonstrate the benefit of stochastic prediction with respect to imposing fairness, we present the results for regression with stochastic prediction on both simulated data and the real-world \textit{Communities and Crime} data set.
For classification tasks we compare the performance of several existing methods in the literature on multiple real-world data sets.
Please see Section \ref{sec:exps_info_appendix} of the appendix for additional results, as well as the detailed description of the data sets and technical specification for the experiments.

In Figure \ref{fig:sim_LiNG_stochastic} we compare deterministic and stochastic regression on simulated linear non-Gaussian data.
Panel (a) illustrates the trade-off between fairness in terms of the kernel measure of conditional dependence (KMCD) \citep{fukumizu2004dimensionality,fukumizu2007kernel}, and prediction error in terms of the mean squared error (MSE) for different values of the hyperparameter $\lambda$.
Panel (b) summarizes the p-value outputs of the kernel-based conditional independence (KCI) test \citep{zhang2011kernel}.
Note that in (b), the green boxes are not visible since p-values in the deterministic prediction case are always close to $0$.
The distribution of the p-value outputs clearly indicate that stochastic prediction often achieve Equalized Odds (the test fails to reject the null quite often), while deterministic prediction may not (the test rejects the null almost all the time).
For panel (c) and (d), we fix the training sample size at $200$ while the sample size for testing ranges from $100$ to $3000$.\footnote{Considering the fact that the computational cost for KCI test p-value increases dramatically with large sample size, we use linearly correlated data with relatively small sample size for training and consider increasingly large test sets in our repetitive experiments (with fixed hyperparameter $\lambda$).}
A practical example of the scenario would be an automated hiring system to aid recruitment (each future applicant is a new data point for the system as long as the system is deployed and keeps running).
As we can see in panel (d), while the p-value for deterministic prediction is always close to $0$ (invisible green box, which indicates violated Equalized Odds), the p-value for stochastic prediction spreads over $[0, 1]$ if the test set sample size is not too big compared to the training data.

\begin{figure*}[t]
    \centering
    \includegraphics[width=\textwidth]{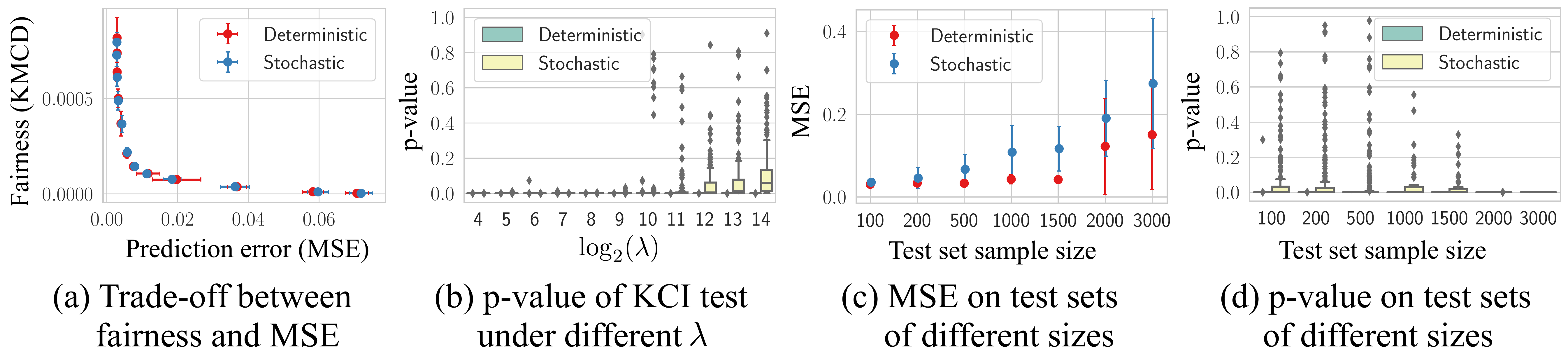}
    \caption{Illustration of the benefit of the stochastic prediction for the purpose of imposing Equalized Odds.
    The data is linear non-Gaussian, and the predictor is a neural network (nonlinear) regressor with deterministic or stochastic output.
    The green boxes in panel (b) and (d) are barely visible, meaning that p-values for deterministic prediction are always close to $0$.}
    \label{fig:sim_LiNG_stochastic}
\end{figure*}
\begin{figure*}[t]
    \centering
    \includegraphics[width=\textwidth]{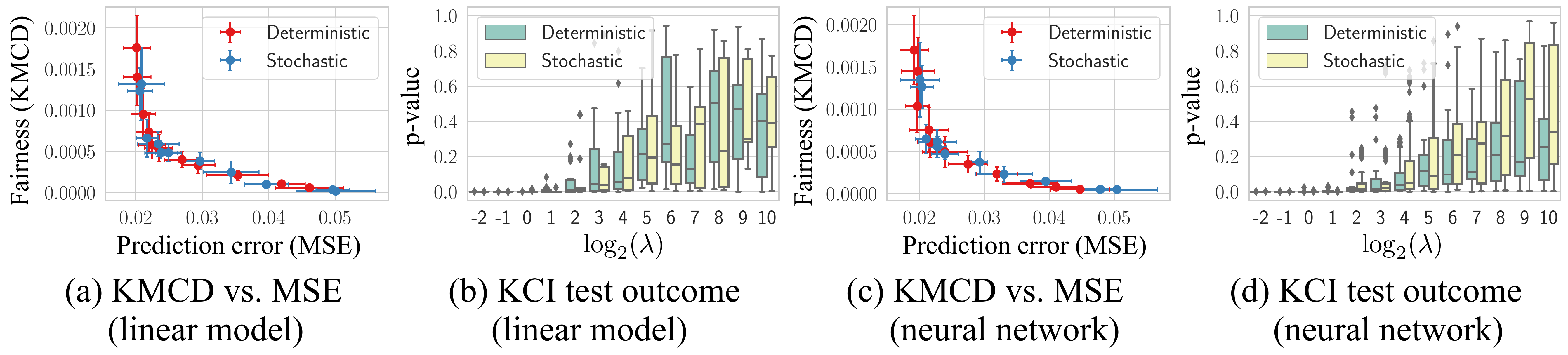}
    \caption{Results for regression with deterministic and stochastic prediction functions on the \textit{Communities and Crime} data set.
    The base model could be a linear regressor or a neural network regressor, and the stochastic prediction is achieved by introducing an independent noise input sampled from a standard Gaussian distribution.}
    \label{fig:continuous_exp_compound}
\end{figure*}

In Figure \ref{fig:continuous_exp_compound} we compare deterministic and stochastic regression on the \textit{Communities and Crime} data set.
Compared to Figure \ref{fig:sim_LiNG_stochastic}(b), although the green boxes (which correspond to the distribution of p-values for the deterministic predictor) are visible in Figure \ref{fig:continuous_exp_compound}(d), we can still observe that the distribution of p-values for the stochastic predictor dominates its counterpart for deterministic predictor, indicating a higher possibility of attaining Equalized Odds.
Compare the KCI test outcome of the linear stochastic model shown in panel (b) with that of the nonlinear stochastic model (implemented with the neural network) shown in panel (d), we can see that the distribution of p-values in panel (b) for stochastic prediction (with a linear base model) does not dominate its deterministic counterpart.
This is not surprising since if the deterministic part of the linear model does not satisfy Equalized Odds, adding an independent noise will not help impose fairness.

In Figure \ref{fig:result_eodds_grid}, we compare the performance under Equalized Odds of multiple methods proposed in the literature.
Although a probabilistic classification model is used for across each method (logistic regression here), if an algorithm outputs the class label where the prediction likelihood is maximized, the prediction is in essence deterministic.
Therefore, although here we are considering finite data cases, we can still anticipate a lower level of fairness violation with stochastic prediction.
This is validated by the numerical experiment:
while the post-processing approach by \cite{hardt2016equality} does not score the lowest test error, the violation of Equalized Odds is the lowest compared to other approaches.

\begin{figure*}[t]
    \centering
    \includegraphics[width=\textwidth]{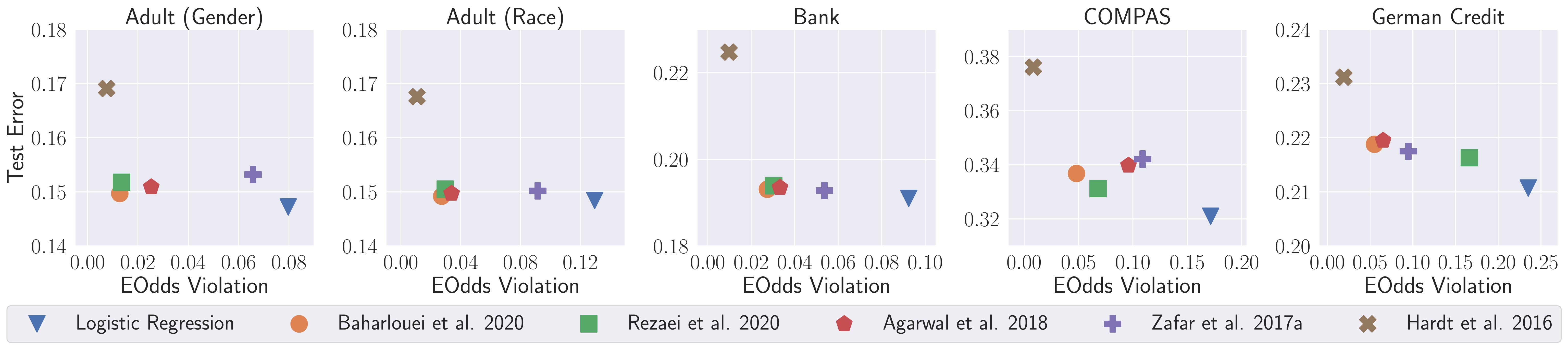}
    \caption{Experimental results for classification with Equalized Odds criterion.}
    \label{fig:result_eodds_grid}
\end{figure*}

\section{Further Discussions}
In this section, we first present an alternative way to perform post-processing for binary classification, and discuss how our results would apply to this specific post-processing strategy;
we then discuss the relation between our findings and previous theoretical results for Equalized Odds in the literature.

\subsection{Alternative Strategy for Post-processing}\label{sec:discussion_alternative_post_strategy}
When we discuss Equalized Odds fairness in binary classification, we present the attainability and optimality results by considering one particular type of post-processing approach proposed by \cite{hardt2016equality}, where $\Ypost$ is derived by first training $\Yhat$ without fairness constraints and then adjusting the output in a post-hoc way (also referred to as the ``two-step framework'' in \cite{woodworth2017learning}).
In the literature, the term ``post-processing'' is also used to describe the prediction mechanism where the classification result is derived from thresholding a non-dichotomized predictor, e.g., the Bayes optimal regressor $R$ \citep{hardt2016equality}.

If the classifier $\Ypost$ is derived from post-processing the non-dichotomized $R$ instead of the binary $\Yhat$, in general the conditional independence $\Ypost \indep Y \mid A, \Yhat$ no longer holds true since the non-dichotomized $R$ contains more information about $Y$ than $\Yhat$.
That being said, the attainability of Equalized Odds for post-processing is still guaranteed, in other words, the result of Theorem \ref{thm:weak_attainability} holds true even for this alternative post-processing strategy.
We can construct a proof similar to that for Theorem \ref{thm:stochastic_attainability} (the attainability result for regression tasks) but with a discrete domain $\Ycal$.
Specifically, the modifications of the proof include (1) changing integrations with respect to $\Ycal$ to summations, and (2) replacing conditional densities $p_{\Ytilde|AX}(\ytilde \mid a, x)$ with the corresponding conditional probabilities $P_{\Ytilde|AX}(\ytilde \mid a, x)$.
Besides, if we post-process the Bayes optimal regressor $R$ to derive $\Ypost$ for binary classification, the optimality of performance can also be guaranteed.
It is shown by \citet[Proposition 5.2]{hardt2016equality} that the Bayes optimal Equalized Odds binary classifier $\Ypost$ can be derived by post-processing the Bayes optimal regressor $R$.
The optimality of such classifiers does not contradict our result based on post-processing $\Yhat$
and the results can be viewed as complementary to each other.

\subsection{Discussion on Previous Theoretical Results for Equalized Odds in Binary Classification}\label{sec:discussion_previous_eodds_theory}
Apart from various methods to empirically impose Equalized Odds \citep{hardt2016equality,woodworth2017learning,agarwal2018reductions,mary2019fairness,baharlouei2020renyi,rezaei2020fairness,romano2020achieving}, there are also theoretical considerations with respect to this specific notion of fairness.
In particular, for binary classification tasks, \citet[Proposition 5.2]{hardt2016equality} show that if we have access to the Bayes optimal regressor $R$, we can derive the Bayes optimal Equalized Odds binary classifier by post-processing $R$;
\citet[Theorem 7]{woodworth2017learning} show that for a binary classifier hypothesis class, one can derive a guarantee regarding the statistical learnability of (if there exists) the best Equalized Odds predictor within the class using the two-step framework.

Previous theoretical results for Equalized Odds require strong assumptions: the result by \citet[Proposition 5.2]{hardt2016equality} requires the availability of the Bayes optimal regressor $R$; the concentration bounds (on both accuracy and fairness violation) demonstrated by \citet[Theorem 7]{woodworth2017learning} assume the existence of the best non-discriminatory binary classifier within the hypothesis class.
Instead of assuming the availability of Bayes optimal regressor (as compared to \citet[Proposition 5.2]{hardt2016equality}) or the existence of a perfect fair predictor (as compared to \citet[Theorem 7]{woodworth2017learning}), we give an affirmative answer to the necessary and sufficient condition under which Equalized Odds can possibly hold true (for deterministic predictors) and also a guarantee on the attainability of Equalized Odds under rather mild assumptions (for stochastic predictors).

Our theoretical analysis regarding the attainability of Equalized Odds is actually the missing piece in the puzzle, which, together with previous theoretical results by \citet[Proposition 5.2]{hardt2016equality} and \citet[Theorem 7]{woodworth2017learning}, gives us a much clearer picture regarding achieving Equalized Odds for binary classification:
the stochastic prediction scheme is preferable since the theoretical attainability guarantee for the perfect Equalized Odds in the large sample limit;
one can also characterize the price paid in terms of accuracy when using the stochastic prediction scheme by analyzing the bound for classification error and that for empirical fairness violation.
Furthermore, our attainability results apply not only to binary classification tasks, but also to regression tasks.\footnote{
    With modifications to the proof of Theorem \ref{thm:stochastic_attainability} (as discussed in Section \ref{sec:discussion_alternative_post_strategy}), one can actually also construct a proof and apply our attainability results to multiclass classification tasks.
}

\subsection{The Difficulty of Deriving Bounds for Accuracy and Fairness Violation in General Cases}
One limitation of our work is that we only consider the attainability of the perfect fairness and does not provide a lower bound of best achievable fairness violation when attainability is not guaranteed.
However, as illustrated in Theorem \ref{thm:continuous_condition} and Theorem \ref{thm:discrete_condition} (the necessary and sufficient conditions for Equalized Odds to hold true with the deterministic prediction scheme), the violation of Equalized Odds for deterministic predictors directly corresponds to the specific properties of the data distribution itself.
This indicates the difficulty of giving a tight universal distribution-free lower bound for fairness violation and accuracy in the large sample limit for deterministic predictors.
As a comparison, for the stochastic prediction scheme our results (Theorem \ref{thm:stochastic_attainability} and Theorem \ref{thm:weak_attainability}) show that in general we have theoretical attainability guarantees for Equalized Odds, i.e., a distribution-free lower bound (at zero) of fairness violation.
Only with this theoretical guanrantee can we apply the result by \citet[Theorem 7]{woodworth2017learning} and analyze concentration bounds for binary classification.\footnote{
    Otherwise, the assumption of the result by \citet[Theorem 7]{woodworth2017learning} that there exist such a non-trivial fair predictor in the hypothesis class might be violated.
    For example, if the hypothesis class contains only deterministic predictors, our results (Theorem \ref{thm:continuous_condition} and Theorem \ref{thm:discrete_condition}) indicate that this assumption is in general violated.
}

In fact, the difficulty of theoretically deriving (distribution-free or distribution-dependent) bounds in general cases, especially for tasks other than binary classification, stems from the following aspects: the quantification of fairness violation, and the exact mathematical form of the trade-off between accuracy and fairness.
To begin with, the quantification of fairness violation could be involved beyond binary classification.
For binary classification tasks, we can conveniently quantify Equalized Odds violation via TPR(s) and FPR(s) that directly link back to distribution specifics (e.g., the joint distribution of $(A, Y)$), but this is not the case for multiclass classification or regression tasks.
A natural choice of fairness violation quantification beyond binary classification is, for example, the (normalized) kernel measure of conditional dependence (KMCD).
However, it is not intuitive how to effectively find a preimage of the KMCD operator and translate the fairness violation in terms of distributional properties of the data.
Furthermore, although we can empirically draw trade-off curves, it is not transparent what would be the explicit mathematical expression of the trade-off (in the large sample limit) between accuracy and fairness violation.
The explicit expression of the trade-off is essential if we were to, for example, compare the (empirically) best possible deterministic predictor with its stochastic counterpart and derive a theoretical bound.

\subsection{Potential Social Impact of Stochastic Prediction}
Under the scope of the broader social impact, the stochastic prediction to impose a group-level fairness like Equalized Odds may subject to unwanted usage, which may result in deterministic (therefore in general unfair) prediction in the end.\footnote{
    Interested readers may find a practical example in Section \ref{sec:social_appendix} of the appendix.
}
The introduction of randomness in the prediction may also be subject to concerns, especially in high-stake decision-makings.
In some way, this potentially negative effect roots from the fact that Equalized Odds is a group-level fairness criterion, i.e., not fine-grained enough to characterize individual-level consequences.
In high-stakes scenarios when the decisions received by individuals matter, we should dedicate to more fine-grained fairness notions which go beyond group-level characterization of non-discrimination.

\section{Conclusion}
In this paper, we focus on Equalized Odds and consider the attainability of fairness, and furthermore, if it is attainable, the optimality of the prediction performance under various settings.
For both classification and regression tasks, we present necessary and sufficient conditions under which Equalized Odds can hold true with deterministic prediction functions;
we also present theoretical guarantees that under mild assumptions, one can always find a non-trivial stochastic predictor that satisfies Equalized Odds in the large sample limit, while in general fairness is not attainable with a deterministic predictor.


We hope that our results can incentivize further research into developing prediction strategies that come with theoretical guarantees.
Future work would naturally consider the attainability of more fine-grained (compared to group fairness) criteria of fairness (e.g., individual fairness) as well as fairness notions that focus on discrimination involved in the data generating process.

\acks{
    KZ would like to acknowledge the support by the National Institutes of Health (NIH) under Contract R01HL159805, by the NSF-Convergence Accelerator Track-D award \#2134901, and by the United States Air Force under Contract No. FA8650-17-C7715.
}

\bibliography{clear2022.bib}

\newpage
\appendix

\section{Proof for Theorems}\label{sec:proofs_appendix}
\subsection{Proof for Theorem \ref{thm:unattainability}}
To prove the unattainability of Equalized Odds in regression, we will need the following lemma, which provides a way to characterize conditional independence/dependence with conditional or joint distributions.
\begin{lemma}\label{lemma:factoring}
    Variables $V_1$ and $V_2$ are conditionally independent given variable $V_3$ if and only if there exist functions $h(v_1, v_3)$ and $g(v_2, v_3)$ such that
    \begin{equation}\label{equ:factoring}
        p_{V_1,V_2 \mid V_3}(v_1, v_2 \mid v_3 ) = h(v_1, v_3) \cdot g(v_2, v_3).
    \end{equation}
\end{lemma}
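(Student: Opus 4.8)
The plan is to prove both implications directly from the definition of conditional independence, working throughout at a fixed value $V_3 = v_3$ in the support of $V_3$ (so that $p_{V_1,V_2\mid V_3}(\cdot,\cdot\mid v_3)$ is a bona fide probability density; the discrete case is identical with integrals replaced by sums).

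For the ``only if'' direction, suppose $V_1 \indep V_2 \mid V_3$. By definition $p_{V_1,V_2\mid V_3}(v_1,v_2\mid v_3) = p_{V_1\mid V_3}(v_1\mid v_3)\, p_{V_2\mid V_3}(v_2\mid v_3)$, so it suffices to set $h(v_1,v_3) := p_{V_1\mid V_3}(v_1\mid v_3)$ and $g(v_2,v_3) := p_{V_2\mid V_3}(v_2\mid v_3)$, which have exactly the required form.

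For the ``if'' direction, suppose the factorization $p_{V_1,V_2\mid V_3}(v_1,v_2\mid v_3) = h(v_1,v_3)\,g(v_2,v_3)$ holds. I would recover the two conditional marginals by integrating out one variable at a time:
\[
p_{V_1\mid V_3}(v_1\mid v_3) = h(v_1,v_3)\, G(v_3), \qquad
p_{V_2\mid V_3}(v_2\mid v_3) = g(v_2,v_3)\, H(v_3),
\]
where $G(v_3) := \int g(v_2,v_3)\,dv_2$ and $H(v_3) := \int h(v_1,v_3)\,dv_1$. Integrating the joint conditional density over both arguments and using that it normalizes to $1$ gives $H(v_3)\,G(v_3) = 1$. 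Multiplying the two displayed marginals and substituting $H(v_3)G(v_3)=1$ then yields
\[
p_{V_1\mid V_3}(v_1\mid v_3)\, p_{V_2\mid V_3}(v_2\mid v_3) = h(v_1,v_3)\,g(v_2,v_3) = p_{V_1,V_2\mid V_3}(v_1,v_2\mid v_3),
\]
which is precisely $V_1 \indep V_2 \mid V_3$.

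The only real care needed --- the ``main obstacle,'' such as it is --- is measure-theoretic bookkeeping: one must argue that $G(v_3)$ and $H(v_3)$ are finite and strictly positive (so the identification above is legitimate) for $v_3$ in the support, which follows from the normalization identity $H(v_3)G(v_3)=1$ together with nonnegativity of densities; and one should note that $h$ and $g$ are only pinned down up to a $v_3$-dependent reciprocal rescaling $h\mapsto h/\kappa(v_3)$, $g\mapsto g\,\kappa(v_3)$, which is harmless since the argument never relies on a particular normalization of $h$ or $g$. Everything else is a one-line computation.
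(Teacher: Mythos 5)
Your proof is correct and follows essentially the same route as the paper's: the forward direction is immediate from the definition, and the converse integrates out each variable to express the conditional marginals as $hG$ and $gH$, derives $H(v_3)G(v_3)=1$ from normalization, and multiplies. The only (cosmetic) difference is that you obtain $HG=1$ by integrating the factorization over both arguments directly, while the paper first multiplies the two marginal identities and then integrates; the substance is identical.
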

\begin{proof}
    First, if $V_1$ and $V_2$ are conditionally independent given variable $V_3$, then Equation \ref{equ:factoring} holds:
    \begin{equation*}
        p_{V_1,V_2 \mid V_3}(v_1, v_2  \mid  v_3 ) = p_{V_1 \mid V_3}(v_1  \mid  v_3 )\cdot p_{V_2 \mid V_3}(v_2  \mid  v_3 ).
    \end{equation*}
    We then let $\tilde{h}(v_3) := \int h(v_1, v_3) dv_1$ and $\tilde{g}(v_3) := \int g(v_2, v_3) dv_2$.
    Take the integral of Equation \ref{equ:factoring} w.r.t. $v_1$ and $v_2$, we have:
    \begin{equation*}
        \begin{split}
            p_{V_2 \mid V_3}(v_2  \mid  v_3 ) &= \tilde{h}(v_3) \cdot g(v_2, v_3), \\
            p_{V_1 \mid V_3}(v_1  \mid  v_3 ) &= \tilde{g}(v_3) \cdot h(v_1, v_3),
        \end{split}
    \end{equation*}
    respectively.
    Bearing in mind Equation \ref{equ:factoring}, one can see that the product of the two equations above is
    \begin{equation*}
        \begin{split}
            &p_{V_2 \mid V_3}(v_2  \mid  v_3 )\cdot  p_{V_1 \mid V_3}(v_1  \mid  v_3 ) \\
            =~ &\tilde{h}(v_3) \cdot g(v_2, v_3) \cdot \tilde{g}(v_3) \cdot h(v_2, v_3) \\
            =~ &\tilde{h}(v_3) \cdot \tilde{g}(v_3) \cdot p_{V_1,V_2 \mid V_3}(v_1, v_2  \mid  v_3 ).
        \end{split}
    \end{equation*}
    Take the integral of the equation above w.r.t. $v_1$ and $v_2$ gives $\tilde{h}(v_3) \cdot \tilde{g}(v_3) \equiv 1$. The above equation then reduces to
    \begin{equation*}
        p_{V_2 \mid V_3}(v_2  \mid  v_3 )\cdot  p_{V_1 \mid V_3}(v_1  \mid  v_3 )
        =  p_{V_1,V_2 \mid V_3}(v_1, v_2  \mid  v_3 ).
    \end{equation*}
    That is, $V_1$ and $V_2$ are conditionally independent given $V_3$.
\end{proof}

Now we are ready to prove the unattainability of Equalized Odds in linear non-Gaussian regression.
Recall that the data is generated as follows ($H$ is not measured in the dataset):
\begin{equation}\label{equ:continuous_exp_appendix}
    \begin{split}
        &X = q A + E_X, \\
        &H = b A + E_H, \\
        &Y = c X + d H + E_Y,
    \end{split}
\end{equation}
where $(A, E_X, E_H, E_Y)$ are mutually independent and $q, b, c, d$ are constants.
\begin{thm}
    (\textbf{Unattainability of Equalized Odds in the Linear Non-Gaussian Case})
    \newline
    Assume that $X$ has a causal influence on $Y$, i.e., $c\neq 0$ in Equation \ref{equ:continuous_exp_appendix}, and that $A$ and $Y$ are not independent, i.e., $qc + bd \neq 0$.
    Assume  $p_{E_X}$ and $p_{E}$ are positive on $\Rbb$.
    Let $f_1 := \log p_A$, $f_2 := \log p_{E_X}$, and $f_3 := \log p_{E}$.
    Further assume that $f_2$ and $f_3$ are third-order differentiable.
    Then if at most one of $E_X$ and $E$ is Gaussian, $\Yhat$ is always conditionally dependent on $A$ given $Y$.
\end{thm}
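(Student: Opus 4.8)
The plan is to convert the conditional-independence requirement $\Yhat \indep A \mid Y$ into a functional equation for $f_2$ and $f_3$, and then exploit the fact that $p_{E_X}$ and $p_{E}$ are honest (integrable) densities to rule out every possibility except both being Gaussian. First I would substitute $X = qA + E_X$ and $H = bA + E_H$ into $Y = cX + dH + E_Y$ and, recalling $E := E_Y + dE_H$, obtain $Y = \mu A + cE_X + E$ with $\mu := qc + bd$, while $\Yhat = \alpha A + \beta X = \tilde\alpha A + \beta E_X$ with $\tilde\alpha := \alpha + q\beta$; by hypothesis $c \neq 0$, $\mu \neq 0$, $\beta \neq 0$, and $A, E_X, E$ are mutually independent. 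Since $(A, E_X, E) \mapsto (A, \Yhat, Y)$ is an invertible affine map with constant Jacobian, wherever the densities are positive one has
\[
\log p_{\Yhat, A, Y}(\yhat, a, y) = \text{const} + f_1(a) + f_2\!\Big(\frac{\yhat - \tilde\alpha a}{\beta}\Big) + f_3\!\Big(y - \frac{c}{\beta}\yhat + \nu a\Big), \qquad \nu := \frac{c\tilde\alpha}{\beta} - \mu .
\]

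Next I would apply Lemma~\ref{lemma:factoring}: $\Yhat \indep A \mid Y$ holds iff $p_{\Yhat, A, Y}$ factors as a function of $(\yhat, y)$ times a function of $(a, y)$, which, the relevant part of the support being a product region on which the density is positive, is equivalent to $\partial^2 \log p_{\Yhat, A, Y} / \partial\yhat\,\partial a \equiv 0$. The term $f_1(a)$ plays no role, and a short chain-rule computation reduces this to
\[
\frac{\tilde\alpha}{\beta}\, f_2''\!\Big(\frac{\yhat - \tilde\alpha a}{\beta}\Big) + c\nu\, f_3''\!\Big(y - \frac{c}{\beta}\yhat + \nu a\Big) = 0 .
\]
Now, for any fixed $y$, the linear change of variables $(\yhat, a) \mapsto (u, v)$ with $u = (\yhat - \tilde\alpha a)/\beta$ and $v = y - (c/\beta)\yhat + \nu a$ has Jacobian determinant $-\mu/\beta \neq 0$ — this is precisely where the assumption $A \nindep Y$ (i.e.\ $\mu \neq 0$) enters — so it maps $\Rbb^2$ bijectively onto $\Rbb^2$, and the identity becomes $\tfrac{\tilde\alpha}{\beta} f_2''(u) + c\nu\, f_3''(v) = 0$ for \emph{all} $u, v \in \Rbb$.

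To conclude, differentiate this identity in $y$ (here the assumed third-order differentiability is used) to get $c\nu\, f_3''' \equiv 0$, and then in $\yhat$ to get $\tilde\alpha\, f_2''' \equiv 0$. If $\tilde\alpha \neq 0$ and $\nu \neq 0$ (recall $c \neq 0$), then $f_2$ and $f_3$ are polynomials of degree at most two, so $p_{E_X}$ and $p_{E}$ are exponentials of quadratics; integrability on $\Rbb$ forces both leading coefficients to be strictly negative, i.e.\ $E_X$ and $E$ are both Gaussian. If instead $\tilde\alpha = 0$, then $\nu = -\mu \neq 0$ and the identity immediately forces $f_3'' \equiv 0$; if $\tilde\alpha \neq 0$ but $\nu = 0$, it forces $f_2'' \equiv 0$ — in either case one of the log-densities would be affine on $\Rbb$, impossible for an integrable density, so conditional independence fails outright. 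Hence $\Yhat \indep A \mid Y$ can hold only if \emph{both} $E_X$ and $E$ are Gaussian; contrapositively, if at most one of them is Gaussian, $\Yhat$ is conditionally dependent on $A$ given $Y$, as claimed.

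The part I expect to be the main obstacle is the middle step: carrying the two affine substitutions through the cross partial cleanly, and verifying that its Jacobian is nonzero precisely under the stated non-degeneracy hypotheses, since that is what upgrades the functional equation from ``on the image of the substitution'' to ``on all of $\Rbb^2$'', which is what makes the differentiation argument bite. A minor technical point is justifying the ``vanishing cross partial $\Leftrightarrow$ factorization'' equivalence when $A$ is not assumed to have full support (work on the interior of $\mathrm{supp}(A)$; the argument is unaffected because $u$ and $v$ already range over all of $\Rbb$ as $\yhat$ varies with $a$ held in any sub-interval).
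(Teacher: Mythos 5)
Your proposal is correct and follows essentially the same route as the paper's proof: the affine change of variables to write $\log p_{A,\Yhat,Y}$ as $f_1 + f_2 + f_3$ of linear arguments, Lemma~\ref{lemma:factoring} to reduce conditional independence to a vanishing cross partial $\partial^2 J/\partial a\,\partial\yhat \equiv 0$, differentiation to force $f_2''' \equiv 0$ and $f_3''' \equiv 0$, and integrability of the densities to dispose of the degenerate cases ($\tilde\alpha = 0$ or $\nu = 0$, corresponding to the paper's $\tilde r = 0$ case) before concluding both $E_X$ and $E$ must be Gaussian. Your explicit check that the map $(\yhat,a)\mapsto(u,v)$ has nonzero Jacobian $-\mu/\beta$ precisely because $qc+bd\neq 0$, so that the functional equation holds for independent $u,v$ on all of $\Rbb^2$, is a welcome clarification of a step the paper leaves implicit.
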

\begin{proof}
    According to Equation \ref{equ:continuous_exp_appendix}, we have
    \begin{equation}\label{equ:jacobian}
        \left[
            \begin{array}{cc}
                A \\
                \Yhat \\
                Y
            \end{array}
            \right] =
        \left[
            \begin{array}{ccc}
                1               & 0     & 0 \\
                \alpha + q\beta & \beta & 0 \\
                qc+bd           & c     & 1
            \end{array}
            \right] \cdot
        \left[
            \begin{array}{cc}
                A   \\
                E_X \\
                E
            \end{array}
            \right].
    \end{equation}
    The determinant of the above linear transformation is $\beta$, which relates the probability density function of the variables on the LHS and that of the variables on the RHS of the equation.
    Therefore, according to Equation \ref{equ:jacobian}, we can rewrite the joint probability density function by making use of the Jacobian determinant and factor the joint density into marginal density functions ($A$, $E_X$, $E$ are mutually independent according to the data generating process).
    Further let
    \begin{equation}\label{equ:parameter_rename}
        \tilde{\alpha}:= \frac{\alpha + q\beta}{\beta},~~\tilde{r}:= bd - \frac{c\alpha}{\beta},~~\text{and}~~\tilde{c} := \frac{c}{\beta}.
    \end{equation}
    Then we have $E_X = \frac{1}{\beta} \Yhat - \tilde{\alpha}A$, $E = Y - \tilde{r}A - \tilde{c} \Yhat$, and
    \begin{equation*}
        \begin{split}
            &p_{A, \Yhat, Y}(a, \yhat, y) \\
            = ~&p_{A,E_X,E}(a,e_x,e)/|\beta| \\
            = ~&\frac{1}{|\beta|}p_{A}(a) p_{E_X}(e_x) p_E(e) \\
            = ~&\frac{1}{|\beta|}p_{A}(a) p_{E_X}(\frac{1}{\beta}t - \tilde{\alpha}a) p_E(y - \tilde{r}a - \tilde{c} \yhat).
        \end{split}
    \end{equation*}
    On its support, the $\log$-density can be written as
    \begin{equation}\label{equ:J}
        \begin{split}
            J &:= \log p_{A, \Yhat ,Y}(a, \yhat, y) \\
            &= \log p_{A}(a) + \log p_{E_X}(\frac{1}{\beta} \yhat - \tilde{\alpha}a)  \\
            & ~~~~~ + \log p_E(y - \tilde{r}a - \tilde{c} \yhat) -log|\beta| \\
            & = f_1(a) + f_2(\frac{1}{\beta} \yhat - \tilde{\alpha}a) \\
            & ~~~~~ + f_3(y - \tilde{r}a - \tilde{c} \yhat) - \log |\beta|.
        \end{split}
    \end{equation}

    \noindent
    According to Lemma \ref{lemma:factoring}, $A \indep \Yhat  \mid  Y$ if and only if $p_{A, \Yhat \mid Y}(a, \yhat \mid y)$ is a product of a function of $a$ and $y$ and a function of $\yhat$ and $y$.
    $p_{A, \Yhat, Y}(a, \yhat, y)$ is further a product of the above function and a function of only $y$.
    This property, under the conditions in Theorem \ref{thm:unattainability}, is equivalent to the constraint
    \begin{equation}\label{equ:equivalence_condition}
        \frac{\partial^2 J}{\partial A \partial \Yhat} \equiv 0.
    \end{equation}

    \noindent
    According to Equation \ref{equ:J}, we have
    \begin{equation*}
        \frac{\partial J}{\partial \yhat}  = \frac{1}{\beta}\cdot f_2'(\frac{1}{\beta} \yhat - \tilde{\alpha}a) - \tilde{c} \cdot f_3'(y - \tilde{r}a - \tilde{c} \yhat),
    \end{equation*}
    and therefore
    \begin{equation}\label{equ:dJ_daz}
        \frac{\partial^2 J}{\partial a\partial \yhat}  = -\frac{\tilde{\alpha}}{\beta} \cdot  f_2''(\frac{1}{\beta}\yhat - \tilde{\alpha}a)
        + \tilde{r}\tilde{c}  \cdot f_3''(y - \tilde{r}a - \tilde{c}\yhat).
    \end{equation}

    \noindent
    Combining Equations \ref{equ:equivalence_condition} and \ref{equ:dJ_daz} gives
    \begin{equation}\label{equ:final_condition_form}
        \tilde{r}\tilde{c}  \cdot f_3''(y - \tilde{r}a - \tilde{c}\yhat) = \frac{\tilde{\alpha}}{\beta} \cdot  f_2''(\frac{1}{\beta}\yhat - \tilde{\alpha}a).
    \end{equation}

    \noindent
    Further taking the partial derivative of both sides of the above equation w.r.t. $y$ yields
    \begin{equation}\label{equ:final_condition_form_derivative}
        \tilde{r}\tilde{c}  \cdot f_3'''(y - \tilde{r}a - \tilde{c}\yhat)  \equiv 0.
    \end{equation}

    \noindent
    There are three possible situations where the above equation holds:
    \begin{enumerate}[label=(\roman*)]
        \item $\tilde{c} = 0$, which is equivalent to $c = 0$ and contradicts with the theorem assumption.
        \item $\tilde{r} = 0$.
              Then according to Equation \ref{equ:final_condition_form}, we have $\frac{\tilde{\alpha}}{\beta} \cdot  f_2''(\frac{1}{\beta} \yhat - \tilde{\alpha}a) \equiv 0$, implies either $\tilde{\alpha} = 0$ or $f_2''(\frac{1}{\beta}\yhat - \tilde{\alpha}a) \equiv 0$.
              If the latter is the case, then $f_2$ is a linear function and, accordingly, $\exp(f_2)$ is not integrable and does not correspond to any valid density function.
              If the former is true, i.e., $\tilde{\alpha} = 0$, then according to Equation \ref{equ:parameter_rename}, we have $\alpha = -q\beta$, which further implies $\tilde{r} = bd - \frac{c \alpha}{\beta} = bd + qc$.
              Therefore, in this situation, $bd + qc = 0$, which again contradicts with the theorem assumption.
        \item $f_3'''(y - \tilde{r}a - \tilde{c}\yhat) \equiv 0.$
              That is, $f_3$ is a quadratic function with a nonzero coefficient for the quadratic term (otherwise $f_3$ does not correspond to the logarithm of any valid density function).
              Thus $E$ follows a Gaussian distribution.
    \end{enumerate}

    \noindent
    Only situation (iii) is possible, i.e., $\tilde{r}\tilde{c} \neq 0$ and $E$ follows a Gaussian distribution.
    This further tells us that the RHS of Equation \ref{equ:final_condition_form} is a nonzero constant.
    Hence $f_2$ is a quadratic function and $E_X$ also follows a Gaussian distribution.
    Therefore if $A\indep \Yhat \mid  Y$ were to be true, then $E_X$ and $E$ are both Gaussian.
    Its  contrapositive gives the conclusion of this theorem.
\end{proof}

\begin{corollary}\label{thm:gaussian_coefficients_appendix}
    Suppose that both $E_X$ and $E$ are Gaussian, with variances $\sigma_{E_X}^2$ and $\sigma_E^2$, respectively. (The protected feature $A$ is not necessarily Gaussian.) Then $\Yhat \indep A   \mid  Y$ if and only if
    \begin{equation}\label{equ:appendix_gaussian_coefficient}
        \frac{\alpha}{\beta} = \frac{bdc \cdot \sigma_{E_X}^2 - q \cdot  \sigma_{E}^2}{c^2 \cdot  \sigma_{E_X}^2 + \sigma_{E}^2}.
    \end{equation}
\end{corollary}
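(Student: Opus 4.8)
The plan is to re-use the computation from the proof of Theorem~\ref{thm:unattainability} verbatim up to Equation~\ref{equ:final_condition_form}, and then observe that in the Gaussian case that equation degenerates from a functional identity into a single scalar equation in the coefficients. Recall that with $J := \log p_{A,\Yhat,Y}$ as in Equation~\ref{equ:J}, Lemma~\ref{lemma:factoring} together with the positivity of $p_{E_X}$ and $p_E$ on $\Rbb$ (so that $J$ is well-defined and smooth on all of $\Rbb^3$) yields the chain: $\Yhat \indep A \mid Y$ iff $J(a,\yhat,y) = P(a,y) + R(\yhat,y)$ for some functions $P,R$, iff $\partial^2 J/\partial A\partial \Yhat \equiv 0$, iff Equation~\ref{equ:final_condition_form} holds,
\[
    \tilde r\tilde c\, f_3''\big(y - \tilde r a - \tilde c\yhat\big) \;=\; \frac{\tilde\alpha}{\beta}\, f_2''\big(\tfrac{1}{\beta}\, \yhat - \tilde\alpha a\big).
\]
This chain of equivalences nowhere used non-Gaussianity; it needs only the positivity of the densities and $f_2,f_3\in C^2$, both of which hold here. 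Note also that $f_1 = \log p_A$ never enters, since $\partial/\partial\Yhat$ annihilates it, which is precisely why $A$ may be arbitrary.

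First I would substitute the Gaussian forms: for $E_X\sim\mathcal{N}(0,\sigma_{E_X}^2)$ and $E\sim\mathcal{N}(0,\sigma_E^2)$, $f_2$ and $f_3$ are quadratics, so $f_2''\equiv -1/\sigma_{E_X}^2$ and $f_3''\equiv -1/\sigma_E^2$ are \emph{constants}. Hence the displayed identity no longer depends on $a,\yhat,y$ and reduces to
\[
    \frac{\tilde r\tilde c}{\sigma_E^2} \;=\; \frac{\tilde\alpha}{\beta\,\sigma_{E_X}^2},
\]
so that $\Yhat\indep A\mid Y$ holds if and only if this single equation holds.

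Next I would unfold the reparametrization of Equation~\ref{equ:parameter_rename}, namely $\tilde\alpha=(\alpha+q\beta)/\beta$, $\tilde r = bd - c\alpha/\beta$, $\tilde c = c/\beta$, and clear denominators (multiplying through by $\beta^2\sigma_E^2\sigma_{E_X}^2$) to obtain the linear relation
\[
    \sigma_{E_X}^2\big(bdc\,\beta - c^2\alpha\big) \;=\; \sigma_E^2\big(\alpha + q\beta\big).
\]
Collecting the $\alpha$-terms gives $\alpha\,(c^2\sigma_{E_X}^2 + \sigma_E^2) = \beta\,(bdc\,\sigma_{E_X}^2 - q\,\sigma_E^2)$, and since $\sigma_E^2>0$ the coefficient of $\alpha$ is strictly positive, so dividing yields exactly Equation~\ref{equ:appendix_gaussian_coefficient}.

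There is no genuinely hard step here; the one point to state with care is that the ``$\partial^2 J/\partial A\partial\Yhat\equiv 0 \iff$ conditional independence'' equivalence established inside the proof of Theorem~\ref{thm:unattainability} survives into the fully-Gaussian regime in \emph{both} directions, which I would dispatch by the remark above that the argument there relied only on positivity of the densities on $\Rbb$ and $C^2$-regularity of $f_2,f_3$. Everything afterwards is the one-line algebra of solving a linear equation for $\alpha/\beta$, which I would present in a single display rather than grind through.
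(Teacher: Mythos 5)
Your proposal is correct and follows essentially the same route as the paper: both reduce the conditional independence to Equation~\ref{equ:final_condition_form} via the argument in the proof of Theorem~\ref{thm:unattainability}, substitute the constant second derivatives $f_2''=-1/\sigma_{E_X}^2$ and $f_3''=-1/\sigma_E^2$, and unwind the reparametrization of Equation~\ref{equ:parameter_rename} to solve for $\alpha/\beta$. The paper states this in one sentence; your version merely spells out the algebra and the (correct) observation that the equivalence chain needs only positivity and sufficient smoothness of the densities, not non-Gaussianity.
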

\begin{proof}
    Under the condition that $E_X$ and $E$ are Gaussian, their $\log$-density functions are third-order differentiable. Then according to the proof of Theorem \ref{thm:unattainability}, the Equalized Odds condition $A \indep \Yhat \mid  Y$ is equivalent to Equation \ref{equ:final_condition_form}, which, together with Equation \ref{equ:parameter_rename} as well as the fact that $f_2'' = -\frac{1}{\sigma_{E_X}^2}$ and $f_3'' = -\frac{1}{\sigma_{E}^2}$, yields Equation \ref{equ:appendix_gaussian_coefficient}.
\end{proof}

\subsection{Proof for Theorem \ref{thm:continuous_condition}}
\begin{thm}\label{thm:continuous_condition_appendix}
    (\textbf{Condition to Achieve Equalized Odds for Regression with Deterministic Prediction})
    \newline
    Assume that the protected feature $A$ and the continuous target variable $Y$ are dependent and that their joint probability density $p(A, Y)$ is positive for every combination of possible values of $A$ and $Y$.
    Further assume that $Y$ is not fully determined by $A$, and that there are additional features $X$ that are not independent of $Y$.
    Let the prediction $\Yhat$ be characterized by a deterministic function $f: \Acal \times \Xcal \rightarrow \Ycal$.
    Equalized Odds holds true if and only if the following condition is satisfied ($\delta(\cdot)$ is the delta function):
    \begin{equation*}
        \begin{split}
            & \forall y, \yhat \in \Ycal, ~ \forall a, a' \in \Acal, a \neq a':  ~
            Q(a, y, \yhat) = Q(a', y, \yhat), \\
            &\text{where} ~~
            Q(a, y, \yhat) \overset{\triangle}{=}
            \int_{\Xcal} \delta\big(\yhat - f(a, x)\big)
            p_{X|AY}(x | a, y) dx.
        \end{split}
    \end{equation*}
\end{thm}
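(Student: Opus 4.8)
The plan is to reduce Equalized Odds to a statement purely about the conditional law of $\Yhat$ given $(A,Y)$, and then to recognize $Q$ as exactly (the density of) that law. First I would note that since $\Yhat=f(A,X)$ is a deterministic function of $(A,X)$, conditioning on $A=a$ turns $\Yhat$ into the deterministic map $x\mapsto f(a,x)$ of $X$ alone; conditioning further on $Y=y$, the variable $X$ is distributed according to $p_{X|AY}(\cdot\mid a,y)$. Hence the conditional distribution of $\Yhat$ given $(A,Y)=(a,y)$ is precisely the pushforward of $p_{X|AY}(\cdot\mid a,y)$ under $x\mapsto f(a,x)$. Writing this pushforward via the (generalized) coarea formula gives
\[
p_{\Yhat|AY}(\yhat\mid a,y)\;=\;\int_{\Xcal}\delta\big(\yhat-f(a,x)\big)\,p_{X|AY}(x\mid a,y)\,dx\;=\;Q(a,y,\yhat),
\]
so $Q(a,y,\cdot)$ is just the conditional density (read in the appropriate distributional sense) of $\Yhat$ given $(A,Y)=(a,y)$, and in particular $\int_{\Ycal}Q(a,y,\yhat)\,d\yhat=1$ for every $(a,y)$.

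The second step is the equivalence itself. By definition Equalized Odds is $\Yhat\indep A\mid Y$, i.e.\ $p_{\Yhat,A|Y}(\yhat,a\mid y)=p_{\Yhat|Y}(\yhat\mid y)\,p_{A|Y}(a\mid y)$. Since $p(A,Y)$ is assumed positive everywhere, $p_{A|Y}(a\mid y)>0$ for every $(a,y)$, so this conditional independence is equivalent to $p_{\Yhat|AY}(\yhat\mid a,y)$ being the same for every $a$ (its common value is then forced to be $p_{\Yhat|Y}(\yhat\mid y)$, recovered by integrating out $a$ against $p_{A|Y}$). Combining with the identification from the first step, $\Yhat\indep A\mid Y$ holds if and only if $Q(a,y,\yhat)=Q(a',y,\yhat)$ for all $y,\yhat\in\Ycal$ and all $a\neq a'\in\Acal$, which is exactly the stated condition. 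One can equivalently route this through Lemma \ref{lemma:factoring}: since $p_{A,\Yhat|Y}(a,\yhat\mid y)=Q(a,y,\yhat)\,p_{A|Y}(a\mid y)$, this factors as a function of $(a,y)$ times a function of $(\yhat,y)$ precisely when $Q$ is $a$-free (using that $Q(a,y,\cdot)$ integrates to $1$ to pin down the $a$-dependent constant). The standing hypotheses ($A\nindep Y$, $Y$ not determined by $A$, $X\nindep Y$) serve only to exclude degenerate/trivial cases and to guarantee the relevant conditional densities are well defined; they play no further role in the equivalence.

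The part I expect to be the main obstacle is making the $\delta$-function manipulation rigorous for an arbitrary measurable $f$: $f(a,\cdot)$ may be many-to-one, non-differentiable, or locally constant (so $\Yhat$ has atoms and no genuine density), and in those cases the symbol $\int_{\Xcal}\delta(\yhat-f(a,x))\,p_{X|AY}(x\mid a,y)\,dx$ must be interpreted as the Radon--Nikodym object of the pushforward measure $\big(f(a,\cdot)\big)_{*}\,p_{X|AY}(\cdot\mid a,y)$, not as a pointwise Riemann integral. The clean way to avoid any smoothness assumptions is to test against bounded measurable $\phi:\Ycal\to\Rbb$: one has $\Ebb\big[\phi(\Yhat)\mid A=a,Y=y\big]=\int_{\Xcal}\phi\big(f(a,x)\big)\,p_{X|AY}(x\mid a,y)\,dx$, and Equalized Odds is equivalent to the right-hand side being independent of $a$ for every such $\phi$, which is in turn equivalent to equality of the pushforward measures across values of $a$ — exactly what ``$Q(a,y,\cdot)=Q(a',y,\cdot)$'' encodes. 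I would carry the argument out in this test-function form and only at the end translate back into the $\delta$-notation used in the theorem statement.
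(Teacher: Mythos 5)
Your proposal is correct and its overall structure coincides with the paper's: both reduce Equalized Odds, via the everywhere-positivity of $p(A,Y)$, to the statement that the conditional law of $\Yhat$ given $(A,Y)=(a,y)$ does not depend on $a$, and both identify that conditional law with $Q(a,y,\cdot)$. Where you genuinely diverge is in how that central identity $p_{\Yhat|AY}(\yhat\mid a,y)=\int_{\Xcal}\delta\big(\yhat-f(a,x)\big)\,p_{X|AY}(x\mid a,y)\,dx$ is justified. The paper introduces an auxiliary degenerate variable $V\equiv 0$ with ``density'' $\delta(v)$, augments $(V,A,X,Y)\mapsto(V+f(A,X),A,X,Y)$, and computes a Jacobian; along the way it asserts that $f$ is ``an injective mapping from $\Acal\times\Xcal$ to $\Ycal$,'' which is false for a generic deterministic predictor and is not actually needed, and the Jacobian step tacitly assumes $f$ is differentiable. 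You instead read $Q(a,y,\cdot)$ directly as the pushforward of $p_{X|AY}(\cdot\mid a,y)$ under $x\mapsto f(a,x)$ and propose to verify the $a$-invariance by testing against bounded measurable $\phi$, i.e.\ $\Ebb[\phi(\Yhat)\mid A=a,Y=y]=\int_{\Xcal}\phi(f(a,x))\,p_{X|AY}(x\mid a,y)\,dx$. This buys you a proof valid for arbitrary measurable $f$ (many-to-one, non-smooth, locally constant, with $\Yhat$ possibly having atoms), at the cost of having to interpret the theorem's $\delta$-integral as a Radon--Nikodym object rather than a literal density; the paper's route stays closer to the $\delta$-notation in the statement but only makes sense under unstated regularity of $f$. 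The equivalence step itself (forward direction by noting the right-hand side of the Equalized Odds identity is $a$-free; converse by integrating $Q$ against $p_{A|Y}(\cdot\mid y)$, or equivalently via the factorization lemma) is identical in both arguments, and your observation that the remaining standing hypotheses only rule out degeneracies matches the role they play in the paper.
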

\begin{proof}
    Without loss of generality, let us assume that $A$ and $X$ are continuous.
    By Lemma \ref{lemma:factoring}, the Equalized Odds criterion can be written into terms of the conditional probability density functions:
    \begin{equation}\label{equ:continuous_eodds_prob}
        \forall y, \yhat \in \Ycal, ~ a \in \Acal: ~
        p_{\Yhat | A Y}(\yhat | a, y) = p_{\Yhat | Y}(\yhat | y)
    \end{equation}

    Since $\Yhat = f(A, X)$ where $f$ is a deterministic function of $(A, X)$, $f$ is an injective mapping from $\Acal \times \Xcal$ to $\Ycal$.
    One can derive the joint probability density of $(\Yhat, A, X, Y)$ by making use of the change of variables.
    Define the mapping $\Hcal_f$ as following:
    \begin{equation*}
        \Hcal_f \left(
        \begin{array}{c}
            V \\
            A \\
            X \\
            Y
        \end{array} \right) = \left(
        \begin{array}{c}
            V + f(A, X) \\
            A \\
            X \\
            Y
        \end{array} \right),
    \end{equation*}
    where $V$ is a constant $0$ whose probability density function is $\delta(v)$.
    Notice that $\Hcal$ is bijective:
    \begin{equation*}
        \Hcal^{-1}_f \left(
        \begin{array}{c}
            \Yhat \\
            A \\
            X \\
            Y
        \end{array} \right) = \left(
        \begin{array}{c}
            \Yhat - f(A, X) \\
            A \\
            X \\
            Y
        \end{array} \right),
    \end{equation*}
    with the Jacobian matrix
    \begin{equation*}
        J(\Hcal^{-1}_f) = \begin{bmatrix}
           1 & - \frac{\partial f}{\partial A} & - \frac{\partial f}{\partial X} & 0 \\
           0 & 1 & 0 & 0 \\
           0 & 0 & 1 & 0 \\
           0 & 0 & 0 & 1
        \end{bmatrix}.
    \end{equation*}
    Therefore by making use of the change of variable technique, we have:
    \begin{equation*}
        \begin{split}
            p_{\Yhat A X Y}(\yhat, a, x, y) &= p_{V A X Y}(v, a, x, y) \lvert J(\Hcal^{-1}_f) \rvert \\
            &= \delta\big( \yhat - f(a, x) \big) p_{A X Y}(a, x, y).
        \end{split}
    \end{equation*}

    Expand the LHS of Equation \ref{equ:continuous_eodds_prob}, we have:
    \begin{equation*}
        \begin{split}
            p_{\Yhat | A Y}(\yhat | a, y)
            &= \int_{\Xcal} p_{\Yhat X | A Y}(\yhat, x | a, y) dx \\
            &= \int_{\Xcal} \delta\big( \yhat - f(a, x) \big) p_{X | A Y}(x | a, y) dx \\
            &:= Q(a, y, \yhat).
        \end{split}
    \end{equation*}

    Since Equalized Odds holds true if and only if Equation \ref{equ:continuous_eodds_prob} holds true, the LHS of the equation does not involve $a$ (as in RHS of the equation), i.e., $Q(a, y, \yhat)$ does not change with $a$.
    Then, rewrite the RHS of Equation \ref{equ:continuous_eodds_prob}, we have:
    \begin{equation*}
        \begin{split}
            p_{\Yhat | Y}(\yhat | y)
            &= \int_{\Acal} p_{\Yhat | A Y}(\yhat | a, y) p_{A | Y}(a | y) da \\
            &= \int_{\Acal} Q(a, y, \yhat) p_{A | Y}(a | y) da \\
            &= Q(a, y, \yhat) \int_{\Acal} p_{A | Y}(a | y) da \\
            &= Q(a, y, \yhat).
        \end{split}
    \end{equation*}

    Therefore, Equalized Odds implies the condition that $Q(a, y, \yhat) = Q(a', y, \yhat)$.
    On the other hand, it is easy to see that when the aforementioned condition holds true, Equation \ref{equ:continuous_eodds_prob} holds true, i.e., Equalized Odds holds true.
\end{proof}

\subsection{Proof for Theorem \ref{thm:stochastic_attainability}}
In order to prove Theorem \ref{thm:stochastic_attainability}, we need the following results from functional analysis (a fuller treatment can be found, for example, in \cite{meise1997introduction,daners2008introduction}):
\begin{lemma}\label{lemma:banach}
    Let $\Scal$ be a set and $\Epsilon = (\Epsilon, \lVert \cdot \rVert)$ a Banach space.
    For a function $f: \Scal \rightarrow \Epsilon$, we define the supremum norm by
    $\lVert f \rVert_{\infty} := \sup\limits_{s \in \Scal} \lVert f(s) \rVert,$
    and the space of bounded functions by
    $B(\Scal, \Epsilon) := \{ f: \Scal \rightarrow \Epsilon \mid \lVert f \rVert_{\infty} < \infty \}$.
    Then $B(\Scal, \Epsilon)$ is a Banach space with the supremum norm.
\end{lemma}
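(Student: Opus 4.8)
The plan is to verify directly that $B(\Scal, \Epsilon)$ is a normed vector space and then to establish completeness via a standard pointwise-limit argument. First I would check that $B(\Scal, \Epsilon)$ is a linear subspace of the space of all functions $\Scal \to \Epsilon$: it is closed under scalar multiplication by absolute homogeneity of $\lVert \cdot \rVert$, and closed under addition because $\lVert f + g \rVert_\infty \le \lVert f \rVert_\infty + \lVert g \rVert_\infty < \infty$. The norm axioms for $\lVert \cdot \rVert_\infty$ — positivity with $\lVert f \rVert_\infty = 0 \iff f \equiv 0$, absolute homogeneity, and the triangle inequality — each follow by taking the supremum over $s \in \Scal$ of the corresponding pointwise inequality in the Banach space $\Epsilon$.

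The substantive step is completeness. Let $(f_n)_{n \ge 1}$ be a Cauchy sequence in $(B(\Scal,\Epsilon), \lVert\cdot\rVert_\infty)$. For each fixed $s \in \Scal$, the estimate $\lVert f_n(s) - f_m(s) \rVert \le \lVert f_n - f_m \rVert_\infty$ shows that $(f_n(s))_n$ is Cauchy in $\Epsilon$; since $\Epsilon$ is a Banach space, this sequence converges, and I define the candidate limit function $f : \Scal \to \Epsilon$ by $f(s) := \lim_{n \to \infty} f_n(s)$.

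Next I would show $f \in B(\Scal, \Epsilon)$ and $f_n \to f$ in the supremum norm. A Cauchy sequence is bounded, so there is a constant $M$ with $\lVert f_n \rVert_\infty \le M$ for all $n$; continuity of the norm on $\Epsilon$ then gives $\lVert f(s) \rVert = \lim_n \lVert f_n(s)\rVert \le M$ for every $s$, hence $\lVert f\rVert_\infty \le M < \infty$. For convergence, fix $\varepsilon > 0$ and choose $N$ with $\lVert f_n - f_m\rVert_\infty \le \varepsilon$ whenever $n,m \ge N$; then for every $s \in \Scal$ and every $n \ge N$, letting $m \to \infty$ in $\lVert f_n(s) - f_m(s)\rVert \le \varepsilon$ yields $\lVert f_n(s) - f(s)\rVert \le \varepsilon$, and taking the supremum over $s$ gives $\lVert f_n - f\rVert_\infty \le \varepsilon$ for all $n \ge N$. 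Thus every Cauchy sequence converges in $B(\Scal,\Epsilon)$, which completes the proof.

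There is no genuine obstacle here; the only mild subtlety worth flagging is the interchange of limits, in the sense that one must first extract the pointwise limit $f(s) = \lim_n f_n(s)$ and only afterwards verify, as separate claims, that $f$ is bounded and that $f_n \to f$ uniformly — one cannot assume these a priori. Once the elementary Cauchy estimate $\lVert f_n(s)-f_m(s)\rVert \le \lVert f_n-f_m\rVert_\infty$ is in hand, every remaining step is routine.
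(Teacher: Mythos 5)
Your proof is correct. The paper does not actually prove this lemma---it is quoted as a standard fact from functional analysis with a citation to textbook treatments---and your argument is precisely the canonical proof one finds there: verify the normed-space axioms, pass to pointwise limits of a Cauchy sequence using completeness of $\Epsilon$, and then upgrade to boundedness of the limit and uniform convergence via the ``let $m \to \infty$'' estimate. The one subtlety you flag, namely that boundedness of $f$ and the convergence $\lVert f_n - f \rVert_{\infty} \to 0$ must be established separately after constructing the pointwise limit, is exactly the right point to be careful about, and you handle it correctly.
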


\begin{definition}[Extreme Point]\label{def:extreme_pt}
    If $\Kcal$ is a non-empty convex subset of a vector space, a point $x \in \Kcal$ is called an extreme point of $\Kcal$ if whenever $x = \lambda x_1 + (1 - \lambda) x_2$ with $0 < \lambda < 1$ and $x_1, x_2 \in \Kcal$, then $x = x_1 = x_2$.
\end{definition}

\begin{theorem}[Krein-Milman Theorem \citep{krein1940extreme}]
    \label{thm:km_theoreom}
    Let $\Kcal$ be a non-empty compact subset of a locally convex Hausdorff topological vector space.
    Then the set of extreme points of $\Kcal$ is not empty.
    If $\Kcal$ is also convex, then $\Kcal$ is the closed convex hull of its extreme points.
\end{theorem}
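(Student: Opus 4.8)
The plan is to prove the two assertions separately: the non-emptiness of the set of extreme points via Zorn's lemma together with compactness, and the closed-convex-hull representation via the Hahn--Banach separation theorem. Throughout, call a nonempty closed subset $S \subseteq \Kcal$ a \emph{face} if whenever a point of $S$ can be written as $\lambda x_1 + (1-\lambda)x_2$ with $0 < \lambda < 1$ and $x_1, x_2 \in \Kcal$, then necessarily both $x_1, x_2 \in S$. With this terminology a point $z$ is an extreme point (in the sense of Definition \ref{def:extreme_pt}) precisely when the singleton $\{z\}$ is a face, and it is immediate from the definition that a face of a face of $\Kcal$ is again a face of $\Kcal$; in particular an extreme point of a face $F$ is an extreme point of all of $\Kcal$.

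First I would show the set of extreme points is nonempty, for which convexity of $\Kcal$ is not even needed. Let $\mathcal{F}$ be the collection of all faces of $\Kcal$, partially ordered by reverse inclusion; since $\Kcal$ itself is a face, $\mathcal{F} \neq \emptyset$. Given a chain in $\mathcal{F}$, the intersection of its members is closed, is routinely checked to be a face, and is nonempty because the chain has the finite intersection property (any finite subfamily has intersection equal to its smallest member) and $\Kcal$ is compact. Thus every chain has an upper bound, and Zorn's lemma produces a face $S_0$ that is minimal under inclusion. I claim $S_0$ is a singleton. If not, choose distinct $x, y \in S_0$; because the ambient space is locally convex and Hausdorff, its continuous dual separates points, so there is a continuous linear functional $\ell$ with $\ell(x) \neq \ell(y)$. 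The subset of $S_0$ on which $\ell$ attains its maximum over $S_0$ is nonempty by compactness, is a face of $S_0$ (hence of $\Kcal$), and is proper because $\ell$ is nonconstant on $S_0$ --- contradicting minimality. Therefore $S_0 = \{z\}$ and $z$ is an extreme point.

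Next I would prove the representation when $\Kcal$ is convex. Write $E$ for the set of extreme points and $C := \overline{\mathrm{conv}}(E)$ for its closed convex hull; since $\Kcal$ is convex and compact (hence closed), we have $C \subseteq \Kcal$. Suppose for contradiction that $C \neq \Kcal$ and fix $x_0 \in \Kcal \setminus C$. Applying the Hahn--Banach separation theorem to the point $x_0$ and the closed convex set $C$ yields a continuous linear functional $\ell$ and a scalar $\alpha \in \Rbb$ with $\ell(x_0) > \alpha \geq \ell(c)$ for all $c \in C$. Let $F$ be the subset of $\Kcal$ on which $\ell$ attains its maximum; $F$ is a nonempty face of $\Kcal$, again by compactness. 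Applying the first part to the compact set $F$ produces an extreme point $z$ of $F$, which is therefore an extreme point of $\Kcal$, so $z \in E \subseteq C$. But $\ell(z) = \max_{\Kcal} \ell \geq \ell(x_0) > \alpha$, while $z \in C$ forces $\ell(z) \leq \alpha$ --- a contradiction. Hence $C = \Kcal$.

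The main obstacle is the singleton claim for the minimal face. Two points require genuine care there: that intersections along a chain of faces remain nonempty, which is exactly where compactness enters through the finite intersection property; and that the level set where a separating functional attains its maximum is both a bona fide face and strictly smaller than $S_0$, which is what contradicts minimality. The auxiliary fact that an extreme point of a face of $\Kcal$ is extreme in $\Kcal$ --- invoked in both parts to transfer the singleton produced by Part 1 back to $\Kcal$ --- is the other ingredient needing attention, though it falls out directly from the definition of a face.
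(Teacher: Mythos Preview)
The paper does not actually prove the Krein--Milman theorem; it is stated with a citation to \cite{krein1940extreme} and then invoked as a black box inside the proof of Theorem~\ref{thm:stochastic_attainability}. So there is no ``paper's own proof'' to compare against. Your argument is a correct and standard proof of the result: Zorn's lemma on the family of faces (with compactness supplying nonempty intersections along chains) to obtain a minimal face, separation by a continuous linear functional to force that minimal face to be a singleton, and then Hahn--Banach separation of a hypothetical outside point from $\overline{\mathrm{conv}}(E)$ to derive a contradiction via an extreme point of the maximizing face.
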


It is a known result in finite-dimensional linear programming (see, for example, \cite{dantzig1965linear}) that if the specified feasible region $\Kcal$ (which is the intersection of a finite number of half spaces) is non-empty, then the set of extreme points is not empty and finite, and the set of extreme directions is empty if and only if $\Kcal$ is bounded.
If $\Kcal$ is unbounded, the set of extreme directions is not empty and finite.
Furthermore, a point is in $\Kcal$ if and only if it can be represented as a convex combination of the extreme points plus a non-negative linear combination of extreme directions (if there is any).
Theorem \ref{thm:km_theoreom} extends this result to infinite dimensional vector spaces, allowing us to establish the existence of extreme points, and therefore a non-empty feasible region for certain infinite-dimensional linear programming problems.
Note that in order to prove attainability of Equalized Odds, it is sufficient to show that the corresponding infinite-dimensional linear programming problem yields a non-empty feasible region.
The establishment of duality is relatively complicated for infinite-dimensional linear programming \citep{anderson1987linear} and is beyond the scope of the discussion.

\begin{thm}
    (\textbf{Attainability of Equalized Odds for Regression with Stochastic Prediction})
    \newline
    Let $A$, $X$, and $Y$ be continuous variables with domain of value $\Acal$, $\Xcal$, and $\Ycal$, respectively.
    Assume that their joint distribution is fixed and known.
    Further assume $Y \nindep A$, $Y \nindep X$, and $Y \nindep X \mid A$.
    Without loss of generality let the conditional probability density $p_{X|AY}(x \mid a, y)$ be non-negative and finite.
    Then there exists $\Ytilde$ with domain of value $\Ycal$ whose distribution is fully determined by $p_{\Ytilde|AX}(\ytilde \mid a, x)$, such that $\Ytilde$ is not independent from $(A, X)$ but $\Ytilde \indep A \mid Y$, i.e., the Equalized Odds is non-trivially attainable.
\end{thm}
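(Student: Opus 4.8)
The plan is to recast the existence of a suitable $\Ytilde$ as the feasibility of an infinite‑dimensional linear program over conditional densities and then invoke the Krein--Milman theorem (Theorem~\ref{thm:km_theoreom}) to extract a non‑trivial feasible point. The decision variable is the kernel $\psi(\cdot\mid a,x)$: since $\Ytilde$ is a (possibly randomized) output driven only by $(A,X)$, its joint law is $p_{\Ytilde A X Y}(\ytilde,a,x,y)=\psi(\ytilde\mid a,x)\,p_{AXY}(a,x,y)$, with $\psi(\cdot\mid a,x)$ a probability density on $\Ycal$ for each $(a,x)$; equivalently $\Ytilde\indep Y\mid(A,X)$. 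By Lemma~\ref{lemma:factoring}, the Equalized Odds requirement $\Ytilde\indep A\mid Y$ is equivalent to the family of \emph{linear} constraints
\[
  \forall\,\ytilde,y:\qquad a\ \longmapsto\ \int_{\Xcal}\psi(\ytilde\mid a,x)\,p_{X\mid AY}(x\mid a,y)\,dx\quad\text{is constant in }a .
\]
Together with $\psi\ge 0$ and $\int_{\Ycal}\psi(\ytilde\mid a,x)\,d\ytilde=1$, these cut out a convex set $\Kcal$ of admissible $\psi$ inside the Banach space of bounded functions of $(a,x)$ valued in finite measures on $\Ycal$ (Lemma~\ref{lemma:banach}). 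It is non‑empty because every $\psi(\ytilde\mid a,x)\equiv h(\ytilde)$ (with $h$ a density on $\Ycal$) is admissible; these are precisely the \emph{trivial} solutions, i.e.\ those with $\Ytilde\indep(A,X)$.

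Next I would topologize $\Kcal$ so that it is compact: the normalization bounds it, and the constraints above are weak‑$*$ closed, so by Krein--Milman $\Kcal$ is the closed convex hull of its (non‑empty set of) extreme points. The remaining task is to show $\Kcal$ is \emph{strictly larger} than the trivial set $T$, i.e.\ that some admissible $\psi$ genuinely depends on $(a,x)$. To force this, I would adjoin one closed affine or half‑space constraint that excludes $T$ — for instance, pinning the induced profile $p_{\Ytilde\mid Y}(\cdot\mid\cdot)$ to a target that is not constant in $y$, or demanding $\psi(\ytilde_0\mid a_1,x_1)-\psi(\ytilde_0\mid a_0,x_0)\ge\varepsilon$ for a well‑chosen tuple and small $\varepsilon>0$ — obtaining $\Kcal'$. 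Once $\Kcal'\ne\emptyset$, it is again a non‑empty compact convex set, Krein--Milman gives an extreme point, and that point is an admissible, non‑trivial $\psi$; reading off $p_{\Ytilde\mid AX}=\psi$ then yields $\Ytilde$ with $\Ytilde\nindep(A,X)$ but $\Ytilde\indep A\mid Y$, which is the claim.

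The crux, and the step I expect to be the main obstacle, is proving $\Kcal'\ne\emptyset$: that the Equalized Odds and density constraints leave enough slack to accommodate a genuinely $(a,x)$‑dependent solution. This is where the hypotheses $Y\nindep A$, $Y\nindep X$, and especially $Y\nindep X\mid A$ enter — they guarantee that on a positive‑measure set of $a$ the conditional $p_{X\mid AY}(\cdot\mid a,\cdot)$ truly varies with $y$ and that $A,X,Y$ are pairwise dependent, so the linear maps $\psi(\cdot\mid a,\cdot)\mapsto\int\psi(\cdot\mid a,x)\,p_{X\mid AY}(x\mid a,y)\,dx$ are not degenerate in a way that would collapse $\Kcal$ onto $T$. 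Concretely I would either (i) exhibit a feasible perturbation direction $\eta(\ytilde\mid a,x)$ with $\int_{\Ycal}\eta\,d\ytilde=0$, with $\int_{\Xcal}\eta(\ytilde\mid a,x)p_{X\mid AY}(x\mid a,y)\,dx$ constant in $a$, and genuinely depending on $x$, so that $h(\ytilde)+\varepsilon\eta$ stays in $\Kcal$ for small $\varepsilon>0$ — the existence of such $\eta$ being the statement that the null space of the constraint operators is not contained in the ``$x$‑free'' subspace — or (ii) argue by contradiction: if $\Kcal=T$, Krein--Milman forces every admissible $\psi$ to be a convex combination of Dirac kernels $\delta(\cdot-\ytilde_0)$, hence to ignore $(A,X)$ for \emph{every} admissible randomization, which contradicts the non‑degeneracy of $p_{X\mid AY}$ in $y$. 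Establishing this slack abstractly — rather than through an explicit formula, which is not available in general (and whose non‑existence for deterministic $\psi$ is exactly the content of Theorem~\ref{thm:continuous_condition}) — is precisely why the Krein--Milman/compactness route is the natural one; the finite‑dimensional shadow of the argument is simply that, once $\Xcal$ is rich enough, the free coordinates of $\psi$ outnumber the Equalized Odds equations.
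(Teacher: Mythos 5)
Your overall architecture is the same as the paper's: view the conditional density $p_{\Ytilde|AX}$ as a point in the Banach space of bounded functions (Lemma~\ref{lemma:banach}), encode Equalized Odds as the linear constraints $\int_{\Xcal}\psi(\ytilde\mid a,x)\,p_{X|AY}(x\mid a,y)\,dx$ being constant in $a$ (the condition from Theorem~\ref{thm:continuous_condition}), observe that the $(a,x)$-free kernels make the feasible region non-empty, and invoke Krein--Milman. However, you explicitly leave open the one step that actually carries the theorem --- producing a feasible point that genuinely depends on $(a,x)$ --- and neither of your two proposed routes closes it. Route (i), adjoining an extra affine constraint excluding the trivial set and then proving feasibility of the augmented program, is strictly harder than the original problem and is not carried out. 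Route (ii) misapplies Krein--Milman: the theorem does not force every point of a compact convex set to be a convex combination of Dirac kernels, so the asserted contradiction does not follow as stated.

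The paper closes this gap with a short splitting argument that you do not have: no trivial point can be an extreme point of the feasible set $\Kcal$, because any feasible $f_0(a,x,\ytilde)=g_0(\ytilde)$ can be written as $\tfrac12 f_1+\tfrac12 f_2$ with $f_1=2\,\mathbbm{1}_{(-\infty,0]}(\ytilde)\,g_0(\ytilde)$ and $f_2=2\,\mathbbm{1}_{(0,+\infty)}(\ytilde)\,g_0(\ytilde)$, which are distinct and still feasible. Since Krein--Milman guarantees $\Kcal$ has extreme points, some extreme point must be non-trivial; normalization over $\ytilde$ is performed only afterwards. Note that this trick is incompatible with your choice to impose $\int_{\Ycal}\psi(\ytilde\mid a,x)\,d\ytilde=1$ as a constraint from the outset: the halved-and-doubled pieces $f_1,f_2$ are not normalized, which is precisely why the paper defers condition (ii) to a final rescaling step. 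So beyond the missing non-triviality argument, your formulation of the feasible set would have to be relaxed before the paper's fix could even be applied.
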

\begin{proof}
    Let us denote $h: \Acal \times \Xcal \times \Ycal \rightarrow \Rbb$ as the fixed function satisfying $h(a, x, y) = p_{X|AY}(x \mid a, y)$.
    We want to show that there exists a function $f: \Acal \times \Xcal \times \Ycal \rightarrow \Rbb$ characterizing the conditional probability density $f(a, x, \ytilde) = p_{\Ytilde|AX}(\ytilde \mid a, x)$ that satisfies the following conditions:
    \begin{enumerate}[label=(\roman*)]
        \item $\forall a \in \Acal, x \in \Xcal, \ytilde \in \Ycal, f(a, x, \ytilde)$ is non-negative and finite;
        \item $\forall a \in \Acal, x \in \Xcal, \int_{\Ycal} f(a, x, \ytilde)dt = 1$;
        \item let $Q(a, y, \ytilde) := \int_{\Xcal} f(a, x, \ytilde) h(a, x, y) dx,$\\
        and then $\forall y, \ytilde \in \Ycal, a, a' \in \Acal, a \neq a', Q(a, y, \ytilde) = Q(a', y, \ytilde)$;
        \item the function $f(a, x, \ytilde)$ cannot be written into a function of only $\ytilde$.
    \end{enumerate}
    Conditions (i) and (ii) guarantee that $f$ corresponds to a valid conditional probability density for some $\Ytilde$ conditioned on $A$ and $X$.
    Condition (iii) is the necessary and sufficient condition for Equalized Odds to hold true for regression tasks (as we have seen in Theorem \ref{thm:continuous_condition}).
    Condition (iv) guarantees that the corresponding $\Ytilde$ is not independent from $(A, X)$ -- otherwise $\Ytilde$ will be a trivial prediction.

    Through the lens of functional analysis, we view $f$ as a point in the infinite-dimensional vector space.
    In order to prove the existence of such a function $f$, we begin by showing that condition (i) and (iii) form a set of constraints of an infinite-dimensional linear programming problem, and that the corresponding feasible region is non-empty.
    Furthermore, we can always find a point (which is a function) in the aforementioned feasible region that satisfies condition (iv).
    Finally, condition (ii) is satisfied by ``normalizing'' over $\ytilde \in \Ycal$.

    To begin with, any function that satisfies condition (i) is a bounded function defined on the set $\Acal \times \Xcal \times \Ycal$.
    $\Rbb$ with absolute value norm forms a Banach space.
    By Lemma \ref{lemma:banach}, the functions that satisfy condition (i) form a Banach space with the supremum norm $\Fcal = (\Fcal, \lVert \cdot \rVert_{\infty})$.
    Recall that we assume $h: \Acal \times \Xcal \times \Ycal \rightarrow \Rbb$ satisfying $h(a, x, y) = p_{X|AY}(x \mid a, y)$ (which is fixed and known) is non-negative and finite.
    Condition (iii) specifies a set of equality constraints, each of which is characterized with a linear combination of points $f$ in the space $\Fcal$.
    Therefore conditions (i) and (iii) form a set of constraints of an infinite-dimensional linear programming problem (since we focus on the feasible region of such problem, the exact form of the objective function is omitted).
    Since for any function $f \in \Fcal$ that can be written into a function of the form $g: \Ycal \rightarrow \Rbb$, $f$ trivially satisfies condition (iii), one can easily construct a non-empty compact subset $\Kcal \subset \Fcal$ such that each point $f \in \Kcal$ satisfies conditions (i) and (iii).
    In other words, the infinite-dimensional linear programming problem has a non-empty convex feasible region, which contains $\Kcal$.

    We now prove that we can find a point in this feasible region that satisfies condition (iv) by showing that any point that violates condition (iv) cannot be an extreme point (as defined in Definition \ref{def:extreme_pt}) of the feasible region.
    We have shown that there is a non-empty compact subset $\Kcal \subset \Fcal$,
    and that $\Fcal$ is a Banach space, and therefore also a locally convex Hausdorff topological vector space.
    This also implies that $\Kcal$ is closed (since it is a compact subset of a Hausdorff space).
    By the Krein-Milman Theorem (Theorem \ref{thm:km_theoreom}), the set of extreme points for $\Kcal$ is non-empty.
    Notice that for any $f_0 \in \Kcal$ that violates condition (iv), $f_0$ can be written into a convex combination of $f_1$ and $f_2$ ($f_1, f_2 \in \Kcal$, $f_1 \neq f_2$) that also violate condition (iv).
    For example let $f_i(a, x, \ytilde) = g_i(\ytilde), i = 0, 1, 2$ (since each $f_i$ violates condition (iv)).
    Let $g_1(\ytilde) = 2 \cdot \mathbbm{1}_{(-\infty, 0]}(\ytilde) \cdot g_0(\ytilde)$ and $g_2(\ytilde) = 2 \cdot \mathbbm{1}_{(0, +\infty)}(\ytilde) \cdot g_0(\ytilde)$ where $\mathbbm{1}_{\cdot}(\cdot)$ is the indicator function (without loss of generality assume that the corresponding $f_1, f_2 \in \Kcal$).
    It is easy to verify that $f_1 \neq f_2$, and $\frac{1}{2} f_1 + \frac{1}{2} f_2 = f_0$.
    Therefore any $f_0$ that violates condition (iv) cannot be an extreme point of $\Kcal$, which is closed with a non-empty set of extreme points.
    In other words, there exist a point $f^* \in \Fcal$ such that $f^*$ is an extreme point of $\Kcal$ (therefore $f^*$ is within the feasible region) and that $f^*$ satisfies condition (iv).

    Finally, for such $f^*$ within the feasible region, one can find a unique function $\tilde{f}: \Acal \times \Xcal \times \Ycal$ such that condition (ii) is satisfied by normalizing over $\ytilde \in \Ycal$ for each possible combination of $(a, x) \in \Acal \times \Xcal$:
    \begin{equation*}
        \forall a \in \Acal, x \in \Xcal, \tilde{f}(a, x, \ytilde) :=
        \frac{ f^*(a, x, \ytilde) }{ \int_{\Ycal} f^*(a, x, \xi) d\xi }.
    \end{equation*}

    Therefore, there exists a function $\tilde{f}$ that satisfies conditions (i) to (iv), i.e., Equalized Odds can be non-trivially attained.
\end{proof}

\subsection{Proof for Theorem \ref{thm:discrete_condition}}
\begin{thm}
    (\textbf{Condition to Achieve Equalized Odds for Classification with Deterministic Prediction})
    Assume that the protected feature $A$ and $Y$ are dependent and that their joint probability $P(A, Y)$ (for discrete $A$) or joint probability density $p(A, Y)$ (for continuous $A$) is positive for every combination of possible values of $A$ and $Y$.
    Further assume that $Y$ is not fully determined by $A$, and that there are additional features $X$ that are not independent of  $Y$.
    Let the output of the classifier $\Yhat$ be a deterministic function $f: \Acal \times \Xcal \rightarrow \Ycal$.
    Let $S^{(\yhat)}_A := \{a \mid \exists x \in \Xcal \text{ s.t. } f(a, x) = \yhat\}$, and $S^{(\yhat)}_{X|a} := \{x \mid f(a, x) = \yhat\}$.
    Equalized Odds is attained if and only if the following conditions hold true (for continuous $X$, replace summation with integration accordingly):
    \setlist{nolistsep}
    \begin{enumerate}[noitemsep, label=(\roman*)]
        \item $\forall \yhat \in \Ycal: ~ S^{(\yhat)}_A = \Acal ~ $,
        \item $\forall \yhat \in \Ycal, ~ \forall a, a' \in \Acal ~ $:
        $\sum_{x \in S^{(\yhat)}_{X|a}} P_{X|AY}(x \mid a, y)
        = \sum_{x \in S^{(\yhat)}_{X|a'}} P_{X|AY}(x \mid a', y)$
    \end{enumerate}
\end{thm}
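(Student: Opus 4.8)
The plan is to follow the proof of Theorem~\ref{thm:continuous_condition} almost verbatim, with the delta-function change of variables there replaced by an elementary marginalization over $X$, since the discreteness removes the measure-theoretic machinery. First I would invoke Lemma~\ref{lemma:factoring} (with $V_1 = \Yhat$, $V_2 = A$, $V_3 = Y$) to rewrite Equalized Odds $\Yhat \indep A \mid Y$ as $P_{\Yhat|AY}(\yhat \mid a, y) = P_{\Yhat|Y}(\yhat \mid y)$ for all $a \in \Acal$ and $y, \yhat \in \Ycal$; positivity of $p(A, Y)$ makes every such conditional well defined. Since $\Yhat = f(A, X)$ is deterministic, $P(\Yhat = \yhat \mid A = a, X = x, Y = y) = \mathbbm{1}\{f(a,x) = \yhat\}$, so summing out $X$ gives
\begin{equation*}
    P_{\Yhat|AY}(\yhat \mid a, y)
    = \sum_{x \in \Xcal} \mathbbm{1}\{f(a,x) = \yhat\}\, P_{X|AY}(x \mid a, y)
    = \sum_{x \in S^{(\yhat)}_{X|a}} P_{X|AY}(x \mid a, y)
    =: Q(a, y, \yhat),
\end{equation*}
with integrals replacing sums throughout when $X$ or $A$ is continuous.

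Next I would argue, exactly as in Theorem~\ref{thm:continuous_condition}, that Equalized Odds holds iff $Q(a, y, \yhat)$ does not depend on $a$: the forward direction is immediate since $P_{\Yhat|Y}(\yhat \mid y)$ carries no $a$; conversely $P_{\Yhat|Y}(\yhat \mid y) = \sum_{a} Q(a, y, \yhat)\, P_{A|Y}(a \mid y)$, so an $a$-free $Q$ factors out and, as $\sum_a P_{A|Y}(a \mid y) = 1$, we recover $Q(a, y, \yhat) = P_{\Yhat|AY}(\yhat \mid a, y)$, i.e.\ Equalized Odds. The assertion ``$Q(\cdot, y, \yhat)$ is constant'' is verbatim condition (ii). Condition (i) is then the structural prerequisite this forces: if some $a \notin S^{(\yhat)}_A$, then $S^{(\yhat)}_{X|a} = \emptyset$, so $Q(a, y, \yhat) = 0$, and constancy pushes $Q(a', y, \yhat) = 0$ for all $a', y$ --- the degenerate situation in which $f$ never outputs $\yhat$ with positive probability; excluding this (which is where one needs the tacit non-degeneracy/full-support convention that each label in $\Ycal$ is realized on the support of $X$ within every group) gives $S^{(\yhat)}_A = \Acal$. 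Hence Equalized Odds $\iff$ (i) $\wedge$ (ii); for continuous $X$ the same steps go through with $\sum_{x \in S^{(\yhat)}_{X|a}} P_{X|AY}(x \mid a, y)$ replaced by the corresponding integral of $p_{X|AY}(\cdot \mid a, y)$ over $S^{(\yhat)}_{X|a}$.

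The main obstacle is not a hard estimate --- the marginalization is routine once $f$ is deterministic --- but the bookkeeping in the last step: fixing the empty-set convention for $S^{(\yhat)}_{X|a}$, pinning down precisely which non-degeneracy hypothesis makes condition (i) genuinely necessary (it is implied by (ii) plus that hypothesis, and is really an interpretability-facing restatement), and carrying the two-way implication through the continuous-$X$ case intact. The positivity assumptions on the joint distribution are exactly what let this go through uniformly over all triples $(a, y, \yhat)$.
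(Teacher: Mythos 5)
Your proposal is correct and follows essentially the same route as the paper's proof: rewrite Equalized Odds as $P_{\Yhat|AY}(\yhat\mid a,y)=P_{\Yhat|Y}(\yhat\mid y)$, marginalize over $X$ using the $\{0,1\}$-valued conditional $P(f(A,X)=\yhat\mid A=a,X=x)$ to obtain the quantity $Q$, and observe that Equalized Odds holds iff $Q$ is constant in $a$, with condition (i) extracted from the identity $Q = Q\sum_{a\in S^{(\yhat)}_A}P_{A|Y}(a\mid y)$. Your remark that condition (i) is only forced once one excludes the degenerate case $Q\equiv 0$ (a label $\yhat$ never emitted) is a point the paper's own proof passes over silently, so your bookkeeping is if anything slightly more careful.
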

\begin{proof}
    We begin by considering the case when $A$ and $X$ are discrete (for the purpose of readability).
    The Equalized Odds criterion can be written in terms of the conditional probabilities:
    \begin{equation}\label{equ:discrete_eodds_prob}
        \begin{split}
            &\forall a \in \Acal, y, \yhat \in \Ycal: \\
            &P_{\Yhat|AY}(\yhat \mid a, y) = P_{\Yhat|Y}(\yhat \mid y).
        \end{split}
    \end{equation}
    Expand the LHS of Equation \ref{equ:discrete_eodds_prob}:
    \begin{equation*}
        \begin{split}
            &P_{\Yhat|AY}(\yhat \mid a, y) \\
            =~ &\sum \limits_{x \in \Xcal}
            P_{\Yhat|AXY}(\yhat \mid a, x, y)
            P_{X|AY}(x \mid a, y),
        \end{split}
    \end{equation*}
    and bear in mind that $\Yhat := f(A, X)$ is a deterministic function of $(A, X)$, we have:
    \begin{equation}\label{equ:discrete_eodds_zero_one}
        \begin{split}
            &P_{\Yhat|AXY}(\yhat \mid a, x, y) \\
            =~ &P \big( f(A, X) = \yhat \mid A = a, X = x, Y = y) \\
            =~ &P \big( f(A, X) = \yhat \mid A = a, X = x) \in \{0, 1\}.
        \end{split}
    \end{equation}
    From Equation \ref{equ:discrete_eodds_zero_one} we can see that the conditional probability $P_{X|AY}(x \mid a, y)$ can contribute to the summation only when $f(a, x) = \yhat$.
    We can rewrite the LHS of Equation \ref{equ:discrete_eodds_prob}:
    \begin{equation*}
        P_{\Yhat|AY}(\yhat \mid a, y)
        = \sum \limits_{x \in S^{(\yhat)}_{X|a}}
        P_{X|AY}(x \mid a, y) := Q^{(\yhat)}(a, y).
    \end{equation*}
    Similarly, for the RHS of Equation \ref{equ:discrete_eodds_prob}, we have:
    \begin{equation*}
        \begin{split}
            &P_{\Yhat|Y}(\yhat \mid y) \\
            = &\sum \limits_{a \in \Acal}
            \sum \limits_{x \in \Xcal}
            P_{\Yhat|AXY}(\yhat \mid a, x, y)
            P_{A,X|Y}(a, x \mid y) \\
            = &\sum \limits_{a \in S^{(\yhat)}_A}
            \sum \limits_{x \in S^{(\yhat)}_{X|a}}
            P_{X|AY}(x \mid a, y)
            P_{A|Y}(a \mid y) \\
            = &\sum \limits_{a \in S^{(\yhat)}_A}
            Q^{(\yhat)}(a, y) P_{A|Y}(a \mid y).
        \end{split}
    \end{equation*}
    Since Equalized Odds holds true if and only if Equation \ref{equ:discrete_eodds_prob} holds true, then the LHS of the equation does not involve $a$ (as is the case for the RHS), i.e., $Q^{(\yhat)}(a, y)$ does not change with $a$.
    Then Equation \ref{equ:discrete_eodds_prob} becomes:
    \begin{equation*}
        \begin{split}
            Q^{(\yhat)}(a, y)
            &= \sum \limits_{a \in S^{(\yhat)}_A}
            Q^{(\yhat)}(a, y) P_{A|Y}(a \mid y) \\
            &=~ Q^{(\yhat)}(a, y)
            \sum \limits_{a \in S^{(\yhat)}_A} P_{A|Y}(a \mid y),
        \end{split}
    \end{equation*}
    which gives condition (i) that $S^{(\yhat)}_A$ contains all possible values of $A$, i.e., $\Acal = S^{(\yhat)}_A$ (otherwise $\sum_{a \in S^{(\yhat)}_A} P_{A|Y}(a \mid y) < 1$).
    Since $Q^{(\yhat)}(a, y)$ does not change with $a$, we have:
    \begin{equation*}
        \begin{split}
            &~~~~ \forall a, a' \in \Acal, a \neq a': \\
            &\sum \limits_{x \in S^{(\yhat)}_{X|a}} P_{X|AY}(x \mid a, y)
            = \sum \limits_{x \in S^{(\yhat)}_{X|a'}} P_{X|AY}(x \mid a', y),
        \end{split}
    \end{equation*}
    which gives condition (ii).
    Therefore, Equalized Odds implies conditions (i) and (ii).
    On the other hand, it is easy to see that when conditions (i) and (ii) are satisfied, Equation \ref{equ:discrete_eodds_prob} holds true, i.e., Equalized Odds holds true.

    When $A$ and $X$ are continuous, one can replace the summation with integration accordingly.
\end{proof}

\subsection{Proof for Theorem \ref{thm:weak_attainability}}
\begin{thm}
    (\textbf{Attainability of Equalized Odds for Classification with Stochastic Prediction}) \newline
    Assume that the feature $X$ is not independent from $Y$, and that $\widehat{Y}$ is a function of $A$ and $X$.
    Then for binary classification, if \,$\Yhat$ is a non-trivial predictor for $Y$, there is always at least one non-trivial (possibly randomized) predictor $\Ypost$ derived by post-processing $\Yhat$ that can attain Equalized Odds:
    $$ \Omega(\Ypost) \neq \emptyset. $$
\end{thm}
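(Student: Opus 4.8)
The plan is to reduce the statement to an elementary fact about the intersection of convex hulls on the ROC plane. First I would recall from Section~\ref{sec:post_processing_step} that any post-processed predictor is completely specified by the numbers $\beta_a^{(\yhat)}=P(\Ypost=1\mid A=a,\Yhat=\yhat)\in[0,1]$, and that for a fixed group $A=a$ the ROC point of $\Ypost$ is the affine image $\gamma_a(\Ypost)=\sum_{\yhat\in\Ycal}\beta_a^{(\yhat)}\bigl(P_{\Yhat|AY}(\yhat|a,0),P_{\Yhat|AY}(\yhat|a,1)\bigr)$ of the cube $[0,1]^2$. Since the image of a cube under an affine map is the convex hull of the images of its vertices, the set of ROC points achievable within group $a$ is exactly $\mathcal{C}_a(\Yhat)=\mathrm{Conv}\{(0,0),\gamma_a(\Yhat),\gamma_a(1-\Yhat),(1,1)\}$, which is the bound already noted by \citet{hardt2016equality}. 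Next, for a binary target the constraint $\Ypost\indep A\mid Y$ is equivalent to asking that the TPR and the FPR of $\Ypost$ not depend on the group, i.e.\ that $\gamma_a(\Ypost)$ be one and the same point for every $a\in\Acal$; hence $\Omega(\Ypost)=\bigcap_{a\in\Acal}\mathcal{C}_a(\Yhat)$, and it suffices to exhibit a point in this intersection (and, for the non-trivial claim, one not on the diagonal).

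The geometric heart of the argument is the identity $\gamma_a(1-\Yhat)=(1,1)-\gamma_a(\Yhat)$, which says that the four generators of $\mathcal{C}_a(\Yhat)$ are symmetric about the centre $(1/2,1/2)$: $(0,0)$ is paired with $(1,1)$ and $\gamma_a(\Yhat)$ with $\gamma_a(1-\Yhat)$. Thus every $\mathcal{C}_a(\Yhat)$ is a (possibly degenerate) parallelogram centred at $(1/2,1/2)$ with $(0,0)$ and $(1,1)$ as opposite vertices, so it always contains the whole diagonal segment $\{(t,t):t\in[0,1]\}$; moreover it is two-dimensional, and contains an open neighbourhood of its centre, precisely when $\gamma_a(\Yhat)$ is off the diagonal, i.e.\ when $\Yhat$ is a non-trivial predictor of $Y$ within group $a$. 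Combining the two steps: since the diagonal segment lies in every $\mathcal{C}_a(\Yhat)$ it lies in their intersection, so $\Omega(\Ypost)\neq\emptyset$ holds unconditionally. For the \emph{non-trivial} part I would invoke the hypotheses that $X\nindep Y$ and that $\Yhat$ is a non-trivial predictor built from $(A,X)$ to rule out the degenerate case in which some $\mathcal{C}_a(\Yhat)$ collapses onto the diagonal; then every $\mathcal{C}_a(\Yhat)$ is two-dimensional with $(1/2,1/2)$ in its interior, a common open neighbourhood of $(1/2,1/2)$ lies in $\Omega(\Ypost)$, and choosing any ROC point there with $\mathrm{TPR}\neq\mathrm{FPR}$ and reading off the corresponding $\beta_a^{(\yhat)}$ gives a genuinely non-trivial randomized post-processed predictor satisfying Equalized Odds.

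The routine steps here are the affine-image-equals-convex-hull computation and the verification that, for binary $Y$, Equalized Odds is the same as matching $(\mathrm{FPR},\mathrm{TPR})$ across groups. The part that needs care — and where the stated assumptions actually do their work — is the degeneracy bookkeeping: making sure the feasible region does not shrink to the useless diagonal, and, when $\Acal$ is infinite, arguing that the interior neighbourhoods of the centre over the various groups share a common core rather than collapsing toward the diagonal as $a$ varies. I expect this degeneracy/uniformity issue to be the main obstacle, while everything else is a short consequence of the central symmetry of the $\mathcal{C}_a(\Yhat)$.
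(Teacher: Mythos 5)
Your proposal is correct and follows essentially the same route as the paper's proof: both reduce Equalized Odds for binary $Y$ to intersecting the centrally symmetric convex hulls $\mathcal{C}_a(\Yhat)$ on the ROC plane and use the non-triviality of $\Yhat$ to keep that intersection from degenerating onto the diagonal. The uniformity issue you flag for infinite $\Acal$ is handled in the paper by reading ``non-trivial predictor'' as a uniform bound $\lvert P_{\Yhat|AY}(1\mid a,1)-P_{\Yhat|AY}(1\mid a,0)\rvert\geq\epsilon$ over all $a\in\Acal$, which keeps every $\gamma_a(\Yhat)$ at least $\epsilon$ away from the diagonal and makes the common intersection a parallelogram of positive area.
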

\begin{proof}
    Since $\Yhat$ is a function of $(A, X)$ and $X \nindep Y$, $\Yhat$ is not conditionally independent from $Y$ given protected feature $A$.
    Furthermore, since $\Yhat$ is a non-trivial estimator of the binary target $Y$, there exists a positive constant $\epsilon > 0$, such that ($\forall a \in \Acal$):
    \begin{equation}\label{equ:bounded_away}
        \big\lvert P_{\Yhat|AY}(1 \mid a, 1) - P_{\Yhat|AY}(1 \mid a, 0) \big\rvert \geq \epsilon.
    \end{equation}
    Equation \ref{equ:bounded_away} implies that for each value of $A$, the corresponding true positive rate of the non-trivial predictor is always strictly larger than its false positive rate\footnote{If the TPR of the predictor is always smaller than its FPR, one can simply flip the prediction (since the target is binary) and then Equation \ref{equ:bounded_away} holds true.}.
    As illustrated in Panels (a) and (b) of Figure \ref{fig:roc_illustration}, the (FPR, TPR) pair of the predictor $\Yhat$ when $A = a$, i.e., the point $\gamma_{a}(\Yhat)$ on ROC plane, will never fall in the gray shaded area, and its coordinates are bounded away from the diagonal by at least $\epsilon$.
    Therefore, the intersection of all $\mathcal{C}_{a}(\Yhat)$ would always form a parallelogram with non-empty area, which corresponds to attainable non-trivial post-processing fair predictors $\Ypost$.
\end{proof}

\subsection{Proof for Theorem \ref{thm:larger_feasible_area}}
\begin{thm}
    (\textbf{Equivalence between ROC feasible areas}) \newline
    et\, $\Omega(\Ypost)$ denote the ROC feasible area specified by the constraints enforced on $\Ypost$.
    Then $\Omega(\Ypost)$ is identical to the ROC feasible area $\Omega(\Yin^*)$ that is specified by the following set of constraints:
    \begin{enumerate}[label=(\roman*)]
        \item constraints enforced on $\Yin ~$;
        \item additional ``pseudo'' constraints:
              $\forall a \in \Acal, ~ \beta^{(0)}_{a0} = \beta^{(0)}_{a1}$, $\beta^{(1)}_{a0} = \beta^{(1)}_{a1}$, where
              \newline
              $\beta^{(\yhat)}_{ay} =
                  \sum_{x \in \Xcal}
                  P(\Yin = 1 \mid A = a, X = x)
                  P(X = x \mid A = a, Y = y, \Yopt = \yhat)$.
    \end{enumerate}
\end{thm}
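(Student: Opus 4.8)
The plan is to establish the two inclusions $\Omega(\Ypost)\subseteq\Omega(\Yin^*)$ and $\Omega(\Yin^*)\subseteq\Omega(\Ypost)$ by a direct correspondence between predictors: for each attainable $(\mathrm{FPR},\mathrm{TPR})$ point realized on one side, I would exhibit a predictor on the other side realizing the \emph{same} point while satisfying the relevant constraints. First I would record the identity that drives everything. For any in-processing predictor $\Yin\sim\mathrm{Bernoulli}(f(A,X))$, conditioning on $\Yopt$ and then on $X$ (law of total probability) gives
\begin{equation*}
    P_{\Yin|AY}(1\mid a,y)=\sum_{\yhat\in\Ycal}\beta^{(\yhat)}_{ay}\,P_{\Yopt|AY}(\yhat\mid a,y),
\end{equation*}
with $\beta^{(\yhat)}_{ay}$ exactly the quantity in the statement. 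This is precisely the post-processing factorization of Section~\ref{sec:post_processing_step}, except that the effective coefficients $\beta^{(\yhat)}_{ay}$ may a priori depend on $y$; the ``pseudo'' constraints are nothing but the requirement that this $y$-dependence disappears.

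For $\Omega(\Yin^*)\subseteq\Omega(\Ypost)$ I would take any $f$ satisfying the Equalized Odds constraints on $\Yin$ together with the pseudo-constraints, realizing a point $p$. The pseudo-constraints force $\beta^{(\yhat)}_{a0}=\beta^{(\yhat)}_{a1}=:\beta^{(\yhat)}_a$, so the identity above collapses to the genuine post-processing form; since each $\beta^{(\yhat)}_{ay}$ is a convex average of values $f(a,x)\in[0,1]$, the $\beta^{(\yhat)}_a$ are legitimate post-processing parameters. The $\Ypost$ they define has the same positive rates as $\Yin$, hence realizes $p$ and inherits Equalized Odds, so $p\in\Omega(\Ypost)$.

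For $\Omega(\Ypost)\subseteq\Omega(\Yin^*)$ I would start from post-processing parameters $\beta^{(\yhat)}_a\in[0,1]$ whose $\Ypost$ satisfies Equalized Odds and realizes a point $p$, and build a matching $f$. When $\Yopt=g(A,X)$ is a deterministic function, the right choice is $f(a,x):=\beta^{(g(a,x))}_a$: it is $[0,1]$-valued, the identity (now summed over the level sets $\{x:g(a,x)=\yhat\}$) shows $\Yin$ has the same positive rates as $\Ypost$ --- so it realizes $p$ and satisfies Equalized Odds --- and, because $f$ is constant and equal to $\beta^{(\yhat)}_a$ on each level set $\{\Yopt=\yhat\}$, one gets $\beta^{(\yhat)}_{ay}=\beta^{(\yhat)}_a$ for all $y$, i.e., the pseudo-constraints hold; thus $p\in\Omega(\Yin^*)$. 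Combining the two inclusions yields $\Omega(\Ypost)=\Omega(\Yin^*)$.

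The hard part will be this last step when $\Yopt$ is genuinely stochastic --- the general setting considered here --- because then there is no deterministic label $g(a,x)$ to key $f$ on, and the naive marginalized choice $f(a,x)=\beta^{(0)}_a+(\beta^{(1)}_a-\beta^{(0)}_a)P(\Yopt=1\mid A=a,X=x)$ reproduces the positive rates of $\Ypost$ but in general breaks the pseudo-constraints. I would resolve this by enlarging the feature vector with the external randomization seed $U\sim\mathrm{Uniform}[0,1]$ that drives $\Yopt$, so that $\Yopt$ becomes a deterministic function of $(A,X,U)$ with $U\indep(A,X,Y)$; running the deterministic construction on this augmented space produces the required $f$, and independence of $U$ from $(A,X,Y)$ keeps the positive rates --- hence the realized ROC point --- unchanged. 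Two routine technical points remain: the conditionals $P(X=x\mid A=a,Y=y,\Yopt=\yhat)$ must be well defined, which fails only on the degenerate events $P_{\Yopt|AY}(\yhat\mid a,y)=0$ and can be handled by a convention or limiting argument; and for continuous $A$ or $X$ the sums over $\Xcal$ become integrals throughout without affecting any step.
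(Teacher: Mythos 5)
Your central identity---factorizing $P_{\Yin|AY}(1\mid a,y)$ through $\Yopt$ and then $X$, and reading the ``pseudo'' constraints as the statement that the coefficients $\beta^{(\yhat)}_{ay}$ lose their $y$-dependence, i.e.\ that $\Yin \indep Y \mid A, \Yopt$, which is exactly the structural constraint $\Ypost$ inherits from being a randomized function of $(A,\Yopt)$---is precisely the paper's argument, and your inclusion $\Omega(\Yin^*)\subseteq\Omega(\Ypost)$ and your deterministic-$\Yopt$ construction $f(a,x)=\beta^{(g(a,x))}_a$ are both sound. Where you diverge is that the paper stops at the level of constraint sets: having identified the two sets of constraints, it declares the regions they cut out in ROC space identical, and never attempts the predictor-level correspondence you pursue for the inclusion $\Omega(\Ypost)\subseteq\Omega(\Yin^*)$. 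The obstruction you isolate when $\Yopt$ is genuinely stochastic is therefore real, but it is one the paper silently bypasses rather than resolves.

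Your proposed fix for that case, however, does not go through. If you key $f$ on the randomization seed $U$ of $\Yopt$, the Bernoulli noise of the resulting $\Yin$ is correlated with $\Yopt$ given $(A,X)$, which destroys the hypothesis $\Yin\indep\Yopt\mid A,X$ under which $\sum_{x}P(\Yin=1\mid a,x)\,P(x\mid a,y,\yhat)$ equals $P(\Yin=1\mid a,y,\yhat)$---the very identity your whole argument rests on. The quantity appearing in the theorem's condition (ii) is computed from the marginal $P(\Yin=1\mid A=a,X=x)=\E_U[f(a,x,U)]=\beta^{(0)}_aP(\Yopt=0\mid a,x)+\beta^{(1)}_aP(\Yopt=1\mid a,x)$, which is exactly the naive marginalized choice you already showed violates the pseudo-constraints. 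Treating $(X,U)$ as an enlarged feature vector instead redefines $\Xcal$, and with it both the hypothesis class behind $\Omega(\Yin)$ and the meaning of $\beta^{(\yhat)}_{ay}$, so you would be proving equality with a different $\Omega(\Yin^*)$ than the one in the statement. To close the argument you must either restrict to deterministic $\Yopt$ (where your construction is complete and matches the standard post-processing pipeline), or adopt the paper's weaker reading in which $\Omega(\cdot)$ denotes the region of $(\mathrm{FPR},\mathrm{TPR})$ pairs cut out by the constraints $\gamma_{ay}=\beta^{(0)}_a\,\gamma_{ay}(1-\Yopt)+\beta^{(1)}_a\,\gamma_{ay}(\Yopt)$ with $y$-independent coefficients plus Equalized Odds---under which the coincidence of the two constraint sets already finishes the proof and no predictor needs to be exhibited.
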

\begin{proof}
    Since the post-processing predictor $\Ypost$ is derived by optimizing over parameters or functions (of $A$) $\beta_{a}^{(u)}$.
    Therefore, considering the fact that $P_{\Ypost|AY}(1|a, y) = \gamma_{ay}(\Ypost)$, $P_{\Yopt|AY}(1|a, y) = \gamma_{ay}(\Yopt)$, we have the relation between $\gamma_{ay}(\Ypost)$ and $\gamma_{ay}(\Yopt)$:
    \begin{equation*}
        \begin{split}
            \gamma_{ay}(\Ypost)
            &= \beta^{(0)}_a ~ \gamma_{ay}(1 - \Yopt)
            + \beta^{(1)}_a ~ \gamma_{ay}(\Yopt)
            , \\
            \beta^{(0)}_a &= P(\Ypost = 1 \mid A = a, \Yopt = 0), \\
            \beta^{(1)}_a &= P(\Ypost = 1 \mid A = a, \Yopt = 1).
        \end{split}
    \end{equation*}

    Similarly, consider the relation between positive rates of $\Yin$ and those of $\Yopt$, i.e., $P_{\Yin|AY}(1|a, y)$ and $P_{\Yopt|AY}(1|a, y)$, by factorizing $P_{\Yin|AY}(1|a, y)$ over $X$ and $\Yopt$:
    \begin{equation}
        \begin{split}
            P_{\Yin|AY}
            (1|a, y)
            = \sum_{\yhat \in \Ycal}
            P_{\Yopt|AY}(\yhat|a, y)
            \Big[\sum_{x \in \Xcal}
                P_{\Yin|AX}(1|a, x)
                \cdot P_{X|AY\Yopt}(x | a, y, \yhat)
            \Big].
        \end{split}
    \end{equation}
    Therefore, we have the relation between $\gamma_{ay}(\Yin)$ and $\gamma_{ay}(\Yopt)$:
    \begin{equation}\label{equ:gamma_in_vs_opt}
        \begin{split}
            &\gamma_{ay}(\Yin)
            = \beta^{(0)}_{ay} ~ \gamma_{ay}(1 - \Yopt)
            + \beta^{(1)}_{ay} ~ \gamma_{ay}(\Yopt), \\
            &\beta^{(0)}_{ay} =
            \sum_{x \in \Xcal}
            P_{\Yin|AX}(1 \mid a, x)
            P_{X|AY\Yopt}(x \mid a, y, 0), \\
            &\beta^{(1)}_{ay} =
            \sum_{x \in \Xcal}
            P_{\Yin|AX}(1 \mid a, x)
            P_{X|AY\Yopt}(x \mid a, y, 1).
        \end{split}
    \end{equation}
    If there is more than one variable in $X$ in Equation \ref{equ:gamma_in_vs_opt}, one can expand the summation if needed; if some variables are continuous, one may also substitute the summation with integration accordingly.


    From Equation \ref{equ:gamma_in_vs_opt}, $\beta^{(0)}_{ay}$ and $\beta^{(1)}_{ay}$ depend on the value of $Y$:
    \begin{equation}\label{equ:gamma_in_vs_opt_simple}
        \begin{split}
            \beta^{(0)}_{ay} &= P(\Yin = 1 \mid A = a, Y = y, \Yopt = 1), \\
            \beta^{(0)}_{ay} &= P(\Yin = 1 \mid A = a, Y = y, \Yopt = 0).
        \end{split}
    \end{equation}
    Apart from Equalized Odds constraints (which are shared by $\Yin$ and $\Ypost$), when enforcing additional ``pseudo'' constraints $\beta^{(0)}_{a0} = \beta^{(0)}_{a1}$ and $\beta^{(1)}_{a0} = \beta^{(1)}_{a1}$, conditional independence $\Yin \indep Y \mid A, \Yopt$ is enforced, making $\beta^{(0)}_{ay}$ and $\beta^{(0)}_{ay}$ no longer depend on $Y$.
    This is exactly the inherent constraint $\Ypost$ satisfies.
    Therefore the stated equivalence between ROC feasible areas $\Omega(\Ypost)$ (specified by the constraints enforced on $\Ypost$) and $\Omega(\Yin^*)$ (specified by the constraints enforced on $\Yin$ together with the additional ``pseudo'' constraints) hold true.
\end{proof}

\section{Further Information about Experiments}\label{sec:exps_info_appendix}
In this section, we first introduce the real-world data sets that are used in the experiments (including regression and classification tasks).
Then, we present the technical specifications of the experiments.
\subsection{Description of the Data Sets}
\begin{enumerate}[label=(\arabic*)]
    \item \textbf{Communities and Crime}:
          The UCI Communities and Crime data set contains 122 features for 1994 records.\footnote{\url{https://archive.ics.uci.edu/ml/datasets/communities+and+crime}}
          The regression task is to predict the number of violent crimes per population for US cities given various census information of the corresponding communities.
          The data is preprocessed following \cite{mary2019fairness}, and the (continuous) protected feature is the ratio of African-American people in the population.
    \item \textbf{Adult} \citep{kohavi1996scaling}:
          The UCI Adult data set contains 14 features for 45,222 individuals (32,561 samples for training and 12,661 samples for testing).\footnote{\url{http://archive.ics.uci.edu/ml/datasets/Adult}}
          The census information includes gender, marital status, education, capital gain, etc.
          The classification task is to predict whether a person's annual income exceeds 50,000 USD.
          We use the provided testing set for evaluations and present the result with gender and race (consider white and black people only) set as the protected feature respectively.
    \item \textbf{Bank} \citep{moro2014data}:
          The UCI Bank Marketing data set is related with marketing campaigns of a banking institution, containing 16 features of 45,211 individuals.\footnote{\url{https://archive.ics.uci.edu/ml/datasets/bank+marketing}}
          The assigned classification task is to predict if a client will subscribe (yes/no) to a term deposit.
          The original data set is very unbalanced with only 4,667 positives out of 45,211 samples.
          Therefore, we combine ``yes'' points with randomly subsampled ``no'' points and perform experiments on the down-sampled data set with 10,000 data points.
          The protected feature is the marital status of the client.
    \item \textbf{COMPAS} \citep{propublica2016compas}:
          The COMPAS data set contains records of over 11,000 defendants from Broward County, Florida, whose risk (of recidivism) was assessed using the COMPAS tool.
          Each record contains multiple features of the defendant, including demographic information, prior convictions, degree of charge, and the ground truth for recidivism within two years.
          Following \cite{zafar2017fairness} and \cite{nabi2018fair}, we limit our attention to the subset consisting of African-Americans and Caucasians defendants.
          The features we use include age, gender, race, number of priors, and degree of charges.
          The task is to predict the recidivism of the defendant and we choose race as the protected feature.
    \item \textbf{German Credit}:
          The UCI German Credit data contains 20 features (7 numerical, 13 categorical) describing the social and economical status of 1,000 customers.\footnote{\url{https://archive.ics.uci.edu/ml/datasets/statlog+(german+credit+data)}}
          The prediction task is to classify people as good or bad credit risks.
          We use the provided numerical version of the data and choose gender as the protected feature.
\end{enumerate}

\subsection{Experimental Setup}
\subsubsection{Regression Tasks}\label{sec:regression_setup_appendix}
The regression experiments consist of two base models, namely, a linear regression model and a neural network model, each of which appears in the form of both deterministic and stochastic prediction.
For the neural network model, following \cite{mary2019fairness} we use a simple regressor for the experiments:
two hidden layers (with the same number of neurons ranging from 30 to 100 for each layer, depending on the size of the data set) with SELUs nonlinearity \citep{klambauer2017self}.
The network is optimized by minimizing the MSE loss combined with the $\lambda$-weighted fairness penalty term, using the Adam optimizer \citep{kingma2015adam}.
We use the kernel measure of conditional dependence (KMCD) \citep{fukumizu2007kernel} as the fairness penalty term (for stochastic prediction), and the weight $\lambda$ ranges from $\{ 2^{-2}, 2^{-1}, \ldots, 2^{14} \}$.
The range of the absolute value of $\lambda$ may differ for different experiment setups since the relative value difference between the MSE loss and the KMCD measure.
The learning rate is chosen from $\{ 10^{-4}, 10^{-5}, 10^{-6} \}$ (smaller learning rate is preferable especially for larger values of $\lambda$, i.e., more emphasis on the fairness penalty), and the batch size is chosen from $\{64, 128, 256\}$.
As we have stated in Section \ref{sec:reg_stochastic}, for stochastic prediction, a random sample from the standard Gaussian distribution is concatenated to each batch of data, and the input dimension of the model changes accordingly.
The random sample to concatenate is redrawn each time the data point is seen by the model.

\begin{figure*}[t]
    \centering
    \includegraphics[width=.95\textwidth]{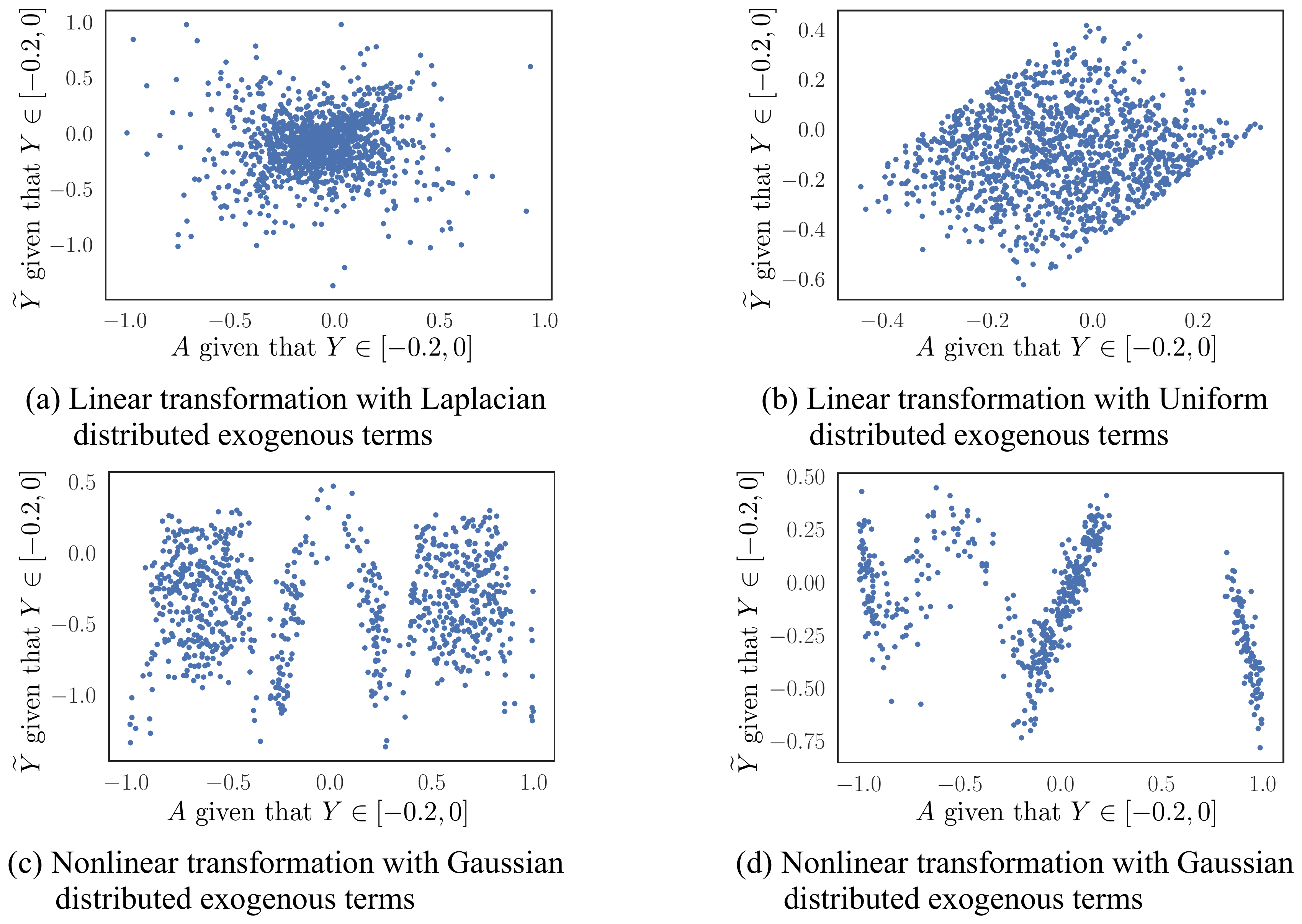}
    \caption{Illustration of unattainable Equalized Odds for regression with deterministic prediction.
        Panel (a)-(b): Linear regression on the data generated with linear transformations and non-Gaussian distributed exogenous terms (following Laplace, Uniform distribution respectively).
        Panel (c)-(d): Nonlinear regression with a neural net regressor \citep{mary2019fairness} on the data generated with nonlinear transformations and Gaussian exogenous terms.
        We can observe obvious dependencies between $\Ytilde$ and $A$ on a small interval of $Y$.
        This indicates the conditional dependency between $\Ytilde$ and $A$ given $Y$, i.e., the Equalized Odds is not achieved.
        }
    \label{fig:demo_brief}
\end{figure*}

In our experiments on the simulated data, for linear cases, the data is generated as stated in Equation \ref{equ:continuous_exp_appendix}, with non-Gaussian distributed exogenous terms ($E_X$, $E_H$, and $E_Y$).
Figure \ref{fig:demo_brief}(a) and (b) correspond to different distributions for the noise terms, specifically, the Laplace distribution and uniform distribution, respectively.
We use linear regression with the \textit{Equalized Correlations} constraint \citep{woodworth2017learning}, a weaker notion of Equalized Odds for linearly correlated data, as the predictor.
For nonlinear cases, the data is generated using a similar scheme but with nonlinear transformations involved
and Gaussian distributed exogenous terms.
We use a neural network regressor with an Equalized Odds regularization term \citep{mary2019fairness} to perform nonlinear fair regression.
As we can see in Figure \ref{fig:demo_brief}(c) and (d), for nonlinear regression tasks, Equalized Odds may not be attained even if every exogenous term is Gaussian distributed.

In our experiments on the real-world data, we perform 20 random splits of \textit{Communities and Crime} data into training ($80\%$) and testing ($20\%$) sets, and present the testing performance for deterministic and stochastic prediction models (for the linear regressor as well as the neural network regressor) in Figure \ref{fig:continuous_exp_compound}.
In situations where the protected feature is not available when testing, it is desirable to exclude the protected feature from the attributes when performing prediction.
To this end, during the training process, the input to the regression model does not include the protected feature, and the protected feature is only used to compute the fairness penalty.
The related result on the \textit{Communities and Crime} data set is summarized in Figure \ref{fig:continuous_exp_noA_compound}.

\subsubsection{Classification Tasks}
In Figure \ref{fig:result_eodds_grid}, we compare the performance under Equalized Odds of multiple methods proposed in the literature.
\cite{hardt2016equality} propose a post-processing approach where the prediction is randomized to minimize violation of fairness;
\cite{zafar2017fairness} use a covariance proxy measure as the regularization term when optimizing classification accuracy;
\cite{agarwal2018reductions} take the reductions approach and reduce fair classification into solving a sequence of cost-sensitive classification problems;
\cite{rezaei2020fairness} minimize the worst-case log loss using an approximated regularization term;
\cite{baharlouei2020renyi} propose to use R\'enyi correlation as the regularization term to account for nonlinear dependence between variables.
To measure the violation of the fairness criterion, we use Equalized Odds (EOdds) violation, defined as $\max_{y \in \Ycal ~ a,a' \in \Acal} \big\lvert P_{\Ytilde|AY}(1|a, y) - P_{\Ytilde|AY}(1|a', y) \big\rvert$.
Following \cite{agarwal2018reductions}, we pick $0.01$ as the default violation bound that the EOdds violation does not exceed (if practically achievable for the method) during training.
For each method we plot the testing accuracy versus the violation of Equalized Odds.

\begin{figure*}[t]
    \centering
    \includegraphics[width=\textwidth]{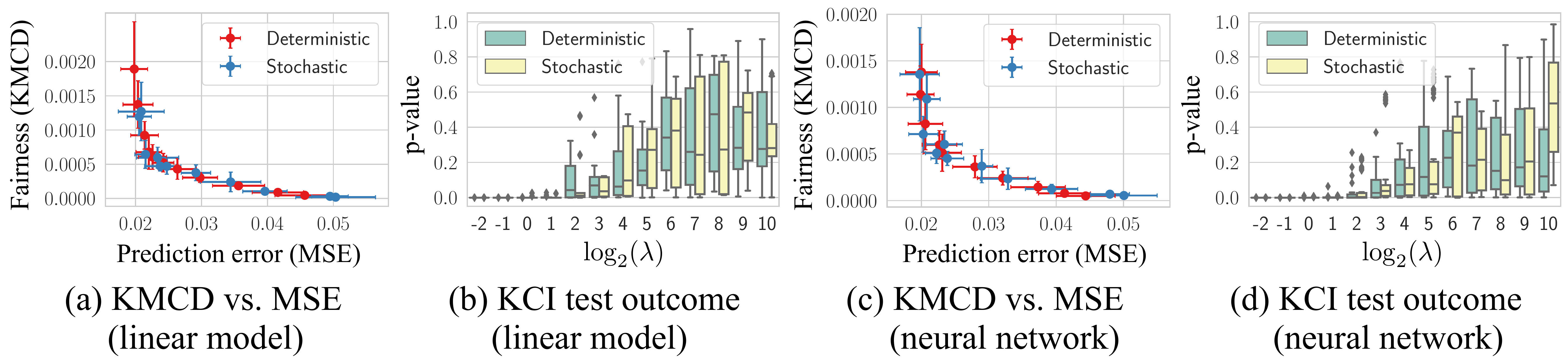}
    \caption{Results for regression with deterministic and stochastic prediction functions on the \textit{Communities and Crime} data set (the protected feature is \textit{not} included as an attribute).
    The base model could be a linear regressor or a neural network regressor, and the stochastic prediction is achieved by introducing an independent noise input sampled from a standard Gaussian distribution.}
    \label{fig:continuous_exp_noA_compound}
\end{figure*}

\section{Additional Derivation and Illustrative Examples}\label{sec:illustration_exp_appendix}
\subsection{Derivation of the Connection between Theorem \ref{thm:continuous_condition} and Theorem \ref{thm:unattainability}}\label{sec:from_example_to_theory}
Theorem \ref{thm:continuous_condition} presents the condition under which Equalized Odds can possibly hold true for regression with deterministic prediction.
This condition specifies a rather strong constraint on the relation between the conditional probability density $p_{X|AY}(x \mid a, y)$ and the deterministic function $f: \Acal \times \Xcal \rightarrow \Ycal$.
In order to further illustrate this constraint, let us consider a special case where variables are linear correlated with Gaussian exogenous terms (i.e., $E_X$ and $E = E_Y + dE_H$ in Equation \ref{equ:continuous_exp_appendix} are Gaussian) and $\Yhat = f(A, X)$ is a linear function of $(A, X)$ with linear coefficients $\alpha$ and non-zero $\beta$ (i.e., $\Yhat = \alpha A + \beta X$).
Here, the conditional distribution of $X$ given $A$ and that of $Y$ given $A$ are both Gaussian.
Then for any given value of $(\Yhat, A)$ we have $X = (\Yhat - \alpha A)/\beta$.
Therefore $\delta\big(\yhat - f(a, x)\big)$ does not equal to $0$ only on the singleton set $\{x \mid f(a, x) = \yhat\} = \{ (\yhat - \alpha a)/\beta \}$.
We have
\begin{equation*}
    Q(a, y, \yhat) = p_{X|AY}(\frac{\yhat - \alpha a}{\beta} \mid a, y).
\end{equation*}
Then the condition in Theorem \ref{thm:continuous_condition} requires that
\begin{equation}\label{equ:exp_illu}
    p_{X|AY}(\frac{\yhat - \alpha a}{\beta} \mid a, y) = p_{X|AY}(\frac{\yhat - \alpha a'}{\beta} \mid a', y).
\end{equation}
Since the conditional distribution of $X$ given $A = a$ and that of $Y$ given $A = a$ are both Gaussian ($\Ncal(q a, \sigma_X^2)$, and $\Ncal \big((qc + bd)a, c^2 \sigma_X^2 + \sigma_E^2 \big)$ respectively), we have
$p_{X|AY}(x \mid a, y) =
    \frac{1}{\sqrt{2 \pi \sigma_{ay}^2}}
    \exp \{ \frac{(x - \mu_{ay})^2}{2\sigma_{ay}^2} \}$,
where $\mu_{ay} = qa + \frac{c \sigma_X^2}{c^2 \sigma_X^2 + \sigma_E^2} \big( y - (qc + bd)a \big), ~
\sigma_{ay}^2 = (1 - \frac{c^2 \sigma_X^2}{c^2 \sigma_X^2 + \sigma_E^2})\sigma_X^2$.

Then the Equation \ref{equ:exp_illu} hold true if and only if the absolute distance between $\frac{\yhat - \alpha a}{\beta}$ and $\mu_{ay}$ is not a function of $a$ (for any fixed $y$ and $\yhat$), which yields the following relation:
\begin{equation*}
    \frac{\alpha}{\beta} = \frac{bdc \cdot \sigma_{E_X}^2 - q \cdot  \sigma_{E}^2}{c^2 \cdot  \sigma_{E_X}^2 + \sigma_{E}^2},
\end{equation*}
which is the exact relation between $\alpha$ and $\beta$ derived from a different perspective in Corollary \ref{thm:gaussian_coefficients_appendix}, whose original proof builds up on top of Theorem \ref{thm:unattainability}.

As we can see in this example, in order to achieve Equalized Odds, the coefficients of linear function $f(A, X) = \alpha A + \beta X$ have to satisfy a very specific relation.
This indicates the general restrictiveness of the aforementioned constraint on the relation between $p_{X|AY}(x \mid a, y)$ and the deterministic function $f$, i.e., Equalized Odds cannot hold true.

\subsection{Deterministic Classification Examples}
Theorem \ref{thm:discrete_condition} specifies the conditions for Equalized Odds to hold true for deterministic classification.
Here we present concrete examples where those conditions can or cannot be satisfied.
For the purpose of simplifying the notation, let us consider cases when $A$, $X$, and $Y$ are binary, and the joint probability of $(A, Y)$ is specified as following:
\begin{equation}\label{equ:deterministic_illustration_exp0}
    \begin{split}
        &P(A = 0, Y = 0) = 0.2, \\
        &P(A = 0, Y = 1) = 0.4, \\
        &P(A = 1, Y = 0) = 0.3, \\
        &P(A = 1, Y = 1) = 0.1.
    \end{split}
\end{equation}
Then in the special case when $X \indep A \mid Y$, Equalized Odds can hold true if $f$ is a function of only $X$.
For example, if $f = \mathbbm{1}\{ X = 1 \}$, one can quickly verify that $P(\Ytilde = 1 \mid A = a, Y = 0) = P(X = 1 \mid Y = 0)$ and that $P(\Ytilde = 1 \mid A = a, Y = 1) = P(X = 1 \mid Y = 1)$ (for $a \in \{0, 1\}$), i.e., Equalized Odds holds true.

If $X \nindep A \mid Y$, let us compare the following two cases:
    \begin{equation}\label{equ:discriministic_expL}
    \begin{split}
        &P(X = 1 \mid A = 0, Y = 0) = 0.3, \\
        &P(X = 1 \mid A = 1, Y = 0) = 0.7, \\
        &P(X = 1 \mid A = 0, Y = 1) = 0.8, \\
        &P(X = 1 \mid A = 1, Y = 1) = 0.2;
    \end{split}
\end{equation}
\begin{equation}\label{equ:discriministic_expR}
    \begin{split}
        &P(X = 1 \mid A = 0, Y = 0) = 0.4, \\
        &P(X = 1 \mid A = 1, Y = 0) = 0.7, \\
        &P(X = 1 \mid A = 0, Y = 1) = 0.6, \\
        &P(X = 1 \mid A = 1, Y = 1) = 0.2.
    \end{split}
\end{equation}

Here, the joint distribution of $(A, Y)$ is specified the same as in Equation \ref{equ:deterministic_illustration_exp0}.
In Equation \ref{equ:discriministic_expL}, the conditional probability of $X$ is crafted so that $P(X = 0 \mid A = 0, Y = 0) = P(X = 1 \mid A = 1, Y = 0)$ and $P(X = 0 \mid A = 0, Y = 1) = P(X = 1 \mid A = 1, Y = 1)$.
Then if we choose $f = \mathbbm{1}\{A = X\}$ (or flip the prediction, let $f = 1 - \mathbbm{1}\{A = X\}$) to exploit this property, we can have a predictor that satisfies Equalized Odds.
One can quickly verify this: $P(\Ytilde = 1 \mid A = 0, Y = 0) = P(X = 0 \mid A = 0, Y = 0) = 0.6 = P(X = 1 \mid A = 1, Y = 0) = P(\Ytilde = 1 \mid A = 1, Y = 0)$, and $P(\Ytilde = 1 \mid A = 0, Y = 1) = P(X = 0 \mid A = 0, Y = 1) = 0.2 = P(X = 1 \mid A = 1, Y = 1) = P(\Ytilde = 1 \mid A = 1, Y = 1)$.
However, there is no obvious reason to believe that the conditional probability of $X$ given $A$ and $Y$ should satisfy such ``crafted'' property in general cases.
In the case shown in Equation \ref{equ:discriministic_expR}, one cannot find an $f$ that satisfies conditions (i) and (ii) in Theorem \ref{thm:discrete_condition}, and therefore in this case there is no deterministic prediction function that satisfies Equalized Odds.

\subsection{Benefit of Introducing Stochastic Prediction under Almost-the-Same Performance}
To illustrate the benefit of utilizing stochastic predictors, we compare deterministic and stochastic prediction schemes in terms of their quantitative violations of fairness, given almost the same prediction performance, in several specific classification and regression scenarios.

\subsubsection{A Binary Classification Example}
We generate simulated data according to the joint distribution specified by Equation \ref{equ:deterministic_illustration_exp0} and Equation \ref{equ:discriministic_expR} with a sample size of 500, and use logistic regression as our base model.
We utilize the deterministic predictor proposed by \citet{baharlouei2020renyi} and the stochastic predictor proposed by \citet{agarwal2018reductions}, and numerically compare the fairness violation (here is the maximum difference between True/False Positive Rates across groups) given almost the same accuracy.
We summarize the results of 10 repetitive experiments in Table \ref{tbl:binary_classification}.

\subsubsection{A Linear Non-Gaussian Regression Example}
We generated the simulated data according to Equation \ref{equ:continuous_exp_appendix}, where $q = 0.7$, $b = 0.6$, $c = 0.9$, and $d = 0.6$.
We use the same set of nonlinear deterministic and stochastic predictors as we specified in Section \ref{sec:regression_setup_appendix}.
Similar to the previous example, we summarize the results for 10 repetitive experiments.
For the purpose of demonstrating the influence from characteristics of the data itself, we present simulation results with various exogenous terms that are generated from different non-Gaussian distributions in Table \ref{tbl:regression_uniform_exp} (Uniform distribution) and Table \ref{tbl:regression_laplace_exp} (Laplace distribution).
Because the violation of fairness for regression is much easier observed via hypothesis test results, we present the frequency of p-value (for KCI test) exceeding 0.05 threshold (the more often, the more probable the predictor is fair).
We can see that given a similar MSE level, the stochastic regressors are more probable to be fair compared to the deterministic ones.

\begin{table}[t]
    \caption{
        Illustration of the benefit of introducing stochastic prediction in a binary classification example.
        Here we tune the hyperparameter such that the fairness violation is as small as possible for the deterministic predictor.
        As we can see, given a similar accuracy level for predictors optimized with fairness consideration (Equalized Odds), the stochastic classifier has a much smaller empirical violation of fairness compared to its deterministic counterpart.
    }
    \label{tbl:binary_classification}
    \centering
    \begin{tabular}{cccc}
        \toprule
         & Base model & Deterministic classifier & Stochastic classifier \\
        \midrule
        Accuracy & $0.71 \pm 0.01$ & $0.55 \pm 0.02$ & $0.54 \pm 0.03$ \\
        (Empirical) Fairness violation & $1.0 \pm 0.00$ & $0.34 \pm 0.04$ & $0.03 \pm 0.01$ \\
        \bottomrule
    \end{tabular}
\end{table}
\begin{table}[t]
    \caption{
        Illustration of the benefit of introducing stochastic prediction in regression.
        Here the exogenous terms in Equation \ref{equ:continuous_exp_appendix} satisfy $E_X \sim U[-0.2, 0.2]$, $E_H \sim U[-0.2, 0.2]$, and $E_Y \sim U[-0.1, 0.1]$.
    }
    \label{tbl:regression_uniform_exp}
    \centering
    \begin{tabular}{cccc}
        \toprule
         & Base model & Deterministic regressor & Stochastic regressor \\
        \midrule
        MSE & $0.003 \pm 0.001$ & $0.059 \pm 0.004$ & $0.058 \pm 0.007$ \\
        How often: p-value $> 0.05$ & $0$ out of $10$ & $1$ out of $10$ & $8$ out of $10$ \\
        \bottomrule
    \end{tabular}
\end{table}
\begin{table}[t]
    \caption{
        Illustration of the benefit of introducing stochastic prediction in regression.
        Here the exogenous terms in Equation \ref{equ:continuous_exp_appendix} satisfy $E_X \sim \mathrm{Laplace}(0, 0.4)$, $E_H \sim \mathrm{Laplace}(0, 0.4)$, and $E_Y \sim \mathrm{Laplace}(0, 0.2)$.
    }
    \label{tbl:regression_laplace_exp}
    \centering
    \begin{tabular}{cccc}
        \toprule
         & Base model & Deterministic regressor & Stochastic regressor \\
        \midrule
        MSE & $0.004 \pm 0.001$ & $0.061 \pm 0.003$ & $0.061 \pm 0.007$ \\
        How often: p-value $> 0.05$ & $0$ out of $10$ & $0$ out of $10$ & $9$ out of $10$ \\
        \bottomrule
    \end{tabular}
\end{table}

\section{An Example of Unwanted Usage of Stochastic Prediction}\label{sec:social_appendix}
For classification, while randomization can ensure group level of fairness, there is still some inherent shortcoming of the criterion that we should pay attention to.
For example, in the FICO case study in \cite{hardt2016equality}, for a specific client from certain demographic group, the decision of approve/deny the loan actually comes in two folds:
if his/her credit score is above (below) the upper (lower) threshold, the bank approve (deny) the application for sure;
if the score falls in the interval between two thresholds, the bank would flip a coin to make a decision.
Then we can imagine the following situation when a client whose credit score falls within the interval between the upper and lower thresholds goes to a bank to apply a loan.
He/she can ask (if conditions permit) the bank to run the model multiple times until the decision is approval.
This would make the randomization that was built into the system for the sake of fairness no longer effective.
The system in fact only has one fixed threshold (i.e., the original lower threshold), which is in essence a deterministic predictor (therefore in general, does not satisfy Equalized Odds).

\end{document}